\newcommand\relphantom[1]{\mathrel{\phantom{#1}}}
\newcommand*{\indep}{%
  \mathbin{%
    \mathpalette{\@indep}{}%
  }%
}
\newcommand*{\nindep}{%
  \mathbin{
    \mathpalette{\@indep}{/}%
  }%
}
\newcommand*{\@indep}[2]{%
  \sbox0{$#1\perp\m@th$}
  \sbox2{$#1=$}
  \sbox4{$#1\vcenter{}$}
  \rlap{\copy0}
  \dimen@=\dimexpr\ht2-\ht4-.2pt\relax
  \kern\dimen@
  \ifx\\#2\\%
  \else
    \hbox to \wd2{\hss$#1#2\m@th$\hss}%
    \kern-\wd2 %
  \fi
  \kern\dimen@
  \copy0 
}
\newtheorem{defi}{Definition}
\newtheorem{thm}{Theorem}
\newtheorem{lemma}{Lemma}
\newtheorem{cor}{Corollary}
\newtheorem{prop}{Proposition}
\newcommand{\cI}{{\cal I}}
\newcommand{\cF}{{\cal F}}
\title{Recovering Causal Structures from Low-Order Conditional Independencies\thanks{Extended version of paper accepted to the Proceedings of the 34th AAAI Conference on Artificial Intelligence (AAAI-2020).}}
 \author{Marcel Wien\"{o}bst \ and  Maciej Li\'{s}kiewicz \\
 Institute of Theoretical Computer Science, University of L\"{u}beck, Germany\\
 \{wienoebst,liskiewi\}@tcs.uni-luebeck.de
 }
\begin{document}

\maketitle\vspace*{-11mm}

\begin{abstract}
One of the common obstacles for learning causal models from 
data is that high-order conditional independence (CI) relationships 
between random variables are difficult to estimate. 
Since CI tests with conditioning sets of low order can be performed 
accurately even for a small number of observations, a reasonable 
approach to determine casual structures is to base merely on the 
low-order CIs. Recent research has confirmed that, e.g. in the case of 
sparse true causal models, structures learned even from zero- and 
first-order conditional independencies yield good approximations 
of the models. However, a challenging task here is to provide 
methods that faithfully explain a given set of low-order CIs.
In this paper, we propose an algorithm which, for a given set 
of conditional independencies of order less or equal to $k$, 
where $k$ is a small fixed number, computes a faithful graphical 
representation of the given set. Our results complete and generalize 
the previous work on learning from pairwise marginal independencies. 
Moreover, they enable to improve upon the 0-1 graph model which, 
e.g. is heavily used in the estimation of genome networks. 

\end{abstract}

\section{Introduction}
Graphical models, as e.g. directed acyclic graphs (DAGs), 
allow an intuitive and mathematically sound approach to analyze 
complex causal mechanisms \cite{lauritzen1996graphical,Pearl2009}.
Generally, they encode the causal links between variables 
of interests based on conditional independence (CI) 
statements between the variables \cite{Spirtes2000}. Hence, 
the accuracy of estimate of the CIs plays a key role in 
learning graphical models and consequently in causal inference 
from observational data. 

CI testing is a challenging task, particularly in the presence 
of high-order independencies, when the number of variables 
far exceeds the number of observations \cite{Wille06}. 
In such cases, estimations of CIs are usually inaccurate, 
potentially resulting in incorrect links between variables 
in the graphical model.
On the other hand, CI tests with conditioning sets of 
low dimension can be performed accurately even for 
relatively small observed data sets.  
Thus, a natural task is to approximate the true 
causal model using merely low-order CIs.
Recent research in inferring genetic networks
has confirmed the effectiveness of this approach when basing only on zero- and 
first-order independencies \cite{Willeetal01,MagKim01}.

\begin{figure}
  \centering
  \begin{tikzpicture}[xscale = .9]
    \node (a) at (0,0) {$a$};
    \node (b) at (1,0) {$b$}
    edge [<-] (a);
    \node (c) at (0.5, 0.7) {$c$}
    edge [<-] (a)
    edge [->] (b);
    \node (d) at (0.5, -0.7) {$d$}
    edge [<-] (a)
    edge [->] (b);
  \end{tikzpicture}
  \begin{tikzpicture}[xscale = .9]
    \node (a) at (0,0) {$a$};
    \node (b) at (1,0) {$b$}
    edge [<-] (a);
    \node (c) at (0.5, 0.7) {$c$}
    edge [<-] (a)
    edge [->] (b);
    \node (d) at (0.5, -0.7) {$d$}
    edge [->] (a)
    edge [->] (b);
  \end{tikzpicture}
  \begin{tikzpicture}[xscale = .9]
    \node (a) at (0,0) {$a$};
    \node (b) at (1,0) {$b$}
    edge [<-] (a);
    \node (c) at (0.5, 0.7) {$c$}
    edge [->] (a)
    edge [->] (b);
    \node (d) at (0.5, -0.7) {$d$}
    edge [<-] (a)
    edge [->] (b);
  \end{tikzpicture}
  \begin{tikzpicture}[xscale = .9]
    \node (a) at (0,0) {$a$};
    \node (b) at (1,0) {$b$};
    \node (c) at (0.5, 0.7) {$c$}
    edge [<-] (a)
    edge [->] (b);
    \node (d) at (0.5, -0.7) {$d$}
    edge [<-] (a)
    edge [->] (b);
  \end{tikzpicture}
  \begin{tikzpicture}[xscale = .9]
    \node (a) at (0,0) {$a$};
    \node (b) at (1,0) {$b$};
    \node (c) at (0.5, 0.7) {$c$}
    edge [<-] (a)
    edge [->] (b);
    \node (d) at (0.5, -0.7) {$d$}
    edge [->] (a)
    edge [->] (b);
  \end{tikzpicture}
  \begin{tikzpicture}[xscale = .9]
    \node (a) at (0,0) {$a$};
    \node (b) at (1,0) {$b$};
    \node (c) at (0.5, 0.7) {$c$}
    edge [->] (a)
    edge [->] (b);
    \node (d) at (0.5, -0.7) {$d$}
    edge [<-] (a)
    edge [->] (b);
  \end{tikzpicture}
  \caption{All $1$-faithful DAGs for the vertex set $\{a,b,c,d\}$ and the single
  CI statement  $(c \indep d \: | \: a)$.}
  \label{faith:dags:ex}
\end{figure}
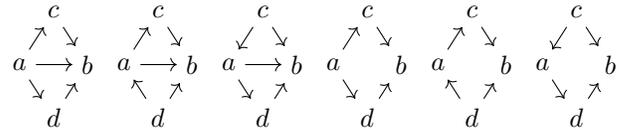

In this paper, we systematically study the problem to extract as much 
``causal knowledge'' as possible from CI statements of order at most $k$,
where $k\ge 0$ is a (typically small) integer. 
More precisely, we investigate the following task:
For a set of variables $V$ and a given set 
$\cI$ of CI statements of the form $(a \indep b \: | \: Z)$, 
with $a,b\in V$, $Z\subseteq V$, and  $|Z|\le k$, find all DAGs $D$ which 
encode up to order $k$ exactly the  CIs in $\cI$, 
i.e., such that for all $a,b,Z$, with $|Z|\le k$,
it is true that $a$ and $b$ are $d$-separated by $Z$ in $D$ if and only if
$(a \indep b \: | \: Z)$ is in $\cI$.
We will call such DAGs \emph{$k$-faithful} to $\cI$ (for formal definitions, 
see Section~\ref{sec:faithful:models}).
Figure~\ref{faith:dags:ex} illustrates all DAGs which are $1$-faithful 
to a single CI statement $(c \indep d \: | \: a)$ for the vertex set $V = \{a,b,c,d\}$.

We observe that this is a generalization of several problems 
already studied in the literature. 
For the simplest case $k=0$, the CI statements are marginal independencies
and the $0$-faithful DAGs are called \emph{faithful to pairwise marginal independencies}.
The problem of deciding if a $0$-faithful DAG exists for a given set of CIs of order zero,
represented as an undirected graph, has been studied in \cite{PearlWermuth94,Idel15}.

Next, with $n$ denoting 
the cardinality of $V$, the problem for $k=n-2$ was first
investigated by \citeauthor{VP92} \shortcite{VP92}. They
called such $k$-faithful DAGs just \emph{faithful} ones and 
presented an algorithm which, for a given 
$\cI$, tests for the existence of a DAG faithful to $\cI$ and produces 
a representation of all such DAGs encoded in form of a completed
partially directed acyclic graph (CPDAG) (we recall all used 
graphical notions in Section~\ref{sec:Preliminaries}).

Note the important difference between the $k$-faithful and faithful
DAGs. Even if $\cI$ consists of CI statements of order $\le k$, 
these two notions differ considerably.
E.g., for the CI statements $\cI$ of order zero and one 
induced by the underlying DAG $D$ shown in Fig.~\ref{comp:to:VP:PC:alg}(a)
the only $1$-faithful DAG is $D$ itself, while no faithful DAG to such $\cI$
exists. This is because for a $1$-faithful DAG, the CIs of order $> 1$
are irrelevant, while a faithful DAG takes that $(x \nindep y \: | \: Z)$,
for all $x,y,$ and $Z$, with $|Z|>1$. 

We also notice that one cannot construct a $k$-faithful DAG
just using a constraint-based structure learning algorithm,
as the SGS or the PC algorithm~\cite{Spirtes2000,KB07}, restricting the 
CI tests to independencies of order $\le k$. For example, 
for the underlying true DAG shown in Fig.~\ref{comp:to:VP:PC:alg}(a) such an approach 
returns a structure with the skeleton given in
Fig.~\ref{comp:to:VP:PC:alg}(b). It is analyzed in detail in
Section~\ref{inc_sec} why the superfluous edge $a-b$ is included in the result of
classical causal structure learning algorithms and
through which rule we are able to remove it.

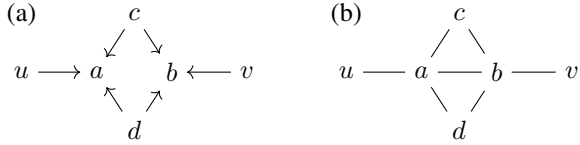
\begin{figure}
  \centering
  \begin{tikzpicture}[xscale = 1,yscale = 1.1]
    \node (aa) at (-1,0.7) {(a)};
    \node (a) at (0,0) {$a$};
    \node (b) at (1,0) {$b$};
    \node (c) at (0.5, 0.7) {$c$}
      edge [->] (a)
      edge [->] (b);
    \node (d) at (0.5, -0.7) {$d$}
      edge [->] (a)
      edge [->] (b);
    \node (u) at (-1,0) {$u$}
      edge [->] (a);
    \node (v) at (2,0) {$v$}
      edge [->] (b);
  \end{tikzpicture}\hspace{7mm}
  \begin{tikzpicture}[xscale = 1,yscale = 1.1]
    \node (aa) at (-1,0.7) {(b)};
    \node (a) at (0,0) {$a$};
    \node (b) at (1,0) {$b$}
      edge [-] (a);
    \node (c) at (0.5, 0.7) {$c$}
      edge [-] (a)
      edge [-] (b);
    \node (d) at (0.5, -0.7) {$d$}
      edge [-] (a)
      edge [-] (b);
    \node (u) at (-1,0) {$u$}
      edge [-] (a);
    \node (v) at (2,0) {$v$}
      edge [-] (b);
  \end{tikzpicture}
  \caption{In (a) the underlying true DAG $D$ is displayed. In (b) we show the
    skeleton of the CPDAG computed by the PC algorithm  
    restricted to CI tests of order zero and one.}
  \label{comp:to:VP:PC:alg}
\end{figure}

\vspace*{-2mm}
\paragraph*{Previous Work.}
Pearl and Wermuth~\shortcite{PearlWermuth94} investigated the problem
whether a set of marginal independencies $\cI$ has a causal
interpretation -- meaning a DAG faithful to $\mathcal{I}$. 
Moreover, they proposed an algorithm to construct a
faithful DAG, but in their
paper they did not give proofs for their theorems. 
\citeauthor{Idel15} \shortcite{Idel15} further considered the stated
problem, characterizing the DAG-representable sets 
by graph-theoretical properties of the marginal independence graphs (these are
undirected graphs with an edge between $a$ and $b$
iff\footnote{We use \emph{iff} as shorthand for \emph{if and only if}.} 
$a$ and $b$ are marginally dependent). 
Additionally, they proposed an algorithm which is based on the 
construction by Pearl and Wermuth~\shortcite{PearlWermuth94}.
However, they did not provide the missing proofs. 

Other works have considered the more general setting which includes
conditional independencies with a singleton conditioning set on top of
marginal independencies. In this context, de Campos and Huete~\shortcite{Campos2000}
introduced the notion of a 0-1 graph. This is an \emph{undirected} graph which contains
an edge $a - b$ iff
$
  (a \nindep b)\  \land \
  [\forall c: \: (a \nindep b \: | \: c)] . 
$
In other words, we obtain the graph by removing all edges between nodes
for which we find an independence of order zero or one.

Wille and B\"{u}hlmann~\shortcite{Wille06} showed that 
-- in the case of graphical Gaussian models --
the 0-1 graphs are good estimators of sparse graphical models and relevant in
biological applications. In particular, they have been used to
model genome networks~\cite{Fuente01,MagKim01,Willeetal01}. 
Later, \citeauthor{castelo2006robust} \shortcite{castelo2006robust} 
generalized the 0-1 graph and the covariance graphs \cite{Cox1993}
to so called $q$-partial graphs.

\vspace*{-2mm}
\paragraph*{Our Results.}
We  provide a constructive solution  to the problem of deciding
if, for a given set $\cI$ of CIs of order less or equal to $k$,
there exists a DAG which is $k$-faithful to $\cI$. We propose an 
algorithm called LOCI (Low-Order Causal Inference) which \textendash\ in case a $k$-faithful DAG exists
\textendash\ outputs all such DAGs encoded in form of a CPDAG. 
This extends and 
generalizes previously known results by \citeauthor{PearlWermuth94} \shortcite{PearlWermuth94} 
as well as by \citeauthor{Idel15} \shortcite{Idel15} 
who provided solutions only for sets of marginal independencies,
i.e. for $k=0$. Moreover, the analysis 
for the correctness of the construction given in this paper, 
fills the gaps in the proofs by \citeauthor{PearlWermuth94}, and by \citeauthor{Idel15}.

The proposed approach also improves some other methods known in the literature 
to learn DAGs from CIs up to a fixed order $k$. In particular, it 
improves the algorithm by 
\citeauthor{Campos2000} \shortcite{Campos2000}
that presupposes knowledge of the topological sorting 
of nodes in the underlying DAG. In contrast, no such knowledge is assumed 
in our algorithm.

\vspace*{-2mm}
\paragraph*{Structure of the paper.} 
In the following section we introduce all preliminary
definitions. Afterwards, in Section~\ref{sec:faithful:models}, we formally
define what faithfulness to a set of CIs means. In Sections~\ref{inc_sec} and~\ref{alg_sec} we derive an
algorithm for finding a compact and faithful representation of a set
of low-order independencies.  We experimentally
compare this algorithm to previous approaches in
Section~\ref{exp_sec}. Finally, we discuss our results in Section~\ref{disc_sec}.
Auxiliary results and most proofs are moved to an appendix.

\section{Preliminaries} 
\label{sec:Preliminaries}
We consider directed and partially directed 
graphs $G = (V,E)$ with $|V| = n$. 
In the latter case, a graph has both directed $a \to b$ and undirected 
$c - d$ edges. 
Two nodes $a$ and $b$ are
called \emph{adjacent} if there is an edge between them (directed or undirected). 
The \emph{degree} of a node $a$ is the number of
nodes adjacent to $a$. 
For an edge $a \rightarrow b$ we call $a$ the
\emph{parent} of $b$ and $b$ the \emph{child} of $a$. A \emph{way} is
a sequence $p_0, \dots, p_t$ of nodes so that for all $i$, with $0
\leq i < t$, there is an edge connecting $p_i$ and $p_{i+1}$. Such a
sequence is called a \emph{path} if $p_i \neq p_j$ holds for all $i,j$,
with $0\le i < j \le t$.
A path from $p_0$ to $p_t$ is called \emph{causal} 
if every edge on the path is directed from $p_i$
towards $p_{i+1}$. A node $b$ is called an \emph{ancestor} of $a$ if there is a
causal path from $b$ to $a$. A node $b$ is called a \emph{descendant} of $a$ if
there is a causal path from $a$ to $b$. $\mathrm{An}_G(a)$ is the set 
of all ancestors of $a$ in graph $G$, $\mathrm{De}_G(a)$ is the set of
all descendants of $a$ in $G$. We use small letters for nodes and values,
and capital letters for sets and random variables.

Of special importance are directed acyclic graphs (DAGs)
containing only directed edges and no directed cycles,
and partially directed acyclic graphs
(PDAGs) that may contain both directed and undirected edges but no directed cycles.
Every DAG is a PDAG.
The \emph{skeleton} of a PDAG $G$ is the undirected
graph where every edge in $G$ is substituted by an undirected edge.

Let $P$ be a joint probability distribution over 
random variables $X_i$, with $i\in V$, and $X$, $Y$ and $Z$ stand
for~any subsets of variables. We use the notation $(X \indep
Y \: | \: Z)_P$ to state that $X$ is independent of $Y$ given $Z$ in $P$. 
A distribution $P$ and a DAG $D=(V,E)$
are called  \emph{compatible} if $D$ factorizes $P$ as 
$ \prod_{i\in V} P(x_i \: | \: \textit{pa}_i)$ over all 
realizations $x_i$ of $X_i$ and $\textit{pa}_i$  
of variables corresponding to the parents of $i$ in $D$.
It is possible to read CIs over $X_i$, with $i\in V$, off a compatible DAG 
through the notion of  $d$-separation.
  Recall, a path $\pi$ is said to be \emph{d-separated} (or \emph{blocked}) by a
  set of nodes $Z$ iff
  (1.) $\pi$ contains a chain $u \rightarrow v \rightarrow w$ or 
    $u \gets v \gets w$ or a
    fork $u \gets v \to v$ such that the middle node
    $v$ is in $Z$, or
  (2.) $\pi$ contains an inverted fork (or \emph{collider}) $u
    \rightarrow v \leftarrow w$ such that the middle node $v$ is not
    in $Z$ and such that no descendant of $v$ is in $Z$.
  A set $Z$ is said to $d$-separate $a$ from $b$ iff $Z$ blocks every
  path from $a$ to $b$.
%
We write $(a \indep b \: | \: Z)_D$ when $a$ and $b$ are $d$-separated by $Z$ in $D$.
Whenever $G$ and $P$ are compatible, it holds for all 
$a, b\in V$, and $Z\subseteq V$, that if $(a \indep b \: | \: Z)_D$ 
then $(X_a \indep X_b \: | \: \{X_i : i\in Z\})_P$.

An inverted fork $u \rightarrow v \leftarrow w$
is called 
a \emph{v-structure} if $u$
and $w$ are not adjacent. A \emph{pattern} of a DAG $D$ is the PDAG
which has the same skeleton as $D$ and which has an oriented
edge $a \to b$ iff there is a vertex $c$,  which is not adjacent to $a$, 
such that 
$c \rightarrow b$ is an edge  in
$D$, too. Essentially, in the pattern of $D$, the only directed edges are the
ones which are part of a v-structure in $D$.

A special case of PDAGs are the so
called CPDAGs~\cite{andersson1997} or completed partially directed
graphs. They represent Markov equivalence classes.
If two DAGs are Markov equivalent, it means that every probability
distribution that is compatible with one of the DAGs is also
compatible with the other~\cite{Pearl2009}.  As shown by Verma and
Pearl~\shortcite{VermaPearl1990} two DAGs
are Markov equivalent \emph{iff} they have the same skeleton and the
same v-structures.

  Given a DAG $D = (V,E)$, the class of Markov equivalent graphs to
  $D$, denoted as $[D]$, is defined as $[D] = \{D' \: | \: D' \text{ is
  Markov equivalent to } D\}$. The graph representing $[D]$ is called
  a CPDAG and is denoted as $D^* = (V,
  E^*)$, with the set of edges defined as follows: $a \rightarrow b$
  is in $E^*$ if $a \rightarrow b$ belongs to every $D' \in [D]$ and
  $a-b$ is in $E^*$ if there exist $D',D'' \in [D]$ so that $a
  \rightarrow b$ is an edge of $D'$ and $a \leftarrow b$ is an edge of
  $D''$. A partially directed graph $G$ is called a CPDAG if $G = D^*$
  for some DAG $D$. 

  \label{ext_def}
  Given a partially directed graph $G$, a DAG $D$ is an extension of
  $G$ iff $G$ and $D$ have the same skeleton and if $a \rightarrow b$
  is in $G$, then $a \rightarrow b$ is in $D$.
  An extension is called \emph{consistent} if additionally $G$ and $D$
  have the same v-structures.
Due to \citeauthor{Meek} \shortcite[Theorem~3]{Meek}, we know that 
when starting with a pattern $G$ of some DAG $D$ and repeatedly executing the
following three rules until none of them applies, we obtain a CPDAG $D^*$
representing the Markov equivalent DAGs:
  \begin{compactenum}
  \item Orient $b - c$ into $b \rightarrow c$ 
    if there is $a
    \rightarrow b$ such that $a$ and $c$ are nonadjacent.
  \item Orient $a-c$ into $a \rightarrow c$ 
    if there is a chain $a
    \rightarrow b \rightarrow c$.
  \item Orient $a - b$ into $a \to b$ 
    if there are two
    chains $a - c \to b$ and $a - d \to b$ such that $c$ and
    $d$ are nonadjacent.
  \end{compactenum}
We will call these three rules the \emph{Meek rules}. 

We note that one obtains the CPDAG $D^*$ by applying the rules   
not only when starting with the pattern of a DAG $D$
but also, more generally, when the initial graph $G$ is any 
PDAG whose consistent extensions form a Markov equivalence class $[D]$.
We will use this property in the correctness proof of the LOCI
algorithm (Algorithm~\ref{alg_gen_pdag}). 


\section{Models Faithful  to CI Statements}
\label{sec:faithful:models}
In this section, we give a formal definition for a $k$-faithful DAG and 
-- for the sake of completeness -- we recall the definitions of a faithful
and a $k$-partial graph. Next, we propose a definition for a compact 
representation of all $k$-faithful DAGs in terms of PDAGs
and show that it yields a CPDAG.

Let $V$ represent the set of variables and $k\ge 0$ be a fixed integer. Let
$\mathcal{I}_V$ be a set of CI statements over variables $X_i$, with $i\in V$, 
given as $(a \indep  b\: |\: Z)$,  with $a,b \in V$ and  $Z \subseteq V$.
Analogously, let $\mathcal{I}_V^k$ be a set of CI statements of order $\le k$, i.e. 
such that $|Z| \leq k$.
For example, the set $\mathcal{I}_V^{0}$ solely contains
marginal independencies.
For a more consistent  notation we 
write $(a \indep b \: | \: Z)_{\mathcal{I}_V^k}$ instead of $(a \indep b \: | \: Z) \in
\mathcal{I}_V^k$, and respectively, $(a \nindep b \: | \:
Z)_{\mathcal{I}_V^k}$ for 
$(a \indep b \: | \: Z) \not\in \mathcal{I}_V^k$. We use an analogous notation for $\cI_V$.
Additionally, in statements like e.g. $(a \indep b \: | \: \{c,d\})$, we omit the brackets 
and write $(a \indep b \: | \: c,d)$.

\begin{defi}[Faithful Graph \cite{VermaPearl1990}]
  \label{def:faithful:dag:VermaPearl1990}
For a set $\mathcal{I}_V$ of CIs, 
a DAG $D=(V,E)$ is called \emph{faithful} to $\mathcal{I}_V$~if 
\[
 \forall (a,b,Z)  \quad
 [(a \indep b \: | \: Z)_{\mathcal{I}_V} \ \Leftrightarrow\  (a \indep b \: | \: Z)_{D} ] .
\]
\end{defi}

\begin{defi}[$k$-Partial Graph \cite{castelo2006robust}]\label{def:k:graph}
For a  set $\mathcal{I}_V^{k}$ of CIs of order $\le k$, 
an undirected graph $G=(V,E)$ is called a \emph{$k$-partial graph} with respect to 
$\mathcal{I}_V^{k}$ if 
\[
 (\forall a,b,Z, |Z|\le k) \quad
 [(a \indep b \: | \: Z)_{\mathcal{I}_V^k} \ \Leftrightarrow\  (a - b \not\in E) ] .
\]
\end{defi}

We will call $k$-partial graphs with $k=1$ also 0-1 graphs, as proposed
by \citeauthor{Wille06} \shortcite{Wille06} who considered such structures 
in the context of graphical Gaussian models.

\begin{defi}[$k$-Faithful Graph]\label{def:faithful:dag}
For a set $\mathcal{I}_V^{k}$ of CIs of order $\le k$, 
a DAG $D=(V,E)$ is called \emph{$k$-faithful} to $\mathcal{I}_V^{k}$~if 
\[
 (\forall a,b, Z, |Z|\le k) \quad
 [(a \indep b \: | \: Z)_{\mathcal{I}_V^k} \ \Leftrightarrow\  (a \indep b \: | \: Z)_{D} ] .
\]
\end{defi}

Due to \citeauthor{VermaPearl1990} \shortcite{VermaPearl1990}, we know that, 
for a given set $\cI_V$, all DAGs faithful to $\cI_V$ can be represented as 
a CPDAG over $V$. A representation of a $k$-partial graph follows straightforwardly 
from the definition.
On the other hand, note that it is not obvious how to represent all DAGs which 
are $k$-faithful to $\cI_V^k$, like e.g. those shown in Fig.~\ref{faith:dags:ex}.

\begin{defi}
A set $\mathcal{I}_V^k$ of CI statements
will be termed \emph{DAG-representable} if there is a
DAG which is $k$-faithful to it. We call a DAG $D$, which is $k$-faithful to
$\mathcal{I}_V^{k}$, \emph{edge maximal} if there is no $k$-faithful DAG whose
edge set is a superset of $D$. Moreover, 
we denote by $\mathcal{F}(\mathcal{I}_V^k)$
the set of all $k$-faithful DAGs to $\mathcal{I}_V^k$. 
\end{defi}
 
For example, for $\cI_V^1=\{(c \indep d \: | \: a)\}$ with 
$V=\{a,b,c,d\}$, Fig.~\ref{faith:dags:ex} shows all DAGs
in $\mathcal{F}(\mathcal{I}_V^k)$.

Below, we define a representation of a set $\mathcal{F}(\mathcal{I}_V^k)$ as a PDAG.
Using our definition, the set of $k$-faithful DAGs from Fig.~\ref{faith:dags:ex}
is represented by the PDAG shown in part (c) of Fig.~\ref{faith:dags:repr:ex}.

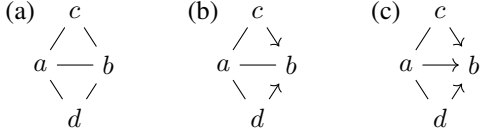
\begin{figure}
  \centering
  \begin{tikzpicture}[xscale = .9]
    \node (a) at (0,0) {$a$};
    \node (b) at (1,0) {$b$}
    edge [-] (a);
    \node (c) at (0.5, 0.7) {$c$}
    edge [-] (a)
    edge [-] (b);
    \node (d) at (0.5, -0.7) {$d$}
    edge [-] (a)
    edge [-] (b);
    \node (a) at (2+0.7,0) {$a$};
    \node (b) at (3+0.7,0) {$b$}
    edge [-] (a);
    \node (c) at (2.5+0.7, 0.7) {$c$}
    edge [-] (a)
    edge [->] (b);
    \node (d) at (2.5+0.7, -0.7) {$d$}
    edge [-] (a)
    edge [->] (b);
    \node (a) at (4+1.4,0) {$a$};
    \node (b) at (5+1.4,0) {$b$}
    edge [<-] (a);
    \node (c) at (4.5+1.4, 0.7) {$c$}
    edge [-] (a)
    edge [->] (b);
    \node (d) at (4.5+1.4, -0.7) {$d$}
    edge [-] (a)
    edge [->] (b);
    \node (l1) at (-0.3, .7) {(a)};
    \node (l2) at (1.7+0.7, .7) {(b)};
    \node (l3) at (3.7+1.4, .7) {(c)};
  \end{tikzpicture}
  \caption{For the example from Fig.~\ref{faith:dags:ex} we show
    the $k$-partial graph (part (a) on the left), the pattern of the
    edge maximal DAGs (part (b) in the middle) and the PDAG
    representing  $\cF(\cI_V^1)$, with $\cI_V^1=\{(c \indep d \: | \: a)\}$ (part (c) on the right).}
  \label{faith:dags:repr:ex}
\end{figure}

We say that
a PDAG $G = (V, E)$ \emph{contains} a set of DAGs
$\{D_i=(V,E_i): i=1,\ldots,t\}$
if for every DAG $D_i = (V, E_i)$ 
it is true that $E_i \subseteq E$. 
Here, we assume 
that an undirected edge $a - b$ in $G$ is
encoded by two directed edges $a \rightarrow b$ and $b \rightarrow a$.
Obviously, a complete undirected graph over $V$ contains 
every set $\mathcal{F}(\mathcal{I}_V^k)$. 
From a causal structure learning perspective, our goal is to
extract from $\cI^k_V$ as much causal knowledge as possible.
We formalize this
goal as to find the minimal PDAG which contains every DAG $k$-faithful to
$\mathcal{I}_V^k$.
In this setting, minimality is considered in
regard to the inclusion relation between the sets of edges.

\begin{defi}
  \label{rep_def}
  A PDAG $G$ \emph{represents} the set $\mathcal{F}(\mathcal{I}_V^k)$
  if $G$ is a minimal graph that contains every graph in
  $\mathcal{F}(\mathcal{I}_V^k)$. 
\end{defi}
  
It is easy to see, that, according to this definition,
the PDAG in part (c) of Fig.~\ref{faith:dags:repr:ex} represents 
the set of $k$-faithful DAGs from Fig.~\ref{faith:dags:ex}.

We note that a PDAG $G$ representing a set
$\mathcal{F}(\mathcal{I}_V^k)$ fulfills the following conditions:
\begin{compactenum}
\item There is an edge $a - b$ in $G$ iff DAGs $D, D' \in
  \mathcal{F}(\mathcal{I}_V^k)$ exist such that there is an edge $a
  \rightarrow b$ in $D$ and an edge $a \leftarrow b$ in $D'$.
\item There is an edge $a \rightarrow b$ in $G$ iff a DAG
  $D \in \mathcal{F}(\mathcal{I}_V^k)$ exists which contains the
  edge $a \rightarrow b$ and no DAG in
  $\mathcal{F}(\mathcal{I}_V^k)$ contains the edge $a \leftarrow
  b$.
\item There is no edge between $a$ and $b$ in $G$ iff no DAG in
  $\mathcal{F}(\mathcal{I}_V^k)$ contains an edge between $a$ and $b$.
\end{compactenum}
From this perspective 
one can already view the representation $G$ as a generalization of the
notion of a CPDAG that is used to represent Markov equivalent DAGs 
of the same skeleton. Note that DAGs in $\cF(\cI_V^k)$ can have 
different skeletons. Interestingly, we prove that the PDAG representing 
the set of $k$-faithful graphs is still a CPDAG. 
\begin{prop}\label{prop:represantation}
For a given set $\cI_V^k$ of CIs, the representation $G$ of 
all $k$-faithful DAGs $\mathcal{F}(\mathcal{I}_V^k)$ is a CPDAG.
Moreover, any consistent extension of $G$ is a DAG $k$-faithful to $\cI_V^k$. 
\end{prop}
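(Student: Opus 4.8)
The plan is to show that the combinatorially defined representation $G$ coincides with the CPDAG of a single Markov equivalence class---the class $[D_0]$ of all \emph{edge maximal} $k$-faithful DAGs---which yields both assertions at once. Assume $\cF(\cI_V^k)\neq\emptyset$ (otherwise there is nothing to represent) and write $G^k$ for the $k$-partial graph of $\cI_V^k$. First I would pin down skeletons. If $a,b$ are adjacent in a $k$-faithful DAG $D$, then no set $d$-separates them in $D$, so no set of size $\le k$ does; hence $(a\nindep b\mid Z)_{\cI_V^k}$ for all $|Z|\le k$, i.e.\ $a-b$ is an edge of $G^k$. Thus every $k$-faithful skeleton is contained in $G^k$. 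The key lemma going the other way is: \emph{if $D$ is $k$-faithful and $\{a,b\}$ is an edge of $G^k$ missing from $D$, then inserting it, oriented consistently with a topological order of $D$, again yields a $k$-faithful DAG.} Granting this, one inserts all missing $G^k$-edges and obtains that the edge maximal $k$-faithful DAGs are exactly the $k$-faithful DAGs with skeleton $G^k$; in particular, every $k$-faithful DAG extends by repeated insertions to an edge maximal one whose arc set contains its own.

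Second, I would prove that all edge maximal $k$-faithful DAGs are Markov equivalent. They share the skeleton $G^k$, so by Verma and Pearl it suffices to match v-structures. Let $D_1,D_2$ be edge maximal and $k$-faithful and let $a\to c\leftarrow b$ be a v-structure of $D_1$; then $a,b$ are nonadjacent, so $\{a,b\}\notin G^k$ and some $Z$ with $|Z|\le k$ has $(a\indep b\mid Z)_{\cI_V^k}$ and thus $d$-separates $a$ from $b$ in both DAGs. In $D_1$ this $Z$ blocks the collider path $a\to c\leftarrow b$, which forces $c\notin Z$; then in $D_2$ the path $a-c-b$ (present since the skeletons agree) is blocked by a set not containing $c$, which is only possible if $c$ is a collider on it, i.e.\ $a\to c\leftarrow b$ is a v-structure of $D_2$ too. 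By symmetry the v-structures of $D_1$ and $D_2$ coincide, so they are Markov equivalent. Conversely, Markov equivalence preserves $d$-separations and the skeleton, so any DAG equivalent to an edge maximal $k$-faithful DAG is again one. Hence the edge maximal $k$-faithful DAGs form a complete equivalence class $[D_0]$ with CPDAG $D_0^*$.

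Third, I would check $G=D_0^*$ against the three listed properties of a representation. The skeleton of $G$ is the set of pairs adjacent in some $k$-faithful DAG, which is exactly the edge set of $G^k$ and hence also the skeleton of $D_0^*$. For orientations, every $k$-faithful DAG extends to a DAG in $[D_0]$ without reversing arcs, so ``some $k$-faithful DAG has $a\to b$ and none has $b\to a$'' holds iff every DAG in $[D_0]$ orients $\{a,b\}$ as $a\to b$, i.e.\ iff $D_0^*$ carries the arc $a\to b$; the undirected and nonadjacent cases match symmetrically. Since $D_0^*$ contains every $k$-faithful DAG while no PDAG with fewer edges can, $D_0^*$ is the representation, so $G=D_0^*$ is a CPDAG. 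For the ``moreover'' part, the consistent extensions of the CPDAG $D_0^*$ are precisely the DAGs in $[D_0]$ (equal skeleton and v-structures, by Verma and Pearl; the Meek rules return $D_0^*$ from each of them), and each of these is $k$-faithful because it shares all $d$-separations with $D_0$.

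The main obstacle is the key lemma of the first step. One direction is immediate by monotonicity: adding an arc only creates dependences, so no new CI of order $\le k$ can arise. The hard part will be that no existing low-order $d$-separation is lost. A new $d$-connecting $u$--$v$ path in the augmented DAG either passes through the new arc $a\to b$ or contains a collider that this arc just activated by enlarging $\mathrm{De}_D(a)$; in either situation one has to splice in a $d$-connecting $a$--$b$ path already present in $D$---such a path exists for every conditioning set of size $\le k$ precisely because $\{a,b\}\in G^k$---and then argue that the result can be reduced to a $d$-connecting $u$--$v$ path in $D$, contradicting the $k$-faithfulness of $D$. Making this splicing rigorous, by tracking collider status at the junction nodes and handling several inserted arcs (e.g.\ by induction on the number of insertions), is the technical core of the argument.
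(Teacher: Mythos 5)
There is a genuine gap, and it sits exactly at the point you yourself identify as the technical core. Your key lemma---``if $D$ is $k$-faithful and $\{a,b\}$ is an edge of the $k$-partial graph $G^k$ missing from $D$, then inserting it, oriented consistently with a topological order of $D$, again yields a $k$-faithful DAG''---is false, not merely unproven. Take the paper's running example (Fig.~2(a)): $V=\{u,a,b,c,d,v\}$ with edges $u\to a$, $c\to a$, $c\to b$, $d\to a$, $d\to b$, $v\to b$, and $k=1$. Here $D$ itself is the unique $1$-faithful DAG, $a$ and $b$ are nonadjacent in $D$, and $\{a,b\}$ is an edge of $G^1$ (the 0-1 graph), since no CI of order $\le 1$ separates $a$ and $b$. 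Yet inserting $a\to b$ destroys $(u\indep b)$ and inserting $b\to a$ destroys $(a\indep v)$, so no orientation of the new edge preserves $1$-faithfulness: $a$ and $b$ are \emph{incompatible} in the paper's terminology. Consequently your derived claims also fail: the edge maximal $k$-faithful DAGs do not in general have skeleton $G^k$, and the skeleton of the representation is not $G^k$ but the strictly smaller graph obtained by additionally deleting edges between incompatible pairs (Proposition~\ref{skel_prop}). The correct insertion criterion, which the paper proves as Lemma~\ref{add_edge_thm}, is not adjacency in $G^k$ but membership of the directed edge $a\to b$ in the intermediate graph $G_{\mathrm{ep}}$, i.e.\ the absence of any $u,Z$ with $(u\indep b\:|\:Z)_{\cI_V^k}$, $(u\nindep a\:|\:Z)_{\cI_V^k}$, $(a\nindep b\:|\:Z)_{\cI_V^k}$, $a\notin Z$.

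The same omission infects your Markov-equivalence step. You argue that if $a\to c\leftarrow b$ is a v-structure of an edge maximal $k$-faithful DAG, then nonadjacency of $a,b$ yields some $Z$ with $|Z|\le k$ and $(a\indep b\:|\:Z)_{\cI_V^k}$. For incompatible pairs this separating set of order $\le k$ need not exist (in the example above, no $Z$ with $|Z|\le 1$ separates $a$ and $b$), so your argument that the collider at $c$ is forced in every other edge maximal DAG does not go through for such pairs; the paper's Proposition~\ref{em_vs_thm} needs a separate, more involved case (its case~2) that exploits the witnesses $u,v,Z,Z'$ from the definition of incompatibility. Your overall architecture---edge maximal $k$-faithful DAGs form a Markov equivalence class, the representation is its CPDAG, consistent extensions are therefore $k$-faithful---is the same as the paper's, but without the incompatibility notion both pillars (the insertion lemma and the v-structure matching) rest on a characterization of the edge maximal skeleton that the paper's own motivating example refutes.
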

In particular, this means that the representation $G$ is itself a
faithful model of all CIs up to order $k$.

\section{Determining the Skeleton} 
\label{inc_sec}
For a given set $\mathcal{I}_V^k$ of conditional independence statements  
up to order $k$, our goal is to find the
representation of the set of $k$-faithful DAGs
$\mathcal{F}(\mathcal{I}_V^k)$. By definition, this is the minimal graph which contains
every $k$-faithful DAG. Thus, our strategy is the following. Starting with
the complete graph, we want to remove all edges which 
do not belong to any $k$-faithful DAG and, vice versa, keep all edges which are in at least one
$k$-faithful DAG. This is in line with the paradigm of
\emph{constraint-based} causal structure learning.

In this section, we characterize all pairs of nodes which are
nonadjacent in \emph{every} $k$-faithful DAG. These pairs of nodes are exactly
the ones which are nonadjacent in the representation as well. This
means that, by finding them, we can construct the skeleton of the representation.
We will explore how edges are oriented in the subsequent section.

One setting in which two nodes have to be nonadjacent is quite obvious. 
If we have a statement $(a \indep b \: | \: Z)_{\mathcal{I}_V^k}$, 
it follows trivially that there cannot be an edge between $a$ and $b$ 
in any $k$-faithful DAG. However, as we will see, though this condition is necessary,
it is not sufficient for the non-adjacency of vertices in $k$-faithful DAGs.
As the main result of this section, we provide a property between two nodes
(we call it \emph{incompatibility}) and using this property we formulate 
a criterion for non-adjacency which is both necessary and sufficient  
(Proposition~\ref{skel_prop}). The incompatibility between two nodes $a$ and $b$
expresses some higher order conditional independencies 
which can be derived from CI statements up to order $k$.

\subsection{Derivation of Higher-Order CI Statements}   
When having
access to \emph{all} conditional independencies without a restriction
on the order, removing
edges corresponding to these known CIs is sufficient for learning the skeleton
of the underlying causal structure. For example, the SGS and the PC
algorithm~\cite{Spirtes2000} work exactly in this fashion. However,
only removing these edges is not sufficient even for obtaining the
skeleton of the representation (or the skeleton of a $k$-faithful DAG) 
when we consider \emph{order-bounded}
sets of independencies. We will now investigate why this is the
case and show how this obstacle can be overcome.

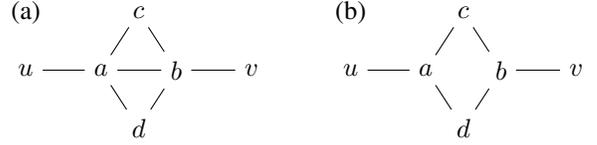
\begin{figure}
  \centering
  \begin{tikzpicture}[xscale = 1,yscale = 1.1]
    \node (aa) at (-1,0.7) {(a)};
    \node (a) at (0,0) {$a$};
    \node (b) at (1,0) {$b$}
      edge [-] (a);
    \node (c) at (0.5, 0.7) {$c$}
      edge [-] (a)
      edge [-] (b);
    \node (d) at (0.5, -0.7) {$d$}
      edge [-] (a)
      edge [-] (b);
    \node (u) at (-1,0) {$u$}
      edge [-] (a);
    \node (v) at (2,0) {$v$}
      edge [-] (b);
  \end{tikzpicture}\hspace{7mm}
  \begin{tikzpicture}[xscale = 1,yscale = 1.1]
    \node (aa) at (-1,0.7) {(b)};
    \node (a) at (0,0) {$a$};
    \node (b) at (1,0) {$b$};
    \node (c) at (0.5, 0.7) {$c$}
      edge [-] (a)
      edge [-] (b);
    \node (d) at (0.5, -0.7) {$d$}
      edge [-] (a)
      edge [-] (b);
    \node (u) at (-1,0) {$u$}
      edge [-] (a);
    \node (v) at (2,0) {$v$}
      edge [-] (b);
  \end{tikzpicture}
  \caption{Left (a): The 0-1 graph for the CI statements $\cI_V^1$
    induced by the underlying DAG $D$ shown in Fig.~\ref{comp:to:VP:PC:alg}(a). 
    It contains the edge $a-b$, as there is
    no independence of order zero or one between these nodes.
    Right (b): The skeleton of the graph computed by our algorithm (presented in the next section).
    Nodes $a$ and $b$ are not incident since 
    they are incompatible (according to our definition). 
    The justification is that, as seen in Fig.~\ref{comp:to:VP:PC:alg}(a), there is an independence
    of order two $(a \indep b \: | \: c,d)$. }
  \label{confl_ex_01:new}
\end{figure}

The outlined problem is illustrated in
Fig.~\ref{confl_ex_01:new}, for the CI statements of order $0$ or $1$
induced by the underlying DAG $D$ shown in Fig.~\ref{comp:to:VP:PC:alg}(a), i.e.
for the set  $\mathcal{I}_V^1 = \{(u \indep c), (u \indep d), (u
\indep b), (u \indep v), (a \indep v), (c \indep v), (d \indep v), (c
\indep d), (u \indep c \: | \: v), \dots\}$ of all zero- and first-order
independencies found in this DAG. Choosing the value $k = 1$ allows us a comparison with
0-1 graphs, but such an example can be constructed for all $0 \leq k < n-2$.
In part (a) we show the corresponding 0-1 graph. This graph is
constructed 
using the simple strategy of removing an
edge if a zero- or first-order independence is
present. We see that the nodes $a$ and $b$ are adjacent in this graph
because no independence $(a \indep b \: | \: Z)$ of order zero or one exists. However, in the
underlying DAG $D$ the nodes $a$ and $b$ are nonadjacent. Moreover, it is impossible
to find a $k$-faithful DAG which contains the edge $a \rightarrow b$ or 
$a \leftarrow b$. In fact, $D$ is the only $k$-faithful DAG. 
Essentially, this is the case because the edge
between $a$ and $b$ (if present) would need to be in two conflicting
v-structures, namely $u \rightarrow a \leftarrow b$ and $a \rightarrow
b \leftarrow v$, to make sure that $u$ and $b$ as well as $a$ and $v$
are marginally independent. This is clearly impossible. 
From the fact that no $k$-faithful DAG contains an edge between $a$ and
$b$, we can infer that there has to be a higher-order
CI between $a$ and $b$. Here, this higher-order CI is $(a \indep b \: | \: c,d)$. 

It should be noted that de Campos and Huete~\shortcite{Campos2000} already
discovered that it is possible to remove further edges from 0-1
graphs (they considered a similar example in Fig.~2 of their
paper). However, their method for deleting such edges relied on
the topological ordering in the underlying DAG and they did not
classify these edges. Requiring the topological ordering is a
large obstacle for practical applications. Our proposed methods do not
rely on the topological sorting as we give a simple classification of the edges
that have to be removed from the 0-1 graph in order to obtain the
skeleton of the representation and, by that, the skeleton of a $k$-faithful DAG. 

We will now formalize the situation
just described in the following definition and thereby introduce the so called
\emph{incompatible nodes}:
\begin{defi}
  \label{confl_def}
   Let $\mathcal{I}_V^k$ be a set of CIs of order $\le k$.
  Then two nodes $a$ and $b$ are
  called \emph{incompatible} iff there exist $u, v, S, T$ such that the following two conditions hold:
  \begin{enumerate}
    \item $(u \indep b \: | \: S)_{\mathcal{I}_V^k}  \land (u \nindep a \: | \: S)_{\mathcal{I}_V^k} 
      \land (a \nindep b \: | \: S)_{\mathcal{I}_V^k} \land a \not\in S$,
    \item $(v \indep a \: | \: T)_{\mathcal{I}_V^k} \land  (v \nindep b \: | \: T)_{\mathcal{I}_V^k} 
    \land (b \nindep a \: | \: T)_{\mathcal{I}_V^k} \land b \not\in T.$
  \end{enumerate}  
\end{defi}
We can see that the nodes $a$ and $b$ in the example in Fig.~\ref{confl_ex_01:new} are incompatible because
$(u \indep b)_{\mathcal{I}_V^1}$, $(u \nindep a)_{\mathcal{I}_V^1}$, 
$(a \nindep b)_{\mathcal{I}_V^1}$, $(v \indep a)_{\mathcal{I}_V^1}$,
and $(v \nindep b)_{\mathcal{I}_V^1}$
hold. In this case, $S$ and $T$ are both the empty set. It follows
immediately that $a \not\in S$ and $b \not\in T$ are satisfied. Moreover, $(b
\nindep a)_{\mathcal{I}_V^1}$ follows by symmetry from $(a \nindep
b)_{\mathcal{I}_V^1}$.

We now prove formally that if the nodes $a$ and $b$ are incompatible, 
there cannot be an edge between $a$ and $b$ in any $k$-faithful DAG.
Firstly, we show the following:
\begin{lemma}
  \label{early_dir_thm}
  Let $\mathcal{I}_V^k$ be a set of CIs of order $\le k$.
  If we have $(u \indep b \: | \: Z)_{\mathcal{I}_V^k}$, $(u \nindep a \:
  | \: Z)_{\mathcal{I}_V^k}$ and $a \not\in Z$, then no DAG $k$-faithful to
  $\mathcal{I}_V^k$ contains the edge $a \rightarrow b$.
\end{lemma}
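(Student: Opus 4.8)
The plan is to argue by contradiction: suppose $D$ is a DAG $k$-faithful to $\mathcal{I}_V^k$ that contains the edge $a \rightarrow b$, and derive a contradiction with the hypotheses $(u \indep b \: | \: Z)_{\mathcal{I}_V^k}$, $(u \nindep a \: | \: Z)_{\mathcal{I}_V^k}$, and $a \notin Z$. Since $|Z| \le k$ and $D$ is $k$-faithful, these three statements translate directly into $d$-separation facts in $D$: namely $(u \indep b \: | \: Z)_D$, $(u \nindep a \: | \: Z)_D$, and of course $a \notin Z$. From $(u \nindep a \: | \: Z)_D$ we obtain a path $\pi$ in $D$ from $u$ to $a$ that is active (not blocked) given $Z$. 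The key move is to extend $\pi$ by appending the edge $a \rightarrow b$ to form a way $\pi'$ from $u$ to $b$, and to show that $\pi'$ contains an active path from $u$ to $b$ given $Z$, contradicting $(u \indep b \: | \: Z)_D$.

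The main step is therefore to verify that $\pi'$ is active given $Z$. Since $\pi$ is active given $Z$ and ends at $a$, and we are attaching $a \rightarrow b$ at the endpoint $a$, the only place a new block could be introduced is at the node $a$ itself. Because $a \notin Z$, the only way $a$ could become a blocking node is if it is a collider on $\pi'$ with no descendant in $Z$; but since the edge $a \rightarrow b$ is directed \emph{out} of $a$, the node $a$ is not a collider on $\pi'$ regardless of the orientation of the last edge of $\pi$ at $a$. Hence $a$ does not block $\pi'$. All other nodes of $\pi'$ retain exactly the same status as on $\pi$ (their incident edges on $\pi'$ are the same as on $\pi$), except possibly for the collider descendant condition; but adding $b$ at the very end, outside $Z$ (or even if $b \in Z$, this only helps unblock colliders, never blocks a chain or fork), does not turn any previously active collider into a blocked one, and does not affect chains or forks. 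A small subtlety to address: $\pi'$ is a \emph{way} and might not be a simple path if $b$ already appears on $\pi$; in that case one truncates $\pi'$ at the first occurrence of $b$, obtaining a genuine path from $u$ to $b$ that is still active given $Z$ by the same local reasoning (truncation only removes nodes, and does not change the collider/non-collider status of the surviving interior nodes). Either way we get an active $u$-to-$b$ path given $Z$ in $D$, contradicting $(u \indep b \: | \: Z)_D$.

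The part requiring the most care is the case analysis at node $a$ when passing from $\pi$ to $\pi'$, together with the truncation argument in the non-simple case — one must check that no interior node's role flips from non-blocking to blocking. Everything else is a direct application of $k$-faithfulness (which is legitimate precisely because every CI statement invoked has conditioning set $Z$ of size at most $k$) and of the definition of $d$-separation recalled in Section~\ref{sec:Preliminaries}.
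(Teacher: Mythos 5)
Your proposal is correct and follows essentially the same route as the paper's proof: assume a $k$-faithful DAG with $a \rightarrow b$, take an active path from $u$ to $a$ given $Z$ (which exists by $(u \nindep a \:|\: Z)$ and $k$-faithfulness), and append $a \rightarrow b$ to obtain an active connection from $u$ to $b$ since $a \notin Z$, contradicting $(u \indep b \:|\: Z)$. Your extra care at node $a$ and the truncation to a simple path merely spell out details the paper leaves implicit.
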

\begin{proof}
  Assume, 
  there is an edge $a \rightarrow
  b$ in a $k$-faithful DAG $D$.  In this DAG, $(u \nindep a \; | \; Z)_{\mathcal{I}_V}^k$ has to
  hold. This means that there is a path between $u$ and $a$ which is
  not blocked by $Z$.  But as we have the edge $a \rightarrow b$
  in $G$, there will also be a path between $u$ and $b$ which is not
  blocked by $Z$ (note that 
  $a \not\in Z$).  A contradiction.
\end{proof}
We  immediately conclude that incompatible nodes cannot be adjacent in any
$k$-faithful DAG:
\begin{cor}
  \label{no_pres_cor}
  Let $\mathcal{I}_V^k$ be a set of CIs of order $\le k$.
  If the nodes $a$ and $b$ are
  incompatible, they are nonadjacent in every DAG
  $k$-faithful to $\mathcal{I}_V^k$. 
\end{cor}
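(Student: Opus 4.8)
The plan is to obtain the corollary as a direct consequence of Lemma~\ref{early_dir_thm} together with Definition~\ref{confl_def}. Suppose $a$ and $b$ are incompatible and fix witnesses $u,v,S,T$ as in the definition. The first step is the trivial observation that in a DAG the vertices $a$ and $b$ are adjacent precisely when one of the two directed edges $a\to b$ or $b\to a$ is present; hence it suffices to rule out each of these two orientations separately in every $k$-faithful DAG.

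For the orientation $a\to b$: condition~1 of Definition~\ref{confl_def} provides exactly the hypotheses of Lemma~\ref{early_dir_thm} with $Z:=S$, namely $(u\indep b \: | \: S)_{\mathcal{I}_V^k}$, $(u\nindep a \: | \: S)_{\mathcal{I}_V^k}$ and $a\notin S$ (the extra statement $(a\nindep b \: | \: S)_{\mathcal{I}_V^k}$ in the definition is not even needed here). Therefore no $k$-faithful DAG contains $a\to b$. For the orientation $b\to a$: condition~2 becomes, after the substitution $u\mapsto v$, $a\mapsto b$, $b\mapsto a$, $Z\mapsto T$, again precisely the hypothesis of Lemma~\ref{early_dir_thm}, since $(v\indep a \: | \: T)_{\mathcal{I}_V^k}$, $(v\nindep b \: | \: T)_{\mathcal{I}_V^k}$ and $b\notin T$. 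Hence no $k$-faithful DAG contains $b\to a$ either, and combining the two conclusions yields that $a$ and $b$ are nonadjacent in every DAG $k$-faithful to $\mathcal{I}_V^k$.

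I do not expect any real obstacle: all the substantive work is already carried out in Lemma~\ref{early_dir_thm}, whose argument appends the putative edge $a\to b$ to an active $u$--$a$ path to produce an active $u$--$b$ path, contradicting $(u\indep b \: | \: Z)_{\mathcal{I}_V^k}$. The only points that merit a line of care are the reduction at the start (adjacency in a DAG forces one of the two directed edges) and, inside the proof of Lemma~\ref{early_dir_thm}, the case in which $b$ already lies on the chosen active $u$--$a$ path — in that case one simply takes the active $u$--$b$ sub-path rather than extending. Neither issue affects the plan, so the corollary follows immediately once Lemma~\ref{early_dir_thm} is in place.
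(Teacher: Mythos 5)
Your proposal is correct and matches the paper's argument: the paper likewise derives the corollary directly from Lemma~\ref{early_dir_thm}, using condition~1 of Definition~\ref{confl_def} (with $Z=S$) to exclude $a\to b$ and condition~2 (with the roles of $a,b$ swapped and $Z=T$) to exclude $b\to a$. The extra observations you add (adjacency in a DAG means one of the two orientations, and the harmless case where $b$ lies on the active path inside the lemma's proof) are fine but not needed beyond what the paper states.
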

Due to the conditions stated in the definition of incompatible nodes
(Definition~\ref{confl_def}), it follows from
Lemma~\ref{early_dir_thm} that neither the edge $a \rightarrow b$
nor $a \leftarrow b$ can be in any $k$-faithful DAG.

\subsection{A Complete Criterion for Adjacency}
The following proposition underlines the importance of the notion of incompatible
nodes by showing that making such nodes nonadjacent in a 0-1 graph is not only
necessary, but also sufficient in order to obtain the skeleton of the representation:
\begin{prop}
  \label{skel_prop}
  In the 
  representation of
  $\mathcal{F}(\mathcal{I}_V^k)$ two nodes $a$ and $b$ are adjacent
  if and only if
  \begin{compactenum}
  \item[$(i)$] there is no 
  CI $(a \indep b \: | \: Z)_{\mathcal{I}_V^k}$
  for $Z \subseteq V$, $ |Z| \leq k$, and 
  \item[$(ii)$] the nodes $a$ and $b$ are not incompatible.
  \end{compactenum}
\end{prop}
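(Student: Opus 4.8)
The plan is to reduce the proposition to a statement about the individual $k$-faithful DAGs. By Definition~\ref{rep_def} the representation $G$ contains every DAG in $\mathcal{F}(\mathcal{I}_V^k)$ and is edge-minimal with this property (throughout we assume $\mathcal{I}_V^k$ is DAG-representable, so that $\mathcal{F}(\mathcal{I}_V^k)\neq\emptyset$ and $G$ is well defined). I would first observe that \emph{$a$ and $b$ are adjacent in $G$ if and only if some DAG in $\mathcal{F}(\mathcal{I}_V^k)$ has an edge between $a$ and $b$}: the ``if'' part is immediate from containment, and for the ``only if'' part, if no DAG in $\mathcal{F}(\mathcal{I}_V^k)$ had an edge between $a$ and $b$, then deleting from $G$ all (directed) edges between $a$ and $b$ would still leave a PDAG that contains every DAG in $\mathcal{F}(\mathcal{I}_V^k)$ (none of them uses such an edge), contradicting minimality of $G$. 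So it suffices to characterize when $\mathcal{F}(\mathcal{I}_V^k)$ contains a DAG in which $a$ and $b$ are adjacent.

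The ``only if'' direction of the proposition is then easy. Pick $D\in\mathcal{F}(\mathcal{I}_V^k)$ with an edge between $a$ and $b$. If some $(a\indep b\: | \: Z)_{\mathcal{I}_V^k}$ with $|Z|\le k$ existed, $k$-faithfulness would force $Z$ to $d$-separate $a$ from $b$ in $D$, contradicting adjacency; this gives $(i)$. And $(ii)$ is precisely the contrapositive of Corollary~\ref{no_pres_cor}.

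For the ``if'' direction — the substantial one — assume $(i)$ and $(ii)$. By $(ii)$ at least one of the two conditions of Definition~\ref{confl_def} fails; using the symmetry between $a$ and $b$ I would assume the first fails, i.e.
\[
 \text{for all } u,S:\quad
 a\notin S \ \land\ (u\nindep a\: | \: S)_{\mathcal{I}_V^k}\ \land\ (a\nindep b\: | \: S)_{\mathcal{I}_V^k}
 \ \Longrightarrow\ (u\nindep b\: | \: S)_{\mathcal{I}_V^k},
\]
and note that by $(i)$ the conjunct $(a\nindep b\: | \: S)_{\mathcal{I}_V^k}$ is automatic whenever $|S|\le k$. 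Starting from an arbitrary $D_0\in\mathcal{F}(\mathcal{I}_V^k)$, the plan is to transform it into a $k$-faithful DAG $D$ containing the edge $a\to b$; by the reduction this makes $a$ and $b$ adjacent in $G$. The natural move is to add $a\to b$ to $D_0$, which creates no cycle once $a$ is an ancestor of $b$ in $D_0$ (a configuration that $(i)$ makes attainable after suitable edge manipulations, since $S=\emptyset$ already forces $a$ and $b$ to share a common source in $D_0$), and to perform extra manipulations in the remaining cases. Adding $a\to b$ while $a$ is already an ancestor of $b$ changes no ancestral relation, so every $d$-separation of $D_0$ survives in $D$; the only way $k$-faithfulness could break is that some $x,y$ that are $d$-separated by a set $Z$ with $|Z|\le k$ in $D_0$ become $d$-connected in $D$, necessarily along a path $x,\dots,a\to b,\dots,y$ active given $Z$, with $a\notin Z$. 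Its initial segment from $x$ to $a$ witnesses $(x\nindep a\: | \: Z)_{D_0}$, hence $(x\nindep a\: | \: Z)_{\mathcal{I}_V^k}$, and then the displayed implication yields $(x\nindep b\: | \: Z)_{\mathcal{I}_V^k}$, i.e.\ $x$ is $d$-connected to $b$ given $Z$ already in $D_0$; splicing such a connection with the segment from $b$ to $y$ and extracting a path would then produce a $d$-connection between $x$ and $y$ given $Z$ in $D_0$ — the desired contradiction. Finally, once the skeleton is pinned down, the orientation part of the statement follows from Proposition~\ref{prop:represantation}.

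The step I expect to be the main obstacle is exactly the last one of the ``if'' direction: passing from ``$x$ is $d$-connected to $b$ given $Z$, and $b$ is $d$-connected to $y$ given $Z$'' to ``$x$ is $d$-connected to $y$ given $Z$''. This requires a careful treatment of colliders sitting at $b$ on the second segment (if that segment points \emph{into} $b$, then $b$ — or one of its descendants — must lie in $Z$, and the two connecting paths must be merged at $b$ consistently), together with a precise account of the acyclicity and reorientation bookkeeping that the sketch above glosses over. Making these cases go through is precisely what establishes that the small-conditioning criterion of Definition~\ref{confl_def} is \emph{sufficient}, and it is the part I would develop in full detail (in an appendix).
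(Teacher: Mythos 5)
Your overall strategy is the same as the paper's: reduce adjacency in the representation to the existence of \emph{some} $k$-faithful DAG containing an edge between $a$ and $b$ (this reduction via minimality is fine, as is your ``only if'' direction), and then obtain such a DAG by adding a directed edge to an arbitrary $D_0\in\mathcal{F}(\mathcal{I}_V^k)$ and arguing that $k$-faithfulness is preserved — this is exactly the content of Lemma~\ref{add_edge_thm} and Proposition~\ref{same_skel_lemma}. However, the ``if'' direction of your sketch has genuine gaps. First, the acyclicity step is not handled: you assume you can reach a configuration in which $a$ is an ancestor of $b$ ``after suitable edge manipulations'', justified only by the existence of a common source (from $S=\emptyset$), which begs the question. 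The real difficulty is this: suppose (as in your WLOG) the first condition of Definition~\ref{confl_def} fails, so $a\to b$ is the candidate edge, but in $D_0$ there is a causal path from $b$ to $a$, so $a\to b$ cannot be added acyclically; switching to $b\to a$ is not an option if the second condition of Definition~\ref{confl_def} holds, since by Lemma~\ref{early_dir_thm} that edge lies in no $k$-faithful DAG. The paper resolves this by strengthening Lemma~\ref{early_dir_thm} to exclude entire \emph{causal paths}, not just edges (Lemma~\ref{early_cp_thm}, used as Lemma~\ref{can_add_case}): the very witnesses that forbid one orientation also forbid the corresponding causal path in every $k$-faithful DAG, so the other orientation is always acyclically addable. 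This idea is absent from your proposal, and without it the case split does not close. Second, once you drop the unjustified ancestor assumption, your claim that the only danger is a new connecting path passing \emph{through} $a\to b$ is incomplete: the new edge can also create a d-connection by lying on a causal path from a collider on the connecting path down to a member of $Z$, newly unblocking that collider; the paper treats this as a separate case in the proof of Lemma~\ref{add_edge_thm} (case (b) of Figure~\ref{cases_add_tikz}).

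Third, the step you yourself flag as the main obstacle is indeed the crux, and it is not just careful bookkeeping at $b$: d-connection is not transitive, and your displayed implication only yields $(x\nindep b\:|\:Z)_{\mathcal{I}_V^k}$, i.e.\ \emph{some} active path to $b$ with no control over its terminal edge, so the splice with the segment from $b$ to $y$ can fail. The paper's fix is Lemma~\ref{right_ending_lemma}, which produces an active path ending with an arrowhead into $b$, but its hypothesis is dependence given $Z\setminus\mathrm{De}(b)$; to obtain that hypothesis one cannot argue from $x$ itself (the premise $(x\nindep a\:|\:Z')$ may fail for $Z'\subsetneq Z$) — the paper instead pivots to the node $d$, the first node on the path having $b$ as a descendant, whose causal path to $a$ avoids $Z$ and hence remains active under every subset of $Z$. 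So the missing part requires a genuinely new ingredient (the descendant-free conditioning set together with the right pivot node), not only a case analysis of colliders at $b$. In summary: same route as the paper, correct easy direction, but the three ingredients that make the sufficiency proof work — the causal-path strengthening, the collider-unblocking case, and the $\to b$-ending path via $Z\setminus\mathrm{De}(b)$ — are missing.
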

This result stems from the correctness proof of the LOCI algorithm presented in the
following section (Algorithm~\ref{alg_gen_pdag}). There, we complete the
construction of the representation by showing how
edges can be oriented. 

\section{Determining the Faithful Model} 
\label{alg_sec}
Now we are ready to discuss how to find the representation which
$k$-faithfully models the CIs given in the set $\mathcal{I}_V^k$. This
will also enable us to decide if $\mathcal{I}_V^k$ even has a causal
explanation. To answer this question, we attempt to construct the
representation and if this fails, conclude that there can be no $k$-faithful DAG.

\begin{algorithm}
  \caption{The LOCI algorithm  computes 
    the representation $G$ for a DAG-representable set
    of CIs up to order $k$. Note that we represent an undirected edge 
    $a - b$ as a pair $a \to b$ and $a \gets b$.}
  \label{alg_gen_pdag}
  \DontPrintSemicolon
  \SetKwInOut{Input}{input}\SetKwInOut{Output}{output}
  \SetKwFor{Rep}{repeat}{}{end}
  \Input{Vertex set $V$, DAG-representable set $\mathcal{I}_V^{k}$ of CIs with order $\leq k$}
  \Output{CPDAG $G$ representing $\mathcal{F}(\mathcal{I}_V^k)$} 
  Form the graph $G$ on the vertex set $V$ which has an undirected
  edge $a - b$ if 
  for every subset $Z$ of $V$, with $|Z| \leq k$, it is true $(a \nindep
  b \: | \: Z)_{\mathcal{I}_V^k}$. \;\label{alg_gen_start_line}
  \ForEach{CI $(a \indep b \: | \: Z)$ in $\mathcal{I}_V^k$ and every
    $c \in V \backslash\{a, b\}$ \label{start_for_allk}}{
    \If{$(a \nindep c \: | \:
      Z)_{\mathcal{I}_V^{k}}$, $(c \nindep b \: | \:
      Z)_{\mathcal{I}_V^{k}}$ and $c \not\in Z$ \label{cond_gen_line}}
    {Remove $a \leftarrow c$ and $c \rightarrow b$ from $G$. \; \label{alg_gen_dir_line}}}
  \label{done_line}
  \Rep{the Meek rules until no rule can be applied. \label{meek}}{ 
    1. \begin{tikzpicture}[baseline=-0.2em]
      \node (a) at (-0.75,0) {$a$};
      \node (b) at (0,0) {$b$};
      \node (c) at (0.75,0) {$c$}; 
      \draw [->](a) -- (b);
      \draw [-](b) -- (c);
      \node (r) at (1.5,0) {$\Rightarrow$};
      \node (a) at (2.25,0) {$a$};
      \node (b) at (3,0) {$b$};
      \node (c) at (3.75,0) {$c$}; 
      \draw [->](a) -- (b);
      \draw [->](b) -- (c);
    \end{tikzpicture} \;
    2. \begin{tikzpicture}[baseline=-.2em]
    \node (a) at (-0.75,0) {$a$};
    \node (b) at (0,0) {$b$};
    \node (c) at (0.75,0) {$c$}; 
    \draw [->](a) -- (b);
    \draw [->](b) -- (c);
    \draw [-] (a) edge [bend left=25] (c);
    \node (r) at (1.5,0) {$\Rightarrow$};
    \node (a) at (2.25,0) {$a$};
    \node (b) at (3,0) {$b$};
    \node (c) at (3.75,0) {$c$}; 
    \draw [->](a) -- (b);
    \draw [->](b) -- (c);
    \draw [->] (a) edge [bend left=25] (c);
  \end{tikzpicture} \;
    3. \begin{tikzpicture}[baseline=-.2em]
    \node (a) at (-0.75,0) {$a$};
    \node (c) at (0,0.3) {$c$};
    \node (d) at (0,-0.35) {$d$};
    \node (b) at (0.75,0) {$b$}; 
    \draw [-](a) -- (b);
    \draw [->](c) -- (b);
    \draw [->](d) -- (b);
    \draw [-] (a) -- (c);
    \draw [-] (a) -- (d);
    \node (r) at (1.5,0) {$\Rightarrow$};
    \node (a) at (2.25,0) {$a$};
    \node (c) at (3,0.35) {$c$};
    \node (d) at (3,-0.35) {$d$};
    \node (b) at (3.75,0) {$b$}; 
    \draw [->](a) -- (b);
    \draw [->](c) -- (b);
    \draw [->](d) -- (b);
    \draw [-] (a) -- (c);
    \draw [-] (a) -- (d);
  \end{tikzpicture}  \; 

    } \label{end_meek}
\end{algorithm}

The LOCI (Low-Order Causal Inference) algorithm for constructing the
representation is presented as Algorithm~\ref{alg_gen_pdag}. 
We note that it works for arbitrary values $k$, 
in particular for $k = 0$ for which the CI statements 
represent marginal independencies.
The algorithm  can be divided 
into three stages.

In the first stage (line~\ref{alg_gen_start_line}),
the $k$-partial graph is generated which
can be constructed by removing, from the complete undirected graph, edges corresponding 
to a CI in $\mathcal{I}_V^k$. 
We remark that, in general, one does not obtain 
this graph 
by executing e.g. the ``skeleton phase'' of the
PC algorithm~\cite{Spirtes2000} up to order $k$.
Here, only separating
sets formed by the adjacent nodes are considered. Therefore, some
separating sets of order $\leq k$ can be overlooked. Instead, it is
necessary to consider all possible separating sets $Z$ up to order
$k$.

In the second stage (lines~\ref{start_for_allk}
to~\ref{done_line}), directed edges are removed according
to the rule in Lemma~\ref{early_dir_thm}. Recall that an undirected edge 
$u - v$ is represented as a pair $u \to v$ and
$u \gets v$. Thus, removing only the edge $u \to v$ 
means the orientation of $u - v$ into $u \gets v$. 
Obviously, removing both directed edges denotes the deletion of the edge $u - v$.
The aim of the second stage is (1) to remove the remaining undirected edges which 
do not satisfy the criterion in Proposition~\ref{skel_prop}, i.e. 
the edges $u - v$ between incompatible nodes, and (2) to determine all 
v-structures. We note that in this stage, the algorithm also orients 
some further edges, which are not involved in v-structures. 

To prove the correctness, we make use of the fact
that we can always apply Lemma~\ref{early_dir_thm} to triples
$a, b, c$ (used in lines~\ref{start_for_allk} to~\ref{done_line}) 
and through this delete two directed edges $a \gets c$ and $c \to b$ at the same time.
This step ensures 
that all incompatible nodes are nonadjacent.
In particular, nodes $a$ and $c$ are incompatible 
iff the edges $a \gets c$ and $a \to c$ are removed 
at the different steps of the iteration corresponding to triples
$a,b,c$ and $\hat{a},c,a$ in lines~\ref{start_for_allk} to~\ref{done_line}:
Indeed, the two conditions (cf. Definition~\ref{confl_def})
\begin{enumerate}
  \item $(a \indep b \: | \: Z)_{\mathcal{I}_V^k}   
     \land (a \nindep c \: | \: Z)_{\mathcal{I}_V^k}  
     \land (c \nindep b \: | \: Z)_{\mathcal{I}_V^k}  
     \land c\notin Z,$
  \item $(\hat{a} \indep c \: | \: \hat{Z})_{\mathcal{I}_V^k}  
     \land (\hat{a} \nindep a \: | \: \hat{Z})_{\mathcal{I}_V^k}  
     \land (a \nindep c \: | \: \hat{Z})_{\mathcal{I}_V^k}
     \land a\notin \hat{Z}$
\end{enumerate}
are true iff the algorithm removes $a \gets c$ and $a \to c$
in line~\ref{alg_gen_dir_line}.
If, however, only the edge $a \gets c$ is removed from the
undirected edge $a - c$, the edge $a \to c$ remains, meaning 
the orientation of $a - c$ into $a \to c$. 

Moreover,   
we show that in stage two all v-structures of the representation are
oriented. Note that, in order to make sure all v-structures $x \rightarrow y
\leftarrow z$ are correctly oriented, even if $x$ and $z$ are
incompatible, it is necessary to consider all triples of nodes $a, b,
c$ and not only chains $a - c - b$ as in common causal structure
learning algorithms like the PC-algorithm~\cite{Spirtes2000}.

Finally, in the third stage (line~\ref{meek} to~\ref{end_meek}),
the algorithm orients further undirected edges 
through the Meek rules.
The graph obtained after completing stage two already characterizes a Markov
equivalence class, as the skeleton and the v-structures are determined. In order to obtain the representation, we have to
maximally extend it into a CPDAG. This is why we are able to apply the Meek rules.

Before stating the main results, 
we illustrate how the LOCI algorithm works  using
as an example instance the zero- and first-order independencies $\mathcal{I}_V^1
= \{(c \indep d \: | \: a)\}$ over $V = \{a,b,c,d\}$, that have been 
discussed in Fig.~\ref{faith:dags:ex} and~\ref{faith:dags:repr:ex}. 
In (a), (b) and (c) of Fig.~\ref{faith:dags:repr:ex} 
the graph $G$ is shown after completing stage one, two, and three, respectively. Thus,
in (a) there is no edge between $c$ and $d$ as we have the
independence $(c \indep d \: | \: a)$ in $\mathcal{I}_V^k$, while all other
edges are present. In (b) we see that the edges $c \rightarrow b$ and
$d \rightarrow b$ are oriented. Essentially, there can be no edge $c
\leftarrow b$ (or $d \leftarrow b$) as in that case $(c \indep d \: |
\: a)_{\mathcal{I}_V^k}$ cannot hold without a collider at node $b$. In this regard, 
stage two is similar to the orientation of
v-structures in the SGS or PC algorithm~\cite{Spirtes2000}. The
difference is, however, as emphasized before, that in the LOCI algorithm further
nodes can be separated during this stage. An example for this are the
incompatible nodes $a$ and $b$ of the example in
Fig.~\ref{comp:to:VP:PC:alg} and~\ref{confl_ex_01:new}.
We, moreover,
remark that, while all v-structures are detected, the result is not
always a pattern, as it is possible that even
further edges are already oriented. 
Finally, in part (c) the resulting graph $G$ is
shown. Here, the edge $a - b$ has been oriented into $a \rightarrow b$
due to the third Meek rule. As seen in Fig.~\ref{faith:dags:ex}, there are six DAGs which are $k$-faithful to
$\mathcal{I}_V^1$. Three of them contain the edge $a \rightarrow b$
and in the other three $a$ and $b$ are nonadjacent. However, the edge $a
\leftarrow b$ is in no $k$-faithful DAG which is why the orientation $a
\rightarrow b$ is correct.

We now state the main result of this paper that
the LOCI algorithm produces the required representation:
\begin{thm}
  \label{thm_alg_correct}
  The graph $G$ resulting from the LOCI algorithm (Algorithm~\ref{alg_gen_pdag}) is the
  representation of the set $\mathcal{F}(\mathcal{I}_V^k)$, if $\mathcal{I}_V^k$ 
  is DAG-representable.
\end{thm}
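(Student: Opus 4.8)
The plan is to verify the three defining properties of the representation from the bulleted list following Definition~\ref{rep_def}, working through the three stages of the algorithm in order. First I would establish the skeleton. After stage one, the graph $G$ is exactly the $k$-partial graph, i.e.\ it has an edge $a-b$ iff no CI $(a\indep b\:|\:Z)_{\mathcal{I}_V^k}$ with $|Z|\le k$ exists. Stage two only ever deletes directed edges, so at the end $a$ and $b$ are nonadjacent iff either a low-order CI separates them (handled in stage one) or both directed edges $a\to b$ and $b\to a$ were removed in line~\ref{alg_gen_dir_line}. By the analysis spelled out in the text, removing both of $a\to b$ and $b\to a$ happens exactly when the two conditions of Definition~\ref{confl_def} are met, i.e.\ when $a$ and $b$ are incompatible. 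Hence the skeleton of $G$ matches the condition of Proposition~\ref{skel_prop}. To conclude that this is the correct skeleton of the representation, I would prove both directions: Corollary~\ref{no_pres_cor} already gives that incompatible nodes (and nodes separated by a low-order CI) are nonadjacent in every $k$-faithful DAG, hence nonadjacent in the representation; for the converse, using DAG-representability I would take some $k$-faithful DAG $D_0$ and argue that whenever $a,b$ satisfy (i) and (ii) of Proposition~\ref{skel_prop}, one can exhibit a $k$-faithful DAG containing the edge $a-b$ — most naturally by modifying $D_0$, adding the edge $a\to b$ (or $a\gets b$) oriented so as not to destroy any of the low-order $d$-separations, which is possible precisely because $a$ and $b$ are not incompatible.

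Next I would handle v-structures and the remaining orientations after stage two. The claim to prove here is that the partially directed graph produced after stage two has exactly the skeleton established above and exactly the v-structures common to all $k$-faithful DAGs, and moreover that every directed edge it contains is a directed edge in every $k$-faithful DAG (it may orient more than just the v-structure edges, but never wrongly). Lemma~\ref{early_dir_thm} gives soundness of every single deletion in line~\ref{alg_gen_dir_line}: if $a\gets c$ is removed then no $k$-faithful DAG has $c\to a$, so $c\gets a$ (if the edge survives at all) is forced. For completeness of the v-structures, I would argue that if $x\to y\gets z$ is a v-structure in some $k$-faithful DAG $D$, then $x$ and $z$ are nonadjacent in $D$, hence there is a low-order CI $(x\indep z\:|\:W)$ with $y\notin W$ (because any separating set of $x,z$ in $D$ must exclude colliders and their descendants, and $y$ is a collider on the path $x\to y\gets z$); but then $(x\nindep y\:|\:W)$ and $(y\nindep z\:|\:W)$ also hold, so the triple $(x,z,y)$ with separating set $W$ triggers line~\ref{cond_gen_line} and the edges $y\to x$ and $y\to z$ (pointing away from $y$ on the wrong side) are deleted, leaving $x\to y$ and $z\to y$. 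This is the step where one must be careful that iterating over \emph{all} triples, not just chains $a-c-b$, is essential — exactly the point emphasized in the running text.

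Finally, the third stage applies the Meek rules. Here I would invoke the remark made at the end of Section~\ref{sec:Preliminaries}: the Meek rules transform any PDAG whose consistent extensions form a Markov equivalence class $[D]$ into the CPDAG $D^\ast$. So it suffices to show that the graph $H$ after stage two is such a PDAG — its skeleton and v-structures are those identified above, its further directed edges are all forced in every $k$-faithful DAG, and (crucially) its set of consistent extensions is nonempty and coincides with $\mathcal{F}(\mathcal{I}_V^k)$. The inclusion $\mathcal{F}(\mathcal{I}_V^k)\subseteq\{\text{consistent extensions of }H\}$ follows from the soundness statements above; the reverse inclusion — every consistent extension of $H$ is genuinely $k$-faithful to $\mathcal{I}_V^k$ — is exactly the content of the second sentence of Proposition~\ref{prop:represantation}, which I would either cite or prove in tandem. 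Then the Meek rules yield the CPDAG representing this class, and combining with Proposition~\ref{prop:represantation} (that this CPDAG \emph{is} the representation) finishes the proof.

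I expect the main obstacle to be the completeness direction for the skeleton, i.e.\ showing that if $a,b$ are not separated by a low-order CI and not incompatible, then some $k$-faithful DAG contains the edge $a-b$. Incompatibility as defined is a fairly intricate combinatorial condition, and one must show that its \emph{failure} gives enough slack to actually insert the edge into an existing $k$-faithful DAG while keeping every one of the $O(n^{k+2})$ low-order $d$-separations intact and creating no new low-order $d$-separation. This is the step where the gaps in the earlier work of \citeauthor{PearlWermuth94} and \citeauthor{Idel15} presumably lie, and it will require the careful auxiliary lemmas deferred to the appendix.
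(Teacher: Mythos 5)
Your overall architecture (soundness of every deletion via Lemma~\ref{early_dir_thm}, completeness of the skeleton by adding edges to an existing $k$-faithful DAG, then Meek rules on a PDAG whose consistent extensions form a Markov equivalence class) is in the spirit of the paper, but two of your concrete steps fail. First, your v-structure completeness argument rests on the inference ``$x$ and $z$ are nonadjacent in the $k$-faithful DAG $D$, hence there is a low-order CI $(x \indep z \:|\: W)$ with $y \notin W$.'' That inference is exactly what the notion of incompatibility refutes: two nodes can be nonadjacent in every $k$-faithful DAG although no CI of order $\le k$ separates them, because all separating sets in $D$ have size $>k$ and such statements are not in $\mathcal{I}_V^k$ (the pair $a,b$ in Fig.~\ref{comp:to:VP:PC:alg} is the paper's own example). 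The paper's proof of Proposition~\ref{em_vs_thm} therefore splits into two cases, and in the case where the tails of the v-structure are incompatible it uses the witnesses $u,Z$ and $v,Z'$ from Definition~\ref{confl_def} (together with $a\notin Z$, $b\notin Z'$ and the collider at the middle node) to show that line~\ref{alg_gen_dir_line} still removes the two wrongly directed edges. Your sketch covers only the ``low-order separating set exists'' case, i.e.\ it is silent precisely where the all-triples iteration you flag as essential does its work.

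Second, your justification of stage three misidentifies the consistent extensions: the claim $\mathcal{F}(\mathcal{I}_V^k)\subseteq\{\text{consistent extensions of }H\}$ (and that the two sets coincide) is false, since $k$-faithful DAGs may have strictly smaller skeletons than $H$ — three of the six DAGs in Fig.~\ref{faith:dags:ex} lack the edge $a-b$ — whereas a consistent extension must share the skeleton of $H$. The correct statement, and the paper's route, is that the consistent extensions of $G_{\mathrm{ep}}$ are exactly the \emph{edge-maximal} $k$-faithful DAGs (Propositions~\ref{same_skel_lemma} and~\ref{em_vs_thm}, Corollary~\ref{me_ce_cor}), while an arbitrary $k$-faithful DAG is only a \emph{subgraph} of one of them; this is what licenses the Meek rules and then gives both containment and minimality of $G$. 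Relatedly, citing Proposition~\ref{prop:represantation} to get faithfulness of the extensions would be circular, since in the paper that proposition is deduced from Theorem~\ref{thm_alg_correct} and Corollary~\ref{me_ce_cor}, not used to prove them. Your skeleton-completeness step (add the edge to some $D_0$ ``because $a,b$ are not incompatible'') is deferred rather than wrong, but note that the condition actually needed is finer: the particular orientation must have survived into $G_{\mathrm{ep}}$ and must not create a cycle in $D_0$, which is why the paper needs Lemmas~\ref{right_ending_lemma}, \ref{add_edge_thm} and~\ref{can_add_case} and the detour through edge-maximal DAGs.
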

Some ingredients of the proof of this theorem 
have already been stated in this and the previous section.
The complete proof can be found in the appendix. 

The result enables us to \emph{decide} 
whether a given set $\mathcal{I}_V^k$ has a causal
explanation. This is possible through the following approach: We can apply the LOCI algorithm to
$\mathcal{I}_V^k$ and check whether the resulting graph is a $k$-faithful
CPDAG. If it is, clearly there is a causal explanation of
$\mathcal{I}_V^k$, namely the produced graph (Proposition~\ref{prop:represantation}). 
If it is not, then $\mathcal{I}_V^k$ cannot have such a causal
explanation as, if this were the
case, $G$ would be the representation (Theorem~\ref{thm_alg_correct})
and, therefore, as argued above, a faithful model. Thus, we conclude:

\begin{prop}\label{prop:decision}
There exists an algorithm which for a given set $\cI_V^k$ of CIs
tests if the set is DAG-representable.
\end{prop}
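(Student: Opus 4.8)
The plan is to derive Proposition~\ref{prop:decision} directly from Theorem~\ref{thm_alg_correct} and Proposition~\ref{prop:represantation}, using the LOCI algorithm as the decision procedure. First I would run LOCI (Algorithm~\ref{alg_gen_pdag}) on the input $\cI_V^k$, producing a partially directed graph $G$. Note that stages one and two of LOCI never fail as written; they simply delete edges, and the Meek rules in stage three always terminate. So $G$ is always produced, and the work is entirely in \emph{verifying} whether $G$ witnesses DAG-representability.

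The verification step I would carry out is: check whether $G$ is a CPDAG and whether (some, equivalently every) consistent extension $D$ of $G$ is $k$-faithful to $\cI_V^k$. Concretely, one can test CPDAG-ness by the standard characterization (e.g.\ computing a consistent extension via the Dor--Tarsi/Meek extension procedure and checking it has the same skeleton and v-structures as $G$), and then, having such a $D$ in hand, directly test for each pair $a,b$ and each $Z$ with $|Z|\le k$ whether $(a\indep b\mid Z)_D$ agrees with $(a\indep b\mid Z)_{\cI_V^k}$ — this is a finite check since there are only $O(n^2\cdot\binom{n}{k})$ triples and $d$-separation is decidable in polynomial time. If both tests pass, output ``DAG-representable'' (with witness $D$); if either fails, output ``not DAG-representable''.

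Correctness of this procedure is where Theorem~\ref{thm_alg_correct} does the heavy lifting, and this is the main (only) substantive point. If $\cI_V^k$ is DAG-representable, then by Theorem~\ref{thm_alg_correct} the graph $G$ returned by LOCI \emph{is} the representation of $\cF(\cI_V^k)$, hence by Proposition~\ref{prop:represantation} it is a CPDAG whose consistent extensions are all $k$-faithful to $\cI_V^k$ — so the verification passes. Conversely, if the verification passes, then we have exhibited an explicit DAG $D$ (a consistent extension of $G$) that is $k$-faithful to $\cI_V^k$, so $\cI_V^k$ is DAG-representable by definition. This two-way implication closes the argument: the procedure answers ``yes'' exactly when $\cI_V^k$ is DAG-representable. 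Termination is immediate since every stage of LOCI terminates and the verification is a finite enumeration.

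The only real obstacle is making sure the ``verification'' step is self-contained and does not secretly re-assume DAG-representability: one must phrase it as a blind check on the concrete graph $G$ (is it a CPDAG? is its consistent extension $k$-faithful?), so that the contrapositive — if $\cI_V^k$ is \emph{not} representable then $G$ fails at least one check — is exactly what Theorem~\ref{thm_alg_correct} guarantees (were $G$ to pass, we would have a $k$-faithful DAG, contradicting non-representability). Everything else is routine: enumerating triples, running $d$-separation, and invoking the already-established Meek-rule extension machinery.
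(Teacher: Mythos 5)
Your proposal is correct and follows essentially the same route as the paper: run LOCI on $\cI_V^k$, then check whether the output is a $k$-faithful CPDAG, with the ``yes'' direction supplied by Proposition~\ref{prop:represantation} (a passing graph yields a $k$-faithful consistent extension) and the ``no'' direction by Theorem~\ref{thm_alg_correct} (if the set were representable, $G$ would be the representation and hence would pass). Your extra details — extracting a concrete consistent extension and enumerating the finitely many $d$-separation checks of order at most $k$ — merely make explicit the verification step the paper states more briefly.
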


\section{Experimental Analysis}
\label{exp_sec}
The representation $G$ of a set $\mathcal{I}_V^k$ is in itself a very
useful graph as it faithfully models the CIs of order $\leq
k$. But apart from this, it can also be used as an approximation of the true
underlying causal structure. It can even be argued that it is the best
approximation obtained through the given conditional independence
information. Because of the minimality of the representation, removing a further edge from $G$ would
mean that some DAG $k$-faithful to $\mathcal{I}_V^k$ is not contained in
it anymore.

Thus, we investigate in this section how well the representation $G$ of a set of low-order CIs
is able to capture the underlying true causal structure. We do this
experimentally by generating a sparse DAG which we then try to recover
with the LOCI algorithm. We confine our analysis to the case
$k=1$ which allows us a comparison with the 0-1 graph
model. For this, we compare the number of adjacencies (meaning the number
of edges in the skeleton) in the 0-1 graph, the CPDAG $G$ and the true
DAG. This enables us, in particular, to estimate the influence of
removing edges between incompatible nodes. Additionally, we investigate
how many v-structures from the true DAG can already be found in the
CPDAG $G$, giving us an indication how well the edge orientations
are captured in the representation.

We begin by explaining how we generated the set of independencies
$\mathcal{I}_V^1$. An undirected graph with $n$ nodes is drawn randomly. More precisely,
each edge is present with probability $d / (n-1)$, meaning every node has expected degree $d$.
Afterwards, a topological ordering of the nodes is
randomly chosen in order to obtain a DAG $D$ from the generated
graph. From this DAG we can read off
all zero- and first-order independencies through the notion of
d-separation and thereby produce the set $\mathcal{I}_V^1$ needed for
the LOCI algorithm. 

The representation $G$ can be obtained by performing
the LOCI algorithm on $\mathcal{I}_V^1$ and the 0-1 graph can
be easily obtained as well by removing edges which correspond to independencies in
$\mathcal{I}_V^1$. Through the generation procedure we also have
access to the underlying true DAG $D$. First, we look at the number of
adjacencies in the different graphs (see Table~\ref{table_adj}). The
displayed numbers are the means of 100 independent trials
\begin{table}
  \centering
  \begin{tabular}{rrrrr} \toprule 
    \multicolumn{2}{c}{DAG} & \multicolumn{3}{c}{Number of edges}
    \\ \cmidrule(l){1-2} \cmidrule(l){3-5}
    $n$   & $d$ & 0-1         & skel.\ $G$ & skel.\ $D$  \\ \midrule
    $20$  & $2$ & $27{.}21$      & $25{.}43$      & $19{.}81$  \\
    $20$  & $3$ & $57{.}88$      & $51{.}07$      & $29{.}79$  \\
    $20$  & $4$ & $96{.}57$      & $87{.}47$      & $40{.}03$  \\
    $20$  & $5$ & $126{.}73$     & $119{.}49$     & $50{.}21$  \\ \midrule
    $60$  & $2$ & $77{.}85$      & $69{.}69$      & $58{.}97$  \\
    $60$  & $3$ & $226{.}03$     & $160{.}43$     & $88{.}55$  \\
    $60$  & $4$ & $512{.}89$     & $346{.}06$     & $118{.}29$ \\
    $60$  & $5$ & $820{.}69$     & $579{.}67$     & $148{.}08$ \\ \midrule
    $100$ & $2$ & $125{.}87$     & $113{.}37$     & $99{.}40$  \\
    $100$ & $3$ & $413{.}68$     & $266{.}92$     & $149{.}42$ \\
    $100$ & $4$ & $1{,}061{.}39$ & $598{.}11$     & $199{.}34$ \\
    $100$ & $5$ & $1{,}905{.}20$ & $1{,}118{.}60$ & $248{.}92$ \\
    \bottomrule
  \end{tabular}
  \caption{We consider random DAGs with $n$ nodes and expected node degree $d$. This means each edge is present with
    probability $d / (n-1)$.  We present the number of edges in the 0-1
    graph, the skeleton of representation $G$ and the skeleton of the true
    DAG $D$. All values are the means of 100 independent trials.}
  \label{table_adj}
\end{table}
and we consider graphs with $20$, $60$ and $100$ nodes and expected node
degree $2$, $3$, $4$ and $5$. Clearly, the numbers are nonincreasing
from left to right. To be more precise, it holds that $A_{D} \subseteq
A_{G} \subseteq A_{\text{0-1}}$ where $A$ is the set of all adjacencies. This is
due to the fact that every $k$-faithful DAG is contained in $G$ and that
$G$ is constructed by removing edges from the 0-1 graph.

We begin the analysis by exploring how close $G$ is to the true causal structure.
It can be seen that, in particular for larger graphs, we are only able
to reasonably estimate the underlying structure up to expected degree
$3$. For example for $n = 100$ and $d = 4$, the representation $G$
contains almost three times as many adjacencies as $D$. For $n = 100$
and $d = 2$, the estimation is very close to the true DAG and even for $d = 3$
the ratio between the number of adjacencies in $G$ and $D$ is quite reasonable, being well below
two. Notably, in the latter setting the improvement over the 0-1
graphs is significant. Actually, the difference in the number of adjacencies
is larger between the 0-1 graph and $G$ than between $G$ and $D$. More
generally, we see that for larger graphs the gap between the 0-1 graph and
$G$ is substantial, meaning there is a great number of incompatible
nodes. This underlines the importance of removing edges between such
nodes in order to find a graphical model which is $k$-faithful to a set of
independencies $\mathcal{I}_V^1$. 
We can conclude that it is possible to estimate the true causal
structure reasonably well, given that it is sparse. Moreover, it is
crucial to remove the edges between incompatible nodes.
But, apart from the
adjacencies (or in other words the skeleton), the representation also
contains directed edges and, thus, also
v-structures. Therefore, it is interesting to investigate how many
v-structures from the true DAG can already be found in $G$. These
numbers are presented in Table~\ref{table_vs}. Here, we show the
number of v-structures in $D$, in $G$ and those which are in both $D$
and $G$. For better readability, the numbers are normalized by the
number of nodes $n$.
We consider the same setting as above. 

\begin{table}
  \centering
  \begin{tabular}{rrrrr} \toprule 
    \multicolumn{2}{c}{DAG} & \multicolumn{3}{c}{Number of v-s per node}
    \\ \cmidrule(l){1-2} \cmidrule(l){3-5}
    $n$   & $d$ & v-s in $G$    &  v-s in $D$ & v-s in both \\ \midrule
    $20$  & $2$ & $1{.}427$       & $0{.}561$  & $0{.}552$  \\
    $20$  & $3$ & $5{.}276$       & $1{.}190$  & $1{.}111$  \\
    $20$  & $4$ & $10{.}784$      & $2{.}024$  & $1{.}649$  \\
    $20$  & $5$ & $13{.}556$      & $2{.}938$  & $2{.}031$  \\ \midrule
    $60$  & $2$ & $1{.}445$       & $0{.}614$  & $0{.}612$  \\
    $60$  & $3$ & $10{.}904$      & $1{.}343$  & $1{.}317$  \\
    $60$  & $4$ & $42{.}979$      & $2{.}369$  & $2{.}200$    \\
    $60$  & $5$ & $90{.}279$      & $3{.}650$  & $3{.}119$  \\ \midrule
    $100$ & $2$ & $1{.}383$       & $0{.}654$  & $0{.}653$ \\
    $100$ & $3$ & $12{.}982$      & $1{.}441$  & $1{.}424$ \\
    $100$ & $4$ & $59{.}859$      & $2{.}521$  & $2{.}419$  \\
    $100$ & $5$ & $161{.}390$     & $3{.}872$  & $3{.}504$ \\
    \bottomrule
  \end{tabular}
  \caption{In the same setting as in Table~\ref{table_adj} we present
    the number of v-structures (v-s, for short) in $G$, in $D$ and those in both graphs.}
  \label{table_vs}
\end{table}

We investigate first how many v-structures are in both $G$ and
$D$ compared to the number of v-structures in $D$. This shows how many
of the v-structures of the true underlying DAG
the LOCI algorithm is able to detect. We
can see that almost all v-structures are found even for
larger expected node degrees $4$ or $5$. E.g. for $n = 100$ and $d =5$,
the LOCI algorithm discovers $3{.}504$ out of $3{.}872$
v-structures per node.

While the LOCI algorithm finds most of the v-structures in $D$, we can see that
there are many more additional v-structures in the representation $G$. While this is in
reasonable limits for sparse graphs (for $d=2$ we see roughly a
doubling of the number of v-structures), the difference is much more extreme in
denser graphs. In particular, for $n = 100$ and $d = 5$ there are
$161{.}39$ v-structures per node in $G$ and only $3{.}872$ in $D$.
This is due to the fact that, as we have seen in Table~\ref{table_adj}, there are many more
edges in $G$. At first glance, however, the increase in v-structures
is much more extreme (a factor more than forty) than the increase in
edges (a factor slightly less than five). But recall that these additional edges
have an important property. As we know that $G$ is a $k$-faithful CPDAG, both $G$ and $D$ contain the same zero- and
first-order CIs. Therefore, all additional edges in $G$ lead to \emph{no}
further dependencies of order zero or one. It is reasonable to assume
that these additional edges are, thus, part of a disproportionate number of
v-structures as they do not create new paths and thereby new
dependencies. 
\section{Discussion} 
\label{disc_sec}
This paper has investigated the problems of determining
how, for a given set of CI statements of order up to $k$, 
\emph{all} DAGs $k$-faithful to the set can be represented 
and how such a representation can be computed.
We solve both problems showing that such faithful DAGs can be represented
in a compact way as a CPDAG $G$ and then 
proving that the representation $G$ can be computed efficiently. 

The experimental results show 
that, for small values of $k$, this graphical representation 
is also useful as a good estimator 
of the underlying true causal structure in case
of sparse models.
It is considerably better than the $k$-partial graph because further
edges are removed due to the concept of incompatible nodes which allows us
to infer the existence of higher-order independencies.
An additional advantage over $k$-partial graphs  is that
we also obtain edge orientations and can, through this, recover a large portion
of the v-structures in the true DAG.

Our experiments are conducted in the oracle model 
where we assume all CI statements up to order $k$ are known. 
This has the reason that, in this model, we are able to estimate 
best how many incompatible edges are removed.
In future work, it would be interesting to analyze 
how the proposed algorithm performs if 
one would use statistical tests to find the independence statements.
Another interesting topic for future research is to 
extend our algorithmic technique to compute the $k$-faithful
representation, or a good approximation of it, by asking 
conditional independence queries in such a way that the 
number of queries is significantly smaller than the number of 
all CI statements of order up to $k$. This would be 
interesting both in the oracle model and 
when using statistical tests to estimate independencies.  

\section*{Acknowledgments}
This work was supported by the Deutsche Forschungsgemeinschaft (DFG) grant LI 634/4-2.

\bibliography{main}
\bibliographystyle{aaai}

\vspace*{8mm}
\appendix

\centerline{\bf \Large Appendix}

\section{Proof of Correctness of Algorithm~\ref{alg_gen_pdag}}
In this section we rigorously prove Theorem~\ref{thm_alg_correct}
which states that the graph resulting from Algorithm~\ref{alg_gen_pdag} is the representation of the set of $k$-faithful DAGs
to $\mathcal{I}_V^k$. This theorem is the main result of the paper.  
During the proof we, moreover, obtain further results. A few of them were already stated in the main paper.
In particular, as stated below in more detail, Proposition~\ref{prop:represantation} follows
immediately from Corollary~\ref{me_ce_cor} and
Theorem~\ref{thm_alg_correct}, and Proposition~\ref{skel_prop} follows
from Proposition~\ref{same_skel_lemma}. 

In the proof, we will at first consider the PDAG obtained 
from Algorithm~\ref{alg_gen_pdag} at line~\ref{done_line} (after the
for loop) before applying the Meek rules. Throughout this whole
section we will refer to this PDAG as $G_{\text{ep}}$ while we will refer
to the output graph of Algorithm~\ref{alg_gen_pdag} as $G$.
Considering $G_{\text{ep}}$ instead of $G$ will simplify some proofs and we will show the
correctness of the three Meek rules afterwards.

Before we begin, we prove the following lemma. It can be viewed as a stronger version
of Lemma~\ref{early_dir_thm} in the main paper. 

\begin{lemma}
  \label{early_cp_thm}
  Given a set of CIs $\mathcal{I}_V^k$. If we have $(u
  \indep b \: | \: Z)_{\mathcal{I}_V^k}$, $(u \nindep a \:
  | \: Z)_{\mathcal{I}_V^k}$, $(a \nindep b \: | \: Z)_{\mathcal{I}_V^k}$ and $a \not\in Z$, then no DAG $k$-faithful to
  $\mathcal{I}_V^k$ contains a causal path from $a$ to $b$.
\end{lemma}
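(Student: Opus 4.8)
The plan is to argue by contradiction. Suppose $D$ is a DAG $k$-faithful to $\mathcal{I}_V^k$ that contains a causal path $a = r_0 \to r_1 \to \cdots \to r_m = b$ with $m \ge 1$. Since $|Z| \le k$, $k$-faithfulness transfers all three hypotheses to $D$, so $(u \indep b \: | \: Z)_D$, $(u \nindep a \: | \: Z)_D$ and $(a \nindep b \: | \: Z)_D$ hold. From the latter two, fix a path $\pi$ from $u$ to $a$ and a path $\sigma$ from $a$ to $b$, both active given $Z$. The goal is to splice these together with the causal path into a single \emph{active walk} from $u$ to $b$, which contradicts $(u \indep b \: | \: Z)_D$ (using the standard fact that an active walk implies an active path, exactly as in the proof of Lemma~\ref{early_dir_thm}).

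The splicing is by cases on how $\pi$ meets $a$. If the $\pi$-edge incident to $a$ is oriented \emph{away} from $a$, concatenate $\pi$ with $\sigma$ at $a$: the node $a$ then lies in a chain or a fork on the resulting walk (never a collider, since the $\pi$-side arrow leaves $a$), hence it is active because $a \notin Z$; as $\pi$ and $\sigma$ are active, the whole $u$–$b$ walk is active. If instead the $\pi$-edge \emph{enters} $a$, distinguish two situations. When none of the interior nodes $r_1, \dots, r_{m-1}$ lies in $Z$, concatenate $\pi$ with the causal path $a \to r_1 \to \cdots \to b$: at $a$ we obtain the chain $\,\cdot \to a \to r_1\,$ (active, since $a \notin Z$) and every $r_i$ is a chain node outside $Z$ (active), so again the walk is active. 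When some $r_j \in Z$ (necessarily $1 \le j \le m-1$), concatenate $\pi$ with $\sigma$ at $a$: if the $\sigma$-edge leaves $a$, then $a$ is a chain node and active; if the $\sigma$-edge also enters $a$, then $a$ is a collider, but it has the descendant $r_j \in Z$ along the causal path, so the collider is active. In every case the $u$–$b$ walk is active, the desired contradiction.

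The crux — and the only place the hypothesis $(a \nindep b \: | \: Z)$ (hence the auxiliary path $\sigma$) is genuinely needed — is the last situation, where both $\pi$ and $\sigma$ arrive at $a$: the obvious concatenations place a collider at $a$, and one must exhibit a member of $Z$ among the descendants of $a$; an in-$Z$ node on the causal path supplies exactly this, while if no such node exists the causal path itself furnishes the active route. (By contrast, Lemma~\ref{early_dir_thm} is the degenerate case $m = 1$, where appending the single edge $a \to b$ can never create a blocking collider, which is why that weaker statement does not require $(a \nindep b \: | \: Z)$.) The remaining work is the routine $d$-separation bookkeeping at the splice node $a$ and at the interior nodes of $\pi$, $\sigma$, and the causal path, which I have sketched above.
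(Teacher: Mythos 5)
Your proof is correct and follows essentially the same route as the paper's: both splice the active $u$--$a$ path with either the causal path (when no interior node of it lies in $Z$) or the active $a$--$b$ path, using an in-$Z$ node on the causal path, which is a descendant of $a$, to open the potential collider at $a$. Your version merely makes the case analysis and the walk-to-path step more explicit than the paper does.
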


\begin{proof}
  Assume, for the sake of contradiction, that there is a $k$-faithful DAG
  $D$ which contains a causal path from $a$ to $b$ even though the
  stated conditions hold.
  It follows from $(u \nindep a \: | \: Z)_{\mathcal{I}_V^k}$ and the
  $k$-faithfulness of $D$ that there is a path from $u$ to $a$ which is not blocked by
  $Z$. But as $u$ is supposed to be independent of $b$ given
  $Z$, some node on the causal path from $a$ to $b$ (we call this node $n$) has to be in
  $Z$ blocking this path. Moreover, we know that there is a path from $a$
  to $b$ not blocked by $Z$ as $(a \nindep b \: | \:
  Z)_{\mathcal{I}_V^k}$ has to hold. Thus, there has to be a collider at
  node $a$ (because of $a \not\in Z$) blocking this possible path from
  $u$ to $b$. But this collider would be unblocked by node
  $n$ as it is a successor of $a$ and in $Z$. It follows that $(u \indep b \; | \;
  Z)_{\mathcal{I}_V^k}$ does not hold. A contradiction.
\end{proof}

The following statement follows directly (as
Corollary~\ref{no_pres_cor} did):
\begin{cor}
Assume $\mathcal{I}_V^k$ is a set of CIs. If the nodes $a$ and $b$ are incompatible, no DAG
$k$-faithful to $\mathcal{I}_V^k$ contains a causal path between $a$ and $b$.
\end{cor}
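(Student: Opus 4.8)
The plan is to derive the statement directly from Lemma~\ref{early_cp_thm}, mirroring exactly how Corollary~\ref{no_pres_cor} was obtained from Lemma~\ref{early_dir_thm}. Suppose $a$ and $b$ are incompatible, and let $u,v,S,T$ be witnesses as in Definition~\ref{confl_def}. The proof amounts to invoking Lemma~\ref{early_cp_thm} twice, once for each of the two conditions in that definition.

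First I would apply Lemma~\ref{early_cp_thm} to condition~1 of Definition~\ref{confl_def} with $Z := S$: the hypotheses $(u \indep b \: | \: S)_{\mathcal{I}_V^k}$, $(u \nindep a \: | \: S)_{\mathcal{I}_V^k}$, $(a \nindep b \: | \: S)_{\mathcal{I}_V^k}$ and $a \notin S$ hold verbatim, so the lemma yields that no DAG $k$-faithful to $\mathcal{I}_V^k$ contains a causal path from $a$ to $b$. Then I would apply Lemma~\ref{early_cp_thm} a second time to condition~2, with the roles of $a$ and $b$ interchanged and with $u$ replaced by $v$, taking $Z := T$: using symmetry of conditional (in)dependence, i.e. $(a \nindep b \: | \: T)_{\mathcal{I}_V^k}$ is the same as $(b \nindep a \: | \: T)_{\mathcal{I}_V^k}$, the hypotheses $(v \indep a \: | \: T)_{\mathcal{I}_V^k}$, $(v \nindep b \: | \: T)_{\mathcal{I}_V^k}$, $(b \nindep a \: | \: T)_{\mathcal{I}_V^k}$ and $b \notin T$ hold, so the lemma yields that no DAG $k$-faithful to $\mathcal{I}_V^k$ contains a causal path from $b$ to $a$.

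Combining the two conclusions, no DAG $k$-faithful to $\mathcal{I}_V^k$ contains a causal path between $a$ and $b$ in either direction, which is the claim. There is no genuine obstacle once Lemma~\ref{early_cp_thm} is available; the only point worth emphasizing is that a causal path ``between $a$ and $b$'' is directed, so both orientations must be ruled out separately, which is precisely why Definition~\ref{confl_def} carries two symmetric conditions rather than one.
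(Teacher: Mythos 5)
Your proposal is correct and is exactly the argument the paper intends: the corollary is stated as following directly from Lemma~\ref{early_cp_thm} in the same way Corollary~\ref{no_pres_cor} followed from Lemma~\ref{early_dir_thm}, namely by applying the lemma once to each of the two conditions in Definition~\ref{confl_def} to rule out a causal path in each direction. Your write-up simply makes explicit the details the paper leaves implicit.
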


We begin the proof that $G$ is the representation with the statement that every DAG $k$-faithful to a set of independencies
$\mathcal{I}_V^k$ is a subgraph of $G_{\mathrm{ep}}$.
\begin{lemma}
  \label{thm_gen_sg}
  $G_{\text{ep}}$ contains every DAG in $\mathcal{F}(\mathcal{I}_V^k)$.
\end{lemma}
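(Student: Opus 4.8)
\textbf{Approach.} The plan is to show that an arbitrary DAG $D \in \mathcal{F}(\mathcal{I}_V^k)$ has every edge present in $G_{\mathrm{ep}}$ with the correct orientation, recalling that an undirected edge $a-b$ of $G_{\mathrm{ep}}$ is encoded as the pair $a\to b$, $a \gets b$, so ``$D$ is a subgraph of $G_{\mathrm{ep}}$'' amounts to: each directed edge $a\to b$ of $D$ survives in $G_{\mathrm{ep}}$. I would argue by contradiction: suppose $a \to b$ is an edge of the $k$-faithful DAG $D$ but $a\to b \notin G_{\mathrm{ep}}$. The edge $a\to b$ can fail to be in $G_{\mathrm{ep}}$ for exactly one of two reasons, corresponding to the two stages that build $G_{\mathrm{ep}}$ (line~\ref{alg_gen_start_line} and the for-loop of lines~\ref{start_for_allk}--\ref{done_line}).

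\textbf{Case 1: $a-b$ was never added in line~\ref{alg_gen_start_line}.} Then there is some $Z$ with $|Z|\le k$ and $(a \indep b \: | \: Z)_{\mathcal{I}_V^k}$. By $k$-faithfulness of $D$ this means $(a \indep b \: | \: Z)_D$, i.e.\ $Z$ $d$-separates $a$ and $b$ in $D$. But $D$ contains the edge $a\to b$, so the single-edge path $a\to b$ is unblocked by any $Z$ not containing\,---\,well, containing an endpoint does not help either since $d$-separation only blocks via a middle node on the path; a length-one path has no middle node and can never be blocked. This contradicts $(a\indep b\:|\:Z)_D$.

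\textbf{Case 2: $a\to b$ was removed in line~\ref{alg_gen_dir_line}.} Then the algorithm, processing some CI $(u \indep v \: | \: Z)$ in $\mathcal{I}_V^k$ and some node $c$, executed line~\ref{alg_gen_dir_line} with the roles matching $a \gets c$ or $c \to b$ being the deleted pair. Unwinding the naming in the pseudocode (the deleted edges are ``$a\leftarrow c$ and $c\rightarrow b$'' for a CI $(a\indep b\:|\:Z)$), the edge $a\to b$ of $D$ gets deleted precisely when there is a CI of the form $(a' \indep b \:|\: Z)_{\mathcal{I}_V^k}$ with the current $c$ equal to our $a$, satisfying $(a' \nindep a \:|\: Z)_{\mathcal{I}_V^k}$, $(a \nindep b \:|\: Z)_{\mathcal{I}_V^k}$ and $a\notin Z$; the deleted pair is ``$a'\gets a$ and $a\to b$''. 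Setting $u := a'$, this is exactly the hypothesis of Lemma~\ref{early_dir_thm}: $(u\indep b\:|\:Z)_{\mathcal{I}_V^k}$, $(u\nindep a\:|\:Z)_{\mathcal{I}_V^k}$, $a\notin Z$. Lemma~\ref{early_dir_thm} then says no $k$-faithful DAG contains $a\to b$, contradicting $a\to b \in D$. (Symmetrically, the other member $a' \gets a$ of the deleted pair is handled by the same lemma with the roles of the endpoints swapped.)

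\textbf{Conclusion and the main obstacle.} Combining the two cases, every directed edge of every $D \in \mathcal{F}(\mathcal{I}_V^k)$ is present (with its orientation) in $G_{\mathrm{ep}}$, so $E(D)\subseteq E(G_{\mathrm{ep}})$ under the two-directed-edge encoding of undirected edges, which is the claim. I expect the only delicate point to be bookkeeping: carefully matching the variable names of Definition~\ref{confl_def}/the for-loop to the hypotheses of Lemma~\ref{early_dir_thm}, and making sure that \emph{both} orientations of a possibly-deleted edge are covered (one directly by the lemma, the other by the lemma applied with the endpoints interchanged, using that the CI relation is symmetric in its first two arguments). Everything else is an immediate appeal to $k$-faithfulness of $D$ together with Lemma~\ref{early_dir_thm} (and the trivial fact that a one-edge path cannot be $d$-separated).
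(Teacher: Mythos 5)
Your proof is correct and follows essentially the same route as the paper's: the graph built in line~\ref{alg_gen_start_line} contains every $k$-faithful DAG (your Case~1 makes explicit the one-edge-path argument the paper leaves implicit), and every edge deleted in line~\ref{alg_gen_dir_line} satisfies, after renaming via the symmetry of the CI relation, the hypotheses of Lemma~\ref{early_dir_thm} and hence lies in no $k$-faithful DAG (your Case~2). The bookkeeping you flag is exactly the step the paper also handles by restating the deletion condition in the lemma's variable names, so there is no substantive difference.
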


\begin{proof}
    Every $k$-faithful DAG is contained in the graph formed in
  line~\ref{alg_gen_start_line} of Algorithm~\ref{alg_gen_pdag}. Moreover, for every edge $a
  \rightarrow b$  removed in line~\ref{alg_gen_dir_line} the following holds:
  \begin{align*}
    &\exists Z \subseteq V \text{ with } |Z| \leq k \quad \exists u \in V \\
    &[(u \indep b \: | \: Z)_{\mathcal{I}_V^{k}}, \: (u \nindep a \: |
    \: Z)_{\mathcal{I}_V^{k}}, \: (a \nindep b \: | \:
    Z)_{\mathcal{I}_V^{k}}, \: a \not\in Z]
  \end{align*}
  By Lemma~\ref{early_dir_thm} the edge $a \rightarrow b$ is not part
  of any $k$-faithful DAG. 
\end{proof}

Our goal is to show that the edge maximal DAGs have the same skeleton
as $G_{\text{ep}}$.  We have already shown with Lemma~\ref{thm_gen_sg}
that all pairs of nodes, which are adjacent in an edge maximal $k$-faithful
DAG, are also adjacent in $G_{\mathrm{ep}}$ because every $k$-faithful DAG
is contained in $G_{\mathrm{ep}}$. It remains to show that all pairs of
nodes which are adjacent in $G_{\mathrm{ep}}$ are also adjacent in an
edge maximal DAG. 

We begin with the following technical lemma which is necessary for the proof of
Proposition~\ref{add_edge_thm} below. 

\begin{lemma}
  \label{right_ending_lemma}
  Given a DAG $D \in \mathcal{F}(\mathcal{I}_V^k)$, two nodes $a,b \in
  V$ and a set $Z$ with $|Z| \leq k$ such that the following holds:
  $(a \nindep b \: | \: Z)_{\mathcal{I}_V^k}$, $(a \nindep b \: |
  \: Z \backslash \mathrm{De}(b))_{\mathcal{I}_V^k}$ and $a \not \in \mathrm{De}(b)$. Then there
  is a path d-connecting $a$ and $b$ in $D$ given $Z$ which ends with
  $\rightarrow b$. 
\end{lemma}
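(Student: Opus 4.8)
\textbf{Proof plan for Lemma~\ref{right_ending_lemma}.}
The plan is to take a path $\pi$ in $D$ that $d$-connects $a$ and $b$ given $Z$ (which exists because $(a \nindep b \: | \: Z)_{\mathcal{I}_V^k}$ and $D$ is $k$-faithful), and to argue that, if it does not already end with an arrowhead at $b$, i.e. with $\rightarrow b$, then it must end with $b \rightarrow$, so that the final edge on $\pi$ points away from $b$. In that case the last node of $\pi$ before $b$, call it $w$, is a child of $b$, hence $w \in \mathrm{De}(b)$. The idea is then to show that we can ``repair'' the path so that it avoids $\mathrm{De}(b)$ entirely, contradicting the assumption $(a \nindep b \: | \: Z \backslash \mathrm{De}(b))_{\mathcal{I}_V^k}$; alternatively and more directly, I would reduce to the case where $\pi$ is chosen to be a shortest $d$-connecting path and show a shortest such path cannot end with $b\rightarrow w$.

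Concretely, first I would fix a shortest path $\pi = (a = p_0, p_1, \dots, p_t = b)$ that $d$-connects $a$ and $b$ given $Z$ in $D$. Suppose for contradiction that the last edge is $b \rightarrow p_{t-1}$, so $p_{t-1} \in \mathrm{De}(b)$. Since $p_{t-1}$ is a non-endpoint on a $d$-connecting path, it is either a non-collider outside $Z$ or a collider whose descendant set meets $Z$. If $p_{t-1}$ is a collider on $\pi$, then the edge $p_{t-2} \rightarrow p_{t-1} \leftarrow b$ forces some descendant of $p_{t-1}$ (hence a descendant of $b$) to lie in $Z$; but then consider the path obtained by deleting $b$ from the end: actually the cleaner route is to observe that because $p_{t-1}\in\mathrm{De}(b)$ and $a\notin\mathrm{De}(b)$, the path $\pi$ must leave $\mathrm{De}(b)$ at some point, and the first edge crossing out of $\mathrm{De}(b)$ when traversed from $b$ creates a collider (an arrowhead must point into $\mathrm{De}(b)$ from outside since $\mathrm{De}(b)$ is closed under children). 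So there is a collider $p_{i-1}\rightarrow p_i \leftarrow p_{i+1}$ with $p_i\in\mathrm{De}(b)$; for $\pi$ to be $d$-connecting, a descendant of $p_i$ — which is also a descendant of $b$ — lies in $Z$. Now replace the sub-path of $\pi$ from $b$ to $p_i$ by the causal path $b \rightsquigarrow p_i$ inside $\mathrm{De}(b)$: this yields a walk from $a$ to $b$ whose only possibly-unblocked colliders are unchanged, and reducing it to a path gives a $d$-connecting path strictly shorter, or one ending with $\rightarrow b$ — contradicting minimality or giving the claim. The degenerate cases $p_{t-1}\in Z$ handled by $p_{t-1}$ being a non-collider in $Z$ (impossible, non-colliders in $Z$ block) are dispatched directly.

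Cleaning this up, the key structural fact I would isolate as a sub-claim is: on any $d$-connecting path given $Z$ that enters the set $\mathrm{De}(b)$ from the $b$-side, the ``last exit'' from $\mathrm{De}(b)$ towards $a$ is at a collider node $p_i\in\mathrm{De}(b)$ whose descendant lies in $Z$, and then the portion of the path lying inside $\mathrm{De}(b)$ between $b$ and $p_i$ can be short-circuited by a directed path without destroying $d$-connectedness, while the portion from $p_i$ to $a$ stays intact. Iterating (or taking a minimal path from the start) removes all of $\mathrm{De}(b)$ from the relevant conditioning, delivering a path $d$-connecting $a$ and $b$ given $Z\setminus\mathrm{De}(b)$ that ends with $\rightarrow b$; in particular, if one already had a $d$-connecting path for $Z\setminus\mathrm{De}(b)$ by hypothesis, one checks it cannot end with $b\rightarrow$ since its penultimate node would be in $\mathrm{De}(b)$ yet outside the conditioning set, making it an unblocked non-collider contradiction only if it is a non-collider — the real content is that a child of $b$ as penultimate node forces it to be a collider, forcing a descendant of $b$ into $Z\setminus\mathrm{De}(b)$, which is absurd. \emph{The main obstacle} I anticipate is making the short-circuiting argument rigorous without introducing new colliders that might be blocked: one has to track carefully that substituting a directed segment $b\rightsquigarrow p_i$ only affects the status of the endpoint $p_i$ (which stays a collider with an in-$Z$ descendant) and of $b$ (an endpoint, so its collider/non-collider status is irrelevant), and that after reducing the resulting walk to a simple path the $d$-connection is preserved — this is the standard but delicate ``walks can be shortened to paths while preserving active status'' lemma, which I would either cite or prove inline.
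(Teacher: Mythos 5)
Your central ``short-circuit'' step does not go through as sketched, and the way it is set up it cannot be repaired. First, when you replace the segment of $\pi$ between $b$ and the collider $p_i$ by a directed path from $b$ to $p_i$, the interior nodes of that directed path become non-colliders of the new walk, and they are all descendants of $b$; nothing prevents them from lying in $Z$ -- indeed $Z\cap\mathrm{De}(b)\neq\emptyset$ is exactly the interesting case, since that is what unblocks the collider $p_i$ -- so the new walk may be blocked, and ``only possibly-unblocked colliders are unchanged'' is not the right invariant. Second, even if the walk stayed active, it still leaves $b$ through a child (the inserted directed path starts with $b\to\cdot$), so after reduction to a simple path it again ends with $b\rightarrow$, and it need not be shorter than $\pi$ (the directed path from $b$ to $p_i$ can be arbitrarily long); so neither horn of your ``strictly shorter or ends with $\rightarrow b$'' dichotomy is available. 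Most tellingly, this part of your argument never invokes the hypothesis $(a \nindep b \: | \: Z\setminus\mathrm{De}(b))_{\mathcal{I}_V^k}$, and the lemma is false without it: in the DAG $a\to w\leftarrow b$ with $Z=\{w\}$ the unique active path ends with $b\to w$, all other hypotheses hold, and your shortest-path machinery produces no contradiction there (the short-circuit from $b$ to $p_i=w$ is the edge itself). A smaller slip: the first edge of $\pi$ crossing out of $\mathrm{De}(b)$ need not create a collider at the exit node (the exit node can be a chain node, with the collider strictly inside the $\mathrm{De}(b)$-segment); only the existence of some collider at a node of $\mathrm{De}(b)$ is guaranteed.

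Your closing paragraph is much closer to the paper's actual proof, which works with a path that is active given $Z':=Z\setminus\mathrm{De}(b)$ (this is where the extra hypothesis enters): assuming no $Z$-active path ends with $\rightarrow b$, the paper shows no $Z'$-active path can exist at all, because a $Z'$-active path avoiding $\mathrm{De}(b)$ would itself be $Z$-active and end with $\rightarrow b$, while a $Z'$-active path meeting $\mathrm{De}(b)$ at an internal node $x$ forces (using $a\notin\mathrm{De}(b)$ and acyclicity) a collider that is a descendant of $b$, which $Z'$ cannot unblock. In your version the key sub-claim ``a child of $b$ as penultimate node forces it to be a collider'' is false -- the path can leave $b$ along a chain $b\to w\to\cdots$ with $w\notin Z'$ unblocked; the contradiction in that case only comes from following the chain inside $\mathrm{De}(b)$ until it either reaches $a$ causally (impossible) or turns at a collider in $\mathrm{De}(b)$ that $Z'$ cannot unblock. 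You also stop at ``a path $d$-connecting $a$ and $b$ given $Z\setminus\mathrm{De}(b)$ that ends with $\rightarrow b$,'' whereas the lemma demands $d$-connection given $Z$; the missing (and needed) observation is that such a path can be taken disjoint from $\mathrm{De}(b)$, whence its non-colliders avoid $Z\cap\mathrm{De}(b)$ and its colliders stay unblocked, so it is active given $Z$ as well.
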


\begin{proof}
  Assume, for the sake of contradiction, that there is no path
  d-connecting $a$ and $b$ given $Z$ ending with $\rightarrow b$ in $D$.
  This means every path ends with the edge $\leftarrow b$. Moreover, we know that there
  cannot be a causal path from $b$ to $a$ because $a \not\in
  \mathrm{De}(b)$.
  Then, it is clear that every path $p$ which
  d-connects $a$ and $b$ given $Z$ in $D$ contains at least one
  collider. We also note that on $p$ every node unblocking
  the collider closest to $b$ is a descendant of $b$. We will now
  consider the set $Z' = Z \: \backslash \: \mathrm{De}(b)$ meaning we
  remove all nodes from $Z$ which are a descendant of $b$. We will
  show that there can be no path d-connecting $a$ and $b$ given
  $Z'$. This will contradict the assumption that $(a \nindep b \:
  | \: Z')_{\mathcal{I}_V^k}$ holds for $Z' = Z \: \backslash \: \mathrm{De}(b)$.

  Every path d-connecting $a$ and $b$ given $Z'$ contains a node $x
  \in \mathrm{De}(b)$. If this were not the case and there actually is
  such a path which contains no node in $\mathrm{De}(b)$, then
  this path would d-connect $a$ and $b$ given $Z$ as well. Moreover,
  this path would have to end with the edge $\rightarrow b$ (else the
  node preceding $b$ on the path is a descendant of $b$). 
  But we have assumed above, for the sake of contradiction, that there
  is no path d-connecting $a$ and $b$ given $Z$ in $D$ ending with
  $\rightarrow b$. 
  \begin{figure}
    \centering
    \begin{tikzpicture}
      \node (a1) at (0, 0) {$a$};
      \node (dots11) at (1, 0) {$\dots$}
      edge [->] (a1);
      \node (c1) at (2,0) {$c_1$}
      edge [<-] (dots11);
      \node (bdots1) at (2, -1) {$\vdots$}
      edge [<-] (c1);
      \node (d1) at (2, -2) {$d_1$}
      edge [<-] (bdots1);
      \node (dots12) at (3, 0) {$\dots$}
      edge [->] (c1);
      \node (x1) at (4, 0) {$x$}
      edge [->] (dots12)
      edge [->, dotted, bend right] (a1);
      \node (dots13) at (5,0) {$\dots$}
      edge [->] (x1);
      \node (dots14) at (5, 0.5) {$\dots$}
      edge [->, bend right] (x1);
      \node (b1) at (6,0) {$b$}
      edge [<-] (dots13)
      edge [->, bend right] (dots14);
      \node (l1) at (0, -1) {(a)};
      \node (a2) at (0, -4) {$a$};
      \node (dots21) at (1, -4) {$\dots$}
      edge [->] (a2);
      \node (x2) at (2, -4) {$x$}
      edge [<-] (dots21);
      \node (dots22) at (3, -4) {$\dots$}
      edge [<-] (x2);
      \node (c2) at (4,-4) {$c_2$}
      edge [<-] (dots22);
      \node (bdots2) at (4, -5) {$\vdots$}
      edge [<-] (c2);
      \node (d2) at (4, -6) {$d_2$}
      edge [<-] (bdots2);
      \node (dots23) at (5,-4) {$\dots$}
      edge [->] (c2);
      \node (dots24) at (4, -2.75) {$\dots$}
      edge [->, bend right] (x2);
      \node (b1) at (6,-4) {$b$}
      edge [<-] (dots23)
      edge [<-, dotted, bend right] (x2)
      edge [->, bend right] (dots24);
      \node (l2) at (0, -5) {(b)};
    \end{tikzpicture}
    \caption{The two cases considered in the proof of
      Lemma~\ref{right_ending_lemma}. In (a) there is an edge
      $\leftarrow x$ on the path between $a$ and $b$. A causal path
      from $x$ to $a$ (dotted line) is impossible because then there
      would be a causal path from $b$ to $a$. We show that the
      collider $c_1$ is unblocked. In (b) the edge $x \rightarrow$ is
      part of the path between $a$ and $b$. A causal path from $x$ to
      $b$ (dotted line) is impossible because this would imply a
      cycle. We show that the collider $c_2$ is unblocked.}
    \label{cc_lemma_cases}
  \end{figure}
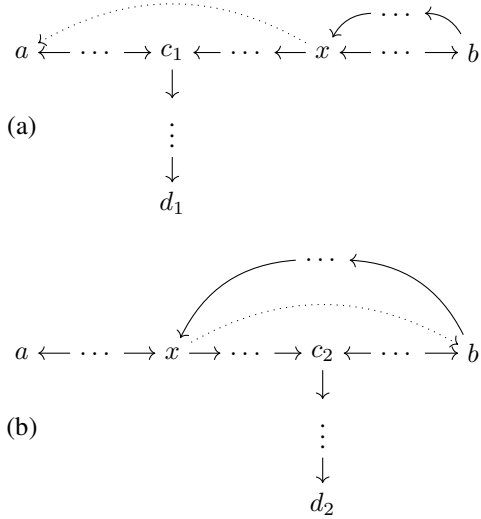
  
  We will consider a path $p'$ d-connecting $a$ and $b$ given $Z'$
  which contains a node $x \in \mathrm{De}(b)$. This node cannot be a
  collider $\rightarrow x \leftarrow$ in $p'$, because $x$ is not in
  $Z'$ and neither is any descendant $y$ of $x$, as $y$ is by
  transitivity a descendant of $b$ as well. Thus, the collider
  $\rightarrow x \leftarrow$ would be unblocked. It follows
  that in $p'$ there is an edge $\leftarrow x$ or an edge $x
  \rightarrow$. We investigate these two cases which are also
  displayed in Figure~\ref{cc_lemma_cases}: 
  \begin{enumerate}
  \item
    Consider the edge $\leftarrow x$ is in $p'$. This case is shown in
    part (a) of Figure~\ref{cc_lemma_cases}. We denote the subpath
    between $a$ and $x$ of $p'$ as $p'_{a-x}$. This subpath cannot be
    causal from $x$ to $a$ as then there would be a causal path from
    $b$ to $a$ because $x$ is a descendant of $b$. But we required
    that $a \not\in \mathrm{De}(b)$ holds.

    This means that there has to be a collider on $p'_{a-x}$. We
    will look at the collider $c_1$ closest to $x$. The collider $c_1$,
    however, cannot be unblocked by a node $d_1$ in $Z'$. This is because
    $d_1$ would be a descendant of $b$.
  \item
    Consider the edge $x \rightarrow$ is in $p'$. This case is shown in
    part (b) of Figure~\ref{cc_lemma_cases}. We denote the subpath
    between $x$ and $b$ of $p'$ as $p'_{x-b}$. This subpath cannot
    be causal from $x$ to $b$ as then there would be a cycle because
    $x$ is a descendant of $b$.

    It follows that there is a collider on the subpath $p'_{x-b}$. We
    look at the collider $c_2$ closest to $x$. This collider cannot be
    unblocked by a node $d_2$ in $Z'$ because $d_2$ would be a
    descendant of $b$.
  \end{enumerate}
  We have seen that there cannot be a path d-connecting $a$ and $b$
  given $Z'$ in $D$. This is a contradiction to the requirement that  $(a \nindep b \:
  | \: Z')_{\mathcal{I}_V^k}$ holds for $Z' =  Z \: \backslash \: \mathrm{De}(b)$. Therefore, we conclude that indeed there is a path d-connecting
  $a$ and $b$ given $Z$ ending with $\rightarrow b$ in $D$.
\end{proof}

The following lemma is of central importance for this
section. We show that every edge in $G_{\text{ep}}$ can be added to a
$k$-faithful DAG $D$ iff this does not produce a cycle. This is an
important step towards showing that the edge maximal DAGs $k$-faithful to
$\mathcal{I}_V^k$ have the same skeleton as $G_{\text{ep}}$. 

\begin{lemma}
  \label{add_edge_thm}
  Given a DAG $D \in \mathcal{F}(\mathcal{I}_V^{k})$ and $a,b \in V$ nonadjacent in
  $D$. The DAG $D ' = D \cup \{a \rightarrow b\}$ is $k$-faithful to
  $\mathcal{I}_V^k$ iff $a \not\in \mathrm{De}_D(b)$ and $a \rightarrow b \in
  G_{\mathrm{ep}}$ hold. 
\end{lemma}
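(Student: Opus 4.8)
The plan is to establish the two implications separately, leaning on two facts already available: (a)~$G_{\text{ep}}$ contains every $k$-faithful DAG (Lemma~\ref{thm_gen_sg}); and (b)~adding an edge to a DAG can only enlarge descendant sets, hence can only turn blocked paths into $d$-connecting ones, never the reverse.

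\emph{The implication ``$\Rightarrow$''.} Suppose $D' = D \cup \{a\to b\}$ is $k$-faithful to $\mathcal{I}_V^k$. Being $k$-faithful, $D'$ is in particular a DAG, so $D$ has no causal path from $b$ to $a$, i.e.\ $a\notin\mathrm{De}_D(b)$. Moreover $D'\in\mathcal{F}(\mathcal{I}_V^k)$, so by Lemma~\ref{thm_gen_sg} every edge of $D'$ occurs in $G_{\text{ep}}$; in particular $a\to b\in G_{\text{ep}}$.

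\emph{The implication ``$\Leftarrow$''.} Assume $a\notin\mathrm{De}_D(b)$ and $a\to b\in G_{\text{ep}}$. The first assumption means that adding $a\to b$ creates no directed cycle, so $D'$ is a DAG. For $k$-faithfulness we must show $(x\indep y\mid Z)_{\mathcal{I}_V^k}\Leftrightarrow (x\indep y\mid Z)_{D'}$ for all $x,y$ and all $Z$ with $|Z|\le k$. By fact~(b), every $d$-connecting path of $D$ remains $d$-connecting in $D'$, so $(x\indep y\mid Z)_{D'}\Rightarrow(x\indep y\mid Z)_{D}$ for \emph{every} $Z$; since $D$ is $k$-faithful this gives $(x\indep y\mid Z)_{D'}\Rightarrow(x\indep y\mid Z)_{\mathcal{I}_V^k}$ for $|Z|\le k$. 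For the converse it remains to show that no \emph{new} low-order dependency appears, i.e.\ $(x\indep y\mid Z)_{D}\Rightarrow(x\indep y\mid Z)_{D'}$ whenever $|Z|\le k$.

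\emph{The core step.} Suppose, for contradiction, that $(x\indep y\mid Z)_D$ but $(x\nindep y\mid Z)_{D'}$ for some $|Z|\le k$, and pick a shortest path $\pi$ that $d$-connects $x$ and $y$ given $Z$ in $D'$. Since no path of $D$ is active given $Z$, $\pi$ must exploit the new edge in one of two ways: (i)~$\pi$ traverses $a\to b$; or (ii)~$\pi$ is a path of $D$ possessing a collider $c^\ast$ whose descendant set in $D'$ — but not in $D$ — meets $Z$, which forces $a\in\mathrm{De}_D(c^\ast)\cup\{c^\ast\}$, $a\notin Z$, and $(\{b\}\cup\mathrm{De}_D(b))\cap Z\neq\emptyset$. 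Recall (from line~\ref{alg_gen_start_line} together with Lemma~\ref{early_dir_thm}) that $a\to b\in G_{\text{ep}}$ is equivalent to the conjunction of: there is no CI $(a\indep b\mid Z')$ with $|Z'|\le k$; and there is no pair $(u,Z')$ with $|Z'|\le k$, $a\notin Z'$, $(u\indep b\mid Z')_{\mathcal{I}_V^k}$ and $(u\nindep a\mid Z')_{\mathcal{I}_V^k}$. In case~(i) we split $\pi$ at the edge $a\to b$ (reversing $\pi$ if needed so that $a$ is reached before $b$) into a $D$-path $\pi_1$ from $x$ to $a$ and a $D$-path $\pi_2$ from $b$ to $y$. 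The first clause above plus the $k$-faithfulness of $D$ give $(a\nindep b\mid Z)_D$ and $(a\nindep b\mid Z\setminus\mathrm{De}_D(b))_D$, the latter with $|Z\setminus\mathrm{De}_D(b)|\le k$; since also $a\notin\mathrm{De}_D(b)$, Lemma~\ref{right_ending_lemma} produces an active $D$-path from $a$ to $b$ given $Z$ ending with $\to b$, which can be spliced in place of $a\to b$ to recover an active $D$-path from $x$ to $y$ given $Z$, contradicting $(x\indep y\mid Z)_D$. In case~(ii), following $\pi$ from $x$ to (the activated collider closest to $x$, again called)~$c^\ast$ and continuing along a causal $D$-path from $c^\ast$ to $a$ gives an active $D$-path from $x$ to $a$ given $Z$ with $a\notin Z$; tracking the $Z$-node that lies below $b$ one then obtains either an active $D$-path from $x$ to $y$ given $Z$ (a contradiction as before) or precisely a forbidden pair $(u,Z')=(x,Z)$ for the second clause — again a contradiction with $a\to b\in G_{\text{ep}}$. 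Hence $D'$ is $k$-faithful.

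\emph{Main obstacle.} The setting-up and the easy inclusion are routine; the work is in the core step. Its delicate point is the path surgery: splicing Lemma~\ref{right_ending_lemma}'s path (or the causal path into $a$) into $\pi$ yields a \emph{walk} that may revisit vertices and whose colliders and non-colliders must be re-checked against $Z$, so one must argue — typically using the minimality of $\pi$ and by taking the activated collider closest to an endpoint — that a genuine \emph{simple} active $D$-path remains. The case bookkeeping on the directions of $\pi$'s edges at $a$ and at $b$, and the degenerate situations ($x=a$, $y=b$, or $b\in Z$), also live here. A secondary point is that every conditioning set used must stay of order $\le k$ so that the $k$-faithfulness of $D$ and Lemma~\ref{right_ending_lemma} remain applicable — this is exactly why one passes to $Z\setminus\mathrm{De}_D(b)$ rather than to some larger separator of $a$ and $b$.
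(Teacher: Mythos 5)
Your overall architecture matches the paper's: the easy direction via Lemma~\ref{thm_gen_sg}, and for the converse a split between the case where the new edge lies on the $d$-connecting path and the case where it lies on a causal path activating a collider, with Lemma~\ref{right_ending_lemma} supplying a replacement path into $b$. The gap is in the path surgery itself, which you flag as the ``main obstacle'' but do not resolve. In case~(i) you splice the Lemma~\ref{right_ending_lemma} path for the pair $(a,b)$ directly at $a$. If the prefix $\pi_1$ reaches $a$ through an edge $w \rightarrow a$ (so that $a$ was a non-collider on $\pi$ only because the next edge was $a \rightarrow b$) and the replacement path happens to start with $a \leftarrow z$, the spliced walk has a collider at $a$; since $a \notin Z$ and nothing guarantees $\mathrm{De}_D(a) \cap Z \neq \emptyset$, the walk may be blocked, and neither minimality of $\pi$ nor choosing an extreme activated collider repairs this, because those devices control $\pi$, not how the replacement path attaches. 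The paper's proof avoids exactly this: it splices not at $a$ but at the node $d$, the \emph{earliest} node on $p'_{u-b}$ with $b \in \mathrm{De}_{D'}(d)$; minimality of $d$ forces the edge preceding $d$ to point out of $d$, so no collider can arise at the junction, and Lemma~\ref{right_ending_lemma} is applied to the pair $(d,b)$, whose required dependencies $(d \nindep b \: | \: Z')_D$ for $Z' \subseteq Z$ are obtained not from the absence of a CI between $a$ and $b$ but from the fact that otherwise line~\ref{cond_gen_line} would have removed $a \rightarrow b$ with $u = d$. This use of $G_{\mathrm{ep}}$-membership to certify dependence between $d$ (not $a$) and $b$ is the idea your sketch is missing.

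Case~(ii) has the same deficiency in sharper form. Your dichotomy ``either an active $D$-path from $x$ to $y$ given $Z$ or a forbidden pair $(x,Z)$'' is not established: if $(x \nindep b \: | \: Z)_D$ holds, you still must build a $u$--$b$ way and a $v$--$b$ way \emph{both ending with} $\rightarrow b$, so that $b$ becomes a collider activated by the $Z$-node in $\mathrm{De}_D(b)$, and verify the junctions as above; this is the bulk of the paper's argument (the construction of $w_{u-b}$ and $w_{v-b}$ via the same node $d$), and your ``a contradiction as before'' inherits the unresolved splice from case~(i). So the route is the right one, but the two ideas that make it work --- backing up to $d$, and deriving $(d \nindep b \: | \: Z')_D$ from $a \rightarrow b \in G_{\mathrm{ep}}$ --- are absent from your proposal.
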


\begin{proof}
  We show two directions.
  We begin by showing that if the DAG $D' = D \cup \{a \rightarrow b\}$ is
  $k$-faithful to $\mathcal{I}_V^k$, then $a  \not\in \mathrm{De}_D(b)$ and $a \rightarrow b \in
  G_{\mathrm{ep}}$ hold. Clearly, $a$ cannot be in $\mathrm{De}_D(b)$ as then
  there would be a cycle in $D'$. Moreover,
  every $k$-faithful DAG is contained in $G_{\mathrm{ep}}$
  (Lemma~\ref{thm_gen_sg}) and because $D'$ is $k$-faithful it follows that $a \rightarrow
  b$ is in $G_{\mathrm{ep}}$.

  We will now show the more interesting direction that if $a \not\in \mathrm{De}_D(b)$ and $a \rightarrow b \in
  G_{\mathrm{ep}}$ are satisfied, the DAG $D' = D \cup \{a \rightarrow b\}$ is $k$-faithful to
  $\mathcal{I}_V^k$. We prove this by showing that the following
  holds:
  \begin{align*}
    &\forall Z \subseteq V \text{ with } |Z| \leq k \quad \forall u,v \in V \\
    &[(u \indep v \: | \:
    Z)_{D'} \iff (u \indep v \: | \: Z)_{D}]
  \end{align*}
  We show two directions. We begin with the direction $(u \indep v \: | \: Z)_{D'} \implies (u \indep v \: | \: Z)_{D}$. 
  Every conditional independence of order $\leq k$ in $D'$ is also in $D$ because $D$ is
  a subgraph of $D'$.
  The second direction $(u \nindep v \: | \: Z)_{D'} \implies (u \nindep v \: | \: Z)_{D}$ is more intricate.
  We will prove that every conditional dependence of order $\leq k$ in $D'$
  is also in $D$ by considering a path $p'$ which d-connects $u$ and $v$ given a set $Z$ in
  $D'$. Then we show that there will also be a path $p$ in $D$ not blocked
  by $Z$.

  There are two cases to consider displayed in Figure~\ref{cases_add_tikz}.
  \begin{figure}
    \centering
    \begin{tikzpicture}
      \node (v1) at (6.4, 1) {$v$};
      \node (rinv1) at (5.5, 1) {$\dots$}
      edge[-] (v1);
      \node (b1) at (4.6, 1) {$b$}
      edge[-] (rinv1);
      \node (a1) at (3.6, 1) {$a$}
      edge[->] (b1);
      \node (linv1) at (2.7, 1) {$\dots$}
      edge[->] (a1);
      \node (d1) at (1.8, 1) {$d$}
      edge[->] (linv1)
      edge[->, dashed, bend left, red] (b1);
      \node (llinv1) at (0.9, 1) {$\dots$}
      edge[<-] (d1);
      \node (u1) at (0, 1) {$u$}
      edge[-] (llinv1);
      \node (l1) at (-0.3, 3.1) {(a)};
      \draw (-0.3, 0.7) -- (-0.3, 0.6) -- (6.7, 0.6) -- (6.7,
      0.7);
      \node (pd1) at (3.2, 0.3) {$p'$};
      \draw[red] (-0.3, 1.9) -- (-0.3, 2.3) -- (4.6, 2.3) -- (4.6,
      1.9);
      \draw[red] (1.8, 2.3) -- (1.8, 2);
      \node[red] (wub1) at (2.2, 2.5) {$w_{u-b}$};
      \node[red] (qdb1) at (3.2, 2.1) {$q_{d-b}$};
      \node[red] (pdud1) at (0.8, 2.1) {$p'_{u-d}$};
      \draw[red] (-0.3, 2.3) -- (-0.3, 2.7) -- (6.7, 2.7) -- (6.7,
      2.3);
      \draw[red] (4.6, 2.7) -- (4.6, 2.3);
      \node[red] (pdbv1) at (5.6, 2.5) {$p'_{b-v}$};
      \node[red] (p1) at (3.6, 2.9) {$w$};

      \node (v2) at (7.2, -6) {$v$};
      \node (rrinv2) at (6.3, -6) {$\dots$}
      edge[-] (v2);
      \node (dd2) at (5.4, -6) {$d'$}
      edge[->] (rrinv2);
      \node (rinv2) at (4.5, -6) {$\dots$}
      edge[<-] (dd2);
      \node (c2) at (3.6,-6) {$c$}
      edge[<-] (rinv2);
      \node (linv2) at (2.7, -6) {$\dots$}
      edge[->] (c2);
      \node (d2) at (1.8, -6) {$d$}
      edge[->] (linv2);
      \node (llinv2) at (0.9, -6) {$\dots$}
      edge[<-] (d2);
      \node (u2) at (0, -6) {$u$}
      edge[-] (llinv2);
      \node (binv2) at (3.6, -5.2) {$\vdots$}
      edge[<-] (c2);
      \node (a2) at (3.6, -4.4) {$a$}
      edge[<-] (binv2);
      \node (b2) at (3.6, -3.6) {$b$}
      edge[<-] (a2)
      edge[<-, dashed, bend left, red] (dd2)
      edge[<-, dashed, bend right, red] (d2);
      \node (bbinv2) at (3.6, -2.8)  {$\vdots$}
      edge[<-] (b2);
      \node (x2) at (3.6, -2) {$x$}
      edge[<-] (bbinv2);
      \node (l2) at (-0.3, -0.6) {(b)};
      \draw (-0.3, -6.3) -- (-0.3, -6.4) -- (7.5, -6.4) -- (7.5,
      -6.3);
      \node (pd2) at (3.9, -6.7) {$p'$};
      \draw[red] (-0.3, -1.8) -- (-0.3, -1.4) -- (3.6, -1.4);
      \draw[red] (1.8, -1.8) -- (1.8, -1.4);
      \draw[red] (3.6, -1.8) -- (3.6, -1.4) -- (7.5, -1.4) -- (7.5,
      -1.8);
      \draw[red] (5.4, -1.8) -- (5.4, -1.4);
      \node[red] (wub2) at (1.6, -1.2) {$w_{u-b}$};
      \node[red] (wvb2) at (5.6, -1.2) {$w_{v-b}$};
      \node[red] (qdb) at (2.7, -1.6) {$q_{d-b}$};
      \node[red] (qddb) at (4.6, -1.6) {$q_{d'-b}$};
      \draw[red] (-0.3, -1.4) -- (-0.3, -1) -- (7.5, -1) -- (7.5,
      -1.4);
      \node[red] (p2) at (3.6, -0.8) {$w$};
      \node[red] (pdud2) at (0.8, -1.6) {$p'_{u-d}$};
      \node[red] (pdvdd2) at (6.5, -1.6) {$p'_{v-d}$};
      \draw[red] (3.6, -1) -- (3.6, -1.4);
    \end{tikzpicture}
    \caption{The two cases in which the edge $a \rightarrow b$ can be part
      of a path $p'$ d-connecting $u$ and $v$ given $Z$ in $D'$. In (a) the edge $a
      \rightarrow b$ is on the path $p'$. In (b) the edge $a \rightarrow
      b$ is part of a chain which unblocks a collider $c$ on the path
      $p'$. Moreover, it is indicated in red how a way $w$ connecting
      $u$ and $v$ in $D$ given $Z$ looks like. It is a concatenation
      of subpaths of $p'$ with the new path $q_{d-b}$ (and $q_{d'-b}$
      in part (b)) which does not
      use the edge $a \rightarrow b$ (as this edge is only in $D'$ and
      not in $D$). It is therefore vital to show that the path
      $q_{d-b}$ exists. Note that for the concenation to work we make
      sure that $q_{d-b}$ ends with $\rightarrow b$ and that $d$ is no
      collider. }

    \label{cases_add_tikz}
  \end{figure}
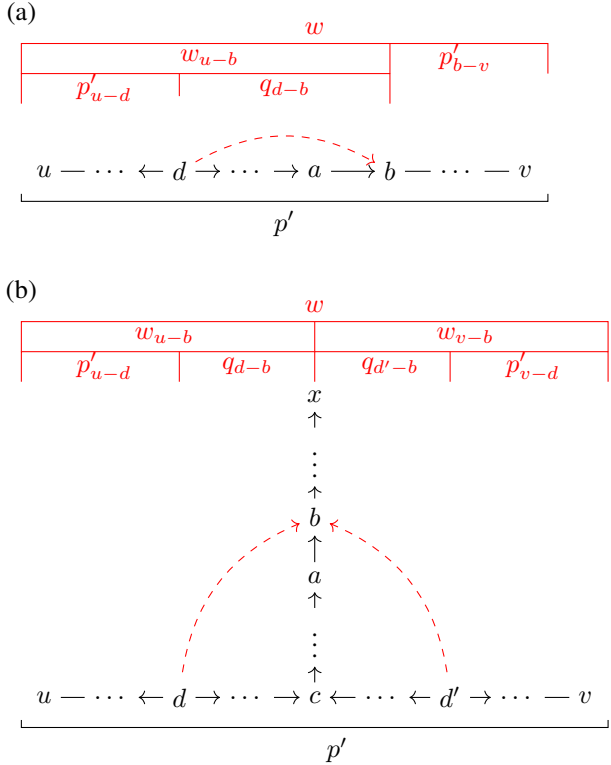
  The case (a) describes the situation when the edge $a \rightarrow b$
  is on the path $p'$ d-connecting $u$ and $v$ given $Z$ in $D'$.
  The case (b) appears when a collider $c$ on the path $p'$ is unblocked by the descendant
  $x$ which is in $Z$ and the edge $a \rightarrow b$ is on the causal
  path from $c$ to $x$. The nodes $d$ and $d'$ as well as the red
  arrows and boxes
  visualize later parts of the proof and can be ignored for now.
  It is clear that any further occurrence
  of the edge $a \rightarrow b$ in $p'$ would be redundant. 
  Moreover, it is obvious that if none of the two cases applies and the edge is
  neither present in $p'$ nor takes part in unblocking a collider, the
  same path $p'$ will also exist in $D$. 

  We prove that for the two cases in Figure~\ref{cases_add_tikz}
  there is a path $p$ d-connecting $u$ and $v$ given $Z$ in $D$.
  We do this by showing that a way $w_{u-b}$ connecting $u$ and $b$
  given $Z$ exists which does not
  contain the edge $a \rightarrow b$, but is still ending with
  $\rightarrow b$. We
  will first argue that if such a way $w_{u-b}$ exists, then there will be a path $p$ d-connecting $u$ and $v$ given
  $Z$ in $D$. Afterwards, we prove the existence of $w_{u-b}$. 
  
  In both cases illustrated in Figure~\ref{cases_add_tikz} we can construct the desired
  path $p$. In case (a) we have a way $w$ which is the concatenation of the way
  $w_{u-b}$ and the path $p'_{b-v}$ which is the subpath between $b$
  and $v$ of path $p'$. The concatenation of $w_{u-b}$ and $p'_{b-v}$
  is valid because $w_{u-b}$ ends with $\rightarrow b$ just as $p'_{u-b}$
  did and because of the fact that $p'$ is a valid path. If $b$ is no
  collider in $p'$, it will also not be a collider in $w$ and, vice
  versa, if it is an unblocked collider in $p'$, then it is also an
  unblocked collider in $w$.
  In case (b) the way $w$ is the concatenation
  of the way $w_{u-b}$ and $w_{v-b}$ (we see below that this way
  exists as well by
  symmetry) as the node $x$ unblocks the collider at node $b$. 
  Finally, we know that the existence of a way which d-connects $u$ and $v$ given
  $Z$ implies the existence of a path with the same property. This
  means we are able to obtain the desired path $p$ in both cases and it follows that
  $(u \nindep v \: | \: Z)_D$ holds.
  
  Thus, it remains to find a way $w_{u-b}$ d-connecting $u$ and $b$
  given $Z$ in $D$ which ends with $\rightarrow b$ under the
  assumption that there is a path $p'_{u-b}: u = v_1, v_2, \dots ,
  v_{l-1} = a, v_l = b$ of length $l$ which d-connects $u$ and
  $b$ given $Z$ in $D'$ and ends with the edge $a \rightarrow b$.
  Let
  $d = v_i$ be the node with the minimal $i$ such that $b \in
  \mathrm{De}_{D'}(v_i)$ holds. Then either $d = u$ or we have $v_{i-1}
  \leftarrow d$ on $p'_{u-b}$. We will use this fact below to argue
  that there can be no collider at node $d$. Moreover, $v_j \not\in Z$ holds
  for $i \leq j < l$, because the path from $d = v_i$ to $b$ is causal
  and we assumed that $p'_{u-b}$ is a valid path d-connecting $u$ and $b$
  given $Z$. In particular, $a \not\in Z$ follows as $i \leq
  l-1$ holds for $d$.
  We will show that there is a path $q_{d-b}$
  d-connecting $d$ and $b$ in $D$ which ends with $\rightarrow
  b$. Concatenating the subpath $p'_{u-d}$ with this path
  $q_{d-b}$ will result in the required way $w_{u-b}$ because there
  can be no collider at node $d$ and $d \not\in Z$ holds. 

  The path $q_{d-b}$ exists in $D$ because of the following argument: The
  node $d$ cannot be a descendant of $b$ because then with $a$
  being a descendant of $d$, it would follow that $a$ is a descendant
  of $b$ contradicting our assumption that $a \not\in \text{De}_D(b)$.  Moreover every node $v_j$ with $j \geq i$ is
  not in $Z$ as seen above. Then the statement $(d \nindep b \: | \: Z')_D$
  holds for every subset $Z'$ of $Z$
  because $(d \indep b \: | \: Z')_D$ would imply the following
  contradiction: We know that $(a \nindep b \: | \: Z')_D$ (this holds
  for every $Z'$ with $|Z'| \leq k$ because the edge $a \rightarrow b$
  is in $G$) and also $(d \nindep a \: | \: Z')_D$ hold (because of the fact that no node $v_j$ with $j \geq i$ is
  in $Z$ meaning the same follows for $Z'$ as it is a subset and
  because there is a causal path from $d$ to $a$ by definition of $d$).
  Note that because of the $k$-faithfulness of $D$, the
  same statements hold according to $\mathcal{I}_V^k$ as well. 
  With $a \not\in Z$ (and therefore also $a \not \in Z'$) the edge $a \rightarrow b$
  would have been removed from $G_{\mathrm{ep}}$ because
  these are exactly the conditions checked in
  line~\ref{cond_gen_line} of Algorithm~\ref{alg_gen_pdag}.
  However, this would mean that we are not able to add the edge $a \rightarrow b$
  to $D'$. A contradiction. This means that $(d \nindep b \: | \:
  Z')_D$ holds for every subset $Z'$ of $Z$ and therefore in
  particular for $Z' = Z \backslash \mathrm{De}(b)$. 
   With Lemma~\ref{right_ending_lemma} it
  follows that there is a path $q_{d-b}$ d-connecting $d$ and $b$
  given $Z$ ending with $\rightarrow b$.  
\end{proof}

In addition to Lemma~\ref{add_edge_thm}, we need the following Lemma for the proof of Theorem~\ref{same_skel_lemma} below. 
\begin{lemma}
  \label{can_add_case}
  If $a \leftarrow b \in G_{\mathrm{ep}}$
  and $a \rightarrow b \not\in G_{\mathrm{ep}}$, it follows that $b \not\in
  \mathrm{De}_D(a)$ holds for every DAG $D \in \mathcal{F}(\mathcal{I}_V^k)$.
\end{lemma}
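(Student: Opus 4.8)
The plan is to trace back the single reason why the directed edge $a \to b$ is missing from $G_{\mathrm{ep}}$ and to hand the conditional independence responsible for its deletion to the strengthened directed-edge lemma, Lemma~\ref{early_cp_thm}. First I would note that the hypothesis $a \gets b \in G_{\mathrm{ep}}$ forces the undirected edge $a - b$ to have been present in the graph built in line~\ref{alg_gen_start_line}: there, $a - b$ (encoded as the pair $a \to b$, $a \gets b$) is inserted whenever it is present at all, and lines~\ref{start_for_allk}--\ref{done_line} only \emph{delete} edges. Hence $a \to b$ was present after line~\ref{alg_gen_start_line}, and since $a \to b \notin G_{\mathrm{ep}}$ it must have been removed in some iteration of the loop, say the one processing the CI $(\alpha \indep \beta \: | \: Z)$ and the node $\gamma$, which in line~\ref{alg_gen_dir_line} deletes the pair $\alpha \gets \gamma$ and $\gamma \to \beta$.

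Next I would carry out the (purely syntactic) matching of the edge $a \to b$ against these two deleted edges. This forces $\gamma = a$ and splits into two cases: either $\alpha = b$ (so that $a \to b$ is the edge $\gamma \to \alpha$, written $\alpha \gets \gamma$), or $\beta = b$ (so that $a \to b$ is the edge $\gamma \to \beta$). The condition tested in line~\ref{cond_gen_line}, namely $(\alpha \indep \beta \: | \: Z)_{\mathcal{I}_V^k}$, $(\alpha \nindep \gamma \: | \: Z)_{\mathcal{I}_V^k}$, $(\gamma \nindep \beta \: | \: Z)_{\mathcal{I}_V^k}$, $\gamma \notin Z$, then --- after substituting $\gamma = a$ and using symmetry of $\indep$ --- yields in both cases a vertex $u$ (take $u = \beta$ in the first case, $u = \alpha$ in the second) and a set $Z$ with $|Z| \le k$ such that $(u \indep b \: | \: Z)_{\mathcal{I}_V^k}$, $(u \nindep a \: | \: Z)_{\mathcal{I}_V^k}$, $(a \nindep b \: | \: Z)_{\mathcal{I}_V^k}$ and $a \notin Z$. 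These are exactly the premises of Lemma~\ref{early_cp_thm}, so that lemma gives that no DAG $k$-faithful to $\mathcal{I}_V^k$ contains a causal path from $a$ to $b$; equivalently, $b \notin \mathrm{De}_D(a)$ for every $D \in \mathcal{F}(\mathcal{I}_V^k)$, which is the assertion.

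The only point that requires any care --- and hence the ``main obstacle'' here, modest as it is --- is the bookkeeping of the second paragraph: one has to respect the convention that $a - b$ is stored as the pair $a \to b,\ a \gets b$, identify $a \to b$ correctly with one of the two edges removed in line~\ref{alg_gen_dir_line}, and observe that the extra premise $(a \nindep b \: | \: Z)_{\mathcal{I}_V^k}$ --- the one that distinguishes Lemma~\ref{early_cp_thm} from Lemma~\ref{early_dir_thm} --- is literally one of the conjuncts checked in line~\ref{cond_gen_line} (it equals $(\alpha \nindep \gamma \: | \: Z)_{\mathcal{I}_V^k}$ in the first case and $(\gamma \nindep \beta \: | \: Z)_{\mathcal{I}_V^k}$ in the second). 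Once this correspondence is pinned down, everything else is immediate, and no property of $G_{\mathrm{ep}}$ beyond Lemma~\ref{early_cp_thm} is needed.
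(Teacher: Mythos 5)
Your proof is correct and follows the same route as the paper: it traces the deletion of $a \to b$ back to line~\ref{alg_gen_dir_line}, extracts exactly the conditions $(u \indep b \: | \: Z)$, $(u \nindep a \: | \: Z)$, $(a \nindep b \: | \: Z)$, $a \notin Z$, and invokes Lemma~\ref{early_cp_thm} to rule out any causal path from $a$ to $b$. Your case analysis of which of the two simultaneously removed edges corresponds to $a \to b$ just makes explicit the bookkeeping the paper leaves implicit.
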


\begin{proof}
  Having the edge $a \leftarrow b$ in $G_{\mathrm{ep}}$ but not the edge $a
  \rightarrow b$ implies that the following holds:
  \begin{align*}
    &\exists Z \subseteq V \text{ with } |Z| \leq k \quad \exists u \in V \\
    &[(u \indep b \: | \: Z)_{\mathcal{I}_V^{k}}, \: (u \nindep a \: |
    \: Z)_{\mathcal{I}_V^{k}}, \: (a \nindep b \: | \:
    Z)_{\mathcal{I}_V^{k}}, \: a \not\in Z]
  \end{align*}
  This is because these are exactly the conditions required to remove the
  edge $a \rightarrow b$ in line~\ref{alg_gen_dir_line} of
  algorithm~\ref{alg_gen_pdag}. From Lemma~\ref{early_cp_thm} we
  know that these conditions mean that no $k$-faithful DAG contains a
  causal path from $a$ to $b$. Thus, $b \not\in \mathrm{De}_D(a)$ holds. 
\end{proof}

We obtain one of the main results of this section that the edge
maximal $k$-faithful DAGs have same skeleton as $G_{\mathrm{ep}}$. 
\begin{prop}
  \label{same_skel_lemma}
  The edge maximal DAGs $k$-faithful to $\mathcal{I}_V^k$ have the same
  skeleton as $G_{\mathrm{ep}}$. 
\end{prop}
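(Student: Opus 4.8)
The plan is to establish the two skeleton inclusions separately, only one of which needs real work. One direction is free: by Lemma~\ref{thm_gen_sg} every DAG in $\mathcal{F}(\mathcal{I}_V^k)$ — in particular every edge maximal one — is a subgraph of $G_{\mathrm{ep}}$, so its skeleton is contained in that of $G_{\mathrm{ep}}$. It therefore suffices to prove the converse: whenever $a$ and $b$ are adjacent in $G_{\mathrm{ep}}$, they are adjacent in every edge maximal $k$-faithful DAG. I would argue this by contradiction. Suppose an edge maximal $D \in \mathcal{F}(\mathcal{I}_V^k)$ leaves some pair $a,b$ nonadjacent although $\{a,b\}$ is an edge of $G_{\mathrm{ep}}$; the goal is then to produce a $k$-faithful DAG that strictly contains $D$, contradicting edge maximality. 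The construction is simply ``add the missing edge $\{a,b\}$ back to $D$ in a suitable orientation'', and the whole content is to check that a legal orientation exists and that Lemma~\ref{add_edge_thm} licenses the addition.

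I would then split into two cases according to how $\{a,b\}$ appears in $G_{\mathrm{ep}}$. First, if the edge is undirected there (so both $a\to b$ and $b\to a$ are present): since $D$ is acyclic and $a,b$ are nonadjacent in $D$, at most one of $a\in\mathrm{De}_D(b)$ and $b\in\mathrm{De}_D(a)$ can hold, so some orientation — say $a\to b$, relabelling $a$ and $b$ if necessary — satisfies $a\notin\mathrm{De}_D(b)$; and $a\to b\in G_{\mathrm{ep}}$ holds because the edge is undirected there. Lemma~\ref{add_edge_thm} then yields that $D'=D\cup\{a\to b\}$ is $k$-faithful, and $D'$ has strictly more edges than $D$, a contradiction. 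Second, if the edge is directed in $G_{\mathrm{ep}}$, say $a\to b\in G_{\mathrm{ep}}$ but $b\to a\notin G_{\mathrm{ep}}$ (the other orientation is symmetric): I would invoke Lemma~\ref{can_add_case} with its two nodes instantiated by $b$ and $a$, so that its hypothesis ``$a\leftarrow b\in G_{\mathrm{ep}}$ and $a\to b\notin G_{\mathrm{ep}}$'' becomes exactly ``$a\to b\in G_{\mathrm{ep}}$ and $b\to a\notin G_{\mathrm{ep}}$'' and its conclusion becomes $a\notin\mathrm{De}_D(b)$. Combined with $a\to b\in G_{\mathrm{ep}}$, Lemma~\ref{add_edge_thm} again makes $D\cup\{a\to b\}$ a $k$-faithful DAG properly containing $D$, a contradiction. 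These cases are exhaustive, so the skeleton of $G_{\mathrm{ep}}$ is contained in the skeleton of every edge maximal $k$-faithful DAG, and with the first inclusion the two skeletons coincide.

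The main obstacle is not in this proposition itself but has been pushed into Lemmas~\ref{add_edge_thm} and~\ref{can_add_case}; what remains here is the bookkeeping that, for an edge of $G_{\mathrm{ep}}$ missing from an edge maximal DAG, there is always an orientation that is simultaneously acyclicity-preserving when added to $D$ and present in $G_{\mathrm{ep}}$. Acyclicity alone does the job for edges that $G_{\mathrm{ep}}$ leaves undirected; for edges that $G_{\mathrm{ep}}$ has already oriented, the unique admissible orientation is forced to be acyclicity-preserving precisely by Lemma~\ref{can_add_case}, which is where the analysis of incompatible nodes (through Lemma~\ref{early_cp_thm}) is really being used. A final small point to record is that edge maximal $k$-faithful DAGs exist at all — this holds because $\mathcal{I}_V^k$ is DAG-representable and the number of edges is bounded — so the statement is not vacuous.
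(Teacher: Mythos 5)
Your proposal is correct and follows essentially the same route as the paper's proof: the easy inclusion via Lemma~\ref{thm_gen_sg}, and the converse by contradiction with edge maximality, splitting on whether the edge of $G_{\mathrm{ep}}$ is undirected (acyclicity gives an admissible orientation for Lemma~\ref{add_edge_thm}) or directed (Lemma~\ref{can_add_case} supplies the needed non-descendance). Your instantiation of Lemma~\ref{can_add_case} with the roles of the two nodes swapped is exactly how the paper handles its directed cases, and the added remark on the existence of edge maximal DAGs is a harmless extra.
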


\begin{proof}
  We show two directions.
  If $a$ and $b$ are adjacent in an edge maximal $k$-faithful DAG
  $D$, they are also adjacent in $G_{\mathrm{ep}}$. This follows from
  Lemma~\ref{thm_gen_sg} because every $k$-faithful DAG is contained in
  $G_{\mathrm{ep}}$.
  
  The second direction is more intricate. We show that if $a$ and $b$ are adjacent in $G_{\mathrm{ep}}$, they are also adjacent
  in any edge maximal $k$-faithful DAG $D$. Assume, for the sake of contradiction, that $a$ and $b$
  are not adjacent in an edge maximal $k$-faithful DAG $D$. We consider three cases:
  \begin{enumerate}
    \item The edges $a \rightarrow b$ and $a \leftarrow b$ are in
      $G_{\mathrm{ep}}$. From Lemma~\ref{add_edge_thm} we know that the edge $a
      \rightarrow b$ can be added to $D$ if $a \not\in
      \text{De}_D(b)$. If on the other hand $a \in \text{De}_D(b)$ holds, then the
      edge $a \leftarrow b$ can be added, because in this case $b \not\in
      \text{De}_D(a)$ has to hold (else there would be a cycle in $D$). Thus,
      $D$ is not edge maximal. A contradiction.
    \item \label{cases_same_skel_sec}
      The edge $a \leftarrow b$ is in $G_{\mathrm{ep}}$ and the edge $a
      \rightarrow b$ is not. From Lemma~\ref{can_add_case} it follows that $b
      \not\in \text{De}_D(a)$ holds for every $k$-faithful DAG. Thus, as
      shown in Lemma~\ref{add_edge_thm} the edge $a \leftarrow b$ can be added to $D$. This means
      that $D$ is not edge maximal. A contradiction.
    \item The edge $a \rightarrow b$ is in $G_{\mathrm{ep}}$ and the edge $a
      \leftarrow b$ is not. This case is symmetrical to case~\ref{cases_same_skel_sec}
      above. 
  \end{enumerate}
\end{proof}

From this, we can immediately conclude Proposition~\ref{skel_prop} given in the main paper.
This is due to the fact that the two points stated there are the exact
reasons two nodes are nonadjacent in $G_{\mathrm{ep}}$ and by Proposition~\ref{same_skel_lemma} in the edge maximal $k$-faithful DAGs. These all have
the same skeleton which is precisely the skeleton of the
representation. 

Of all DAGs $k$-faithful to $\mathcal{I}_V^k$ the edge maximal
DAGs possess another very unique property. We will prove that these DAGs form a Markov equivalence
class. This result has far reaching
consequences. In order to show this, we state that the edge maximal
$k$-faithful DAGs not only have the same skeleton, but also the same set of v-structures as $G_{\mathrm{ep}}$: 

\begin{prop}
  \label{em_vs_thm}
  For all $a, b, c \in V$ it is true: $a \rightarrow c \leftarrow b$ is a v-structure in an edge maximal DAG $D \in \mathcal{F}(\mathcal{I}_V^k)$ iff $a \rightarrow c \leftarrow b$ is a v-structure in $G_{\mathrm{ep}}$.
\end{prop}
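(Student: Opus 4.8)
The plan is to prove the two directions of the equivalence separately, leaning throughout on Proposition~\ref{same_skel_lemma}, which tells us that every edge maximal $D\in\mathcal{F}(\mathcal{I}_V^k)$ has the same skeleton as $G_{\mathrm{ep}}$, and on Lemma~\ref{thm_gen_sg}, by which such a $D$ is a subgraph of $G_{\mathrm{ep}}$. Recall that in $G_{\mathrm{ep}}$ an undirected edge $x-y$ is the pair $x\to y,\ y\to x$, so ``$x\to y$ is directed in $G_{\mathrm{ep}}$'' means $x\to y\in G_{\mathrm{ep}}$ and $y\to x\notin G_{\mathrm{ep}}$.

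For the direction ``$G_{\mathrm{ep}}$ has the v-structure $\Rightarrow$ $D$ has it'': suppose $a\to c\leftarrow b$ is a v-structure in $G_{\mathrm{ep}}$. Then $a,b$ are nonadjacent in $G_{\mathrm{ep}}$, hence (same skeleton) in $D$, while $a,c$ and $b,c$ are adjacent in $D$. Since $D\subseteq G_{\mathrm{ep}}$ and the reversed edge $c\to a$ is absent from $G_{\mathrm{ep}}$, the edge between $a$ and $c$ in $D$ can only be $a\to c$; likewise $b\to c$ is in $D$. Together with the nonadjacency of $a$ and $b$ this is precisely the v-structure $a\to c\leftarrow b$ in $D$.

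For the converse, let $D$ be edge maximal with the v-structure $a\to c\leftarrow b$. From $D\subseteq G_{\mathrm{ep}}$ we get $a\to c,\ b\to c\in G_{\mathrm{ep}}$, and by Proposition~\ref{same_skel_lemma} the nodes $a,b$ are nonadjacent in $G_{\mathrm{ep}}$; so it suffices to show that the reversed edges $c\to a$ and $c\to b$ were deleted in line~\ref{alg_gen_dir_line}. I would split on why $a,b$ are nonadjacent in $G_{\mathrm{ep}}$. \emph{Case 1:} some CI $(a\indep b\: | \: Z)_{\mathcal{I}_V^k}$ with $|Z|\le k$ exists. By $k$-faithfulness, $(a\indep b\: | \: Z)_D$, so $Z$ must block the collider path $a\to c\leftarrow b$, which in particular forces $c\notin Z$; and the single edges $a\to c$, $b\to c$ give $(a\nindep c\: | \: Z)_{\mathcal{I}_V^k}$ and $(c\nindep b\: | \: Z)_{\mathcal{I}_V^k}$. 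Hence the test in line~\ref{cond_gen_line} fires for this CI and the node $c$, and line~\ref{alg_gen_dir_line} removes both $c\to a$ and $c\to b$. \emph{Case 2:} no such CI exists. Then $a,b$ were made nonadjacent solely through line~\ref{alg_gen_dir_line}, so both $a\to b$ and $b\to a$ were deleted there; reading off the two triggering instances of line~\ref{cond_gen_line} shows precisely that $a,b$ are incompatible (Definition~\ref{confl_def}), with witnesses $(u,S)$ for condition~$1$ and $(v,T)$ for condition~$2$.

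The heart of the argument is Case~2. Using the witness $(v,T)$, so $(v\indep a\: | \: T)_{\mathcal{I}_V^k}$, $(v\nindep b\: | \: T)_{\mathcal{I}_V^k}$, $(a\nindep b\: | \: T)_{\mathcal{I}_V^k}$, $b\notin T$, $|T|\le k$, I would establish three facts in $D$ (which transfer to $\mathcal{I}_V^k$ by $k$-faithfulness since they have order $\le k$): (i) $c\notin T$ --- otherwise an active $v$--$b$ path given $T$ can be prolonged through the collider $b\to c\leftarrow a$, unblocked because $c\in T$, into an active $v$--$a$ path given $T$, contradicting $(v\indep a\: | \: T)_D$; (ii) $(v\nindep c\: | \: T)_D$ --- prolong (or, if $c$ already lies on it, truncate at $c$) an active $v$--$b$ path given $T$ by the edge $b\to c$, which keeps it active since $b\notin T$; (iii) $(c\nindep a\: | \: T)_D$ --- immediate from the edge $a\to c$. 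Then the test in line~\ref{cond_gen_line} fires for the CI $(v\indep a\: | \: T)$ and node $c$, and line~\ref{alg_gen_dir_line} deletes $c\to a$; the symmetric argument with the witness $(u,S)$ deletes $c\to b$. The step I expect to be most delicate is the path surgery in (i) and (ii): one has to handle carefully the sub-cases where $a$ or $c$ already occurs on the chosen active $v$--$b$ path (truncating so as to keep a valid, still-active path whose final edge points into the target), and to check that appending $b\to c$ never creates a blocked non-collider at $b$ --- which is exactly where $b\notin T$ is used.
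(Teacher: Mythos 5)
Your proposal is correct and follows essentially the same route as the paper: the easy direction via $D\subseteq G_{\mathrm{ep}}$ and the shared skeleton (Proposition~\ref{same_skel_lemma}), and the converse via the same two-case split on why $a,b$ are nonadjacent in $G_{\mathrm{ep}}$ (an explicit CI $(a\indep b\:|\:Z)$ versus incompatibility witnessed by the removals in line~\ref{alg_gen_dir_line}), with the same path-extension arguments showing $c\notin Z$ ($c\notin T$, $c\notin S$) and the dependencies needed to fire line~\ref{cond_gen_line} for node $c$. Your treatment of the path surgery in steps (i)–(ii) is just a more explicit version of the paper's argument that there is no collider at $a$ or $b$ but a collider at $c$ on the extended paths.
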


\begin{proof}
  We begin by showing that if a v-structure $a \rightarrow c \leftarrow b$ is in $D$, will also be in $G_{\mathrm{ep}}$.
  We note that, because $a$ and $c$ as well as
  $c$ and $b$ are adjacent in the $k$-faithful DAG $D$, the following holds:
  \[
    \forall Z \text{ with } |Z| \leq k \quad (a \nindep c \: | \: Z)_{\mathcal{I}_V^k} \text{
      and } (c \nindep b \: | \: Z)_{\mathcal{I}_V^k}
  \]
  The nodes $a$ and $b$ are not adjacent in $D$ and as $D$ is edge
  maximal, it follows from the fact that $G_{\mathrm{ep}}$ and $D$ have the same
  skeleton (Proposition~\ref{same_skel_lemma}) that they will not be adjacent in
  $G_{\mathrm{ep}}$ either. We will now show that the edges $a \leftarrow c$
  and $c \rightarrow b$ are not in $G_{\mathrm{ep}}$. Then we can conclude from the fact that
  every $k$-faithful DAG is contained in $G_{\mathrm{ep}}$
  (Lemma~\ref{thm_gen_sg}) that the v-structure $a \rightarrow c
  \leftarrow b$ is in $G_{\mathrm{ep}}$.
  
  If the nodes $a$ and $b$ are not adjacent in $G_{\mathrm{ep}}$ there are two
  possible reasons for this:
  \begin{enumerate}
  \item
    The edge $a - b$ was not added to $G_{\mathrm{ep}}$ in
    line~\ref{alg_gen_start_line} because of an independence $(a
    \indep b \: | \:
    Z)_{\mathcal{I}_V^k}$. Moreover, because we have $a \rightarrow c
    \leftarrow b$ in the $k$-faithful DAG $D$, it follows that $c
    \not\in Z$ has to hold. This means that the edges are
    directed $a \rightarrow c \leftarrow b$ in $G_{\mathrm{ep}}$ because the conditions in line~\ref{cond_gen_line} of
    Algorithm~\ref{alg_gen_pdag} are met
    \[
      (a \indep b \: | \: Z)_{\mathcal{I}_V^k}, \: (a \nindep c \: | \:
      Z)_{\mathcal{I}_V^k}, \: (c \nindep b \: | \:
      Z)_{\mathcal{I}_V^k}, \: c \not\in Z
    \]
    and therefore the edges $a \leftarrow c$ and $c \rightarrow b$ were removed
    from $G_{\mathrm{ep}}$.
  \item
    \label{sec_vs_cases}
    The edges $a \rightarrow b$ and $a \leftarrow b$ were removed in
    line~\ref{alg_gen_dir_line}. This case is displayed in
    Figure~\ref{same_vs_cases_tikz}.
    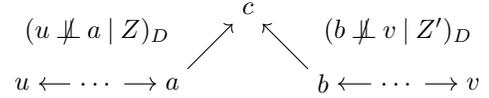
\begin{figure}
      \centering
      \begin{tikzpicture}
        \node (u) at (0,0) {$u$};
        \node (dots1) at (1,0) {$\dots$}
        edge [->] (u);
        \node (l1) at (1,0.7) {$(u \nindep a \: | \: Z)_{D}$};
        \node (a) at (2,0) {$a$}
        edge [<-] (dots1);
        \node (c) at (3,1) {$c$}
        edge [<-] (a);
        \node (b) at (4,0) {$b$}
        edge [->] (c);
        \node (dots2) at (5,0) {$\dots$}
        edge [->] (b);
        \node (l2) at (5, 0.7) {$(b \nindep v \: | \: Z')_{D}$};
        \node (v) at (6,0) {$v$}
        edge [<-] (dots2);
      \end{tikzpicture}
      \caption{Case~\ref{sec_vs_cases} of the proof of
        Proposition~\ref{em_vs_thm}. The v-structure $a \rightarrow c
        \leftarrow b$ is in $D$ and there exist $Z$ and $Z'$ such that
        $(u \nindep a \: | \: Z)_D$ and $(b \nindep v \: | \: Z')_D$ hold. We argue that $(u \nindep c \: | \:
        Z)_{D}$, $(v \nindep c \: | \: Z')_D$, $c \not\in Z$ and $c
        \not\in Z'$ hold as well. }
      \label{same_vs_cases_tikz}
    \end{figure}

    This means we have nodes $u$ and $v$ and sets $Z$ and $Z'$ such that
    \[
      (u \indep b \: | \: Z)_{\mathcal{I}_V^k}  \land (u \nindep a \: | \: Z)_{\mathcal{I}_V^k} \land (a
      \nindep b \: | \: Z)_{\mathcal{I}_V^k} \land a \not\in Z
    \]
    and
    \[
      (v \indep a \: | \: Z')_{\mathcal{I}_V^k} \land  (v \nindep b \: | \: Z')_{\mathcal{I}_V^k} \land (b \nindep a \: | \: Z')_{\mathcal{I}_V^k} \land b \not\in Z'
    \]
    hold. 
    Then $(u \nindep c \: | \:
    Z)_{\mathcal{I}_V^k}$ and $(v \nindep c \: | \:
    Z')_{\mathcal{I}_V^k}$ hold as well because of $a \not\in Z$ and $b
    \not\in Z'$ and the fact that with the edges $a \rightarrow c$ and
    $b \rightarrow c$ in the $k$-faithful DAG $D$ there is neither a collider at node $a$ nor
    at node $b$. On the other hand there is a collider at node
    $c$ (one the path from $u$ to $b$ as well as from $v$ to $a$)
    and therefore $c \not\in Z$ and $c \not\in Z'$ hold. 
    Then with $(c \nindep b \:
    | \: Z)_{\mathcal{I}_V^k}$ (with the edge $c \leftarrow b$ in $D$ there cannot
    be any independence) and $(u \indep b \: | \:
    Z)_{\mathcal{I}_V^k}$ the edge $c \rightarrow b$ is removed from
    $G$ because the conditions in line~\ref{cond_gen_line} of
      Algorithm~\ref{alg_gen_pdag} are met. The
    edge $a \leftarrow c$ is removed, too, as additional to $(v \nindep c \:
    | \: Z')_{\mathcal{I}_V^k}$ and $c \not\in Z'$ the statements $(a
    \nindep c \: | \: Z')_{\mathcal{I}_V}$ and $(v \indep a \: | \:
    Z')_{\mathcal{I}_V}$ hold. 
    Therefore we have the v-structure $a \rightarrow c \leftarrow
    b$ in $G$ as well.
  \end{enumerate}
  Now we show that a v-structure in $G_{\mathrm{ep}}$ will be present in $D$
  as well. It follows from Lemma~\ref{thm_gen_sg} that if we have
  $a \rightarrow c \leftarrow b$ in $G_{\mathrm{ep}}$, $D$ can neither contain an
  edge between $a$ and $b$ nor the edges $a \leftarrow c$ or $c
  \rightarrow b$. Moreover because of the edge maximality of $D$ and
  the fact that $D$ and $G_{\mathrm{ep}}$ have the same skeleton
  (Proposition~\ref{same_skel_lemma}) that the edges $a \rightarrow c \leftarrow b$
  will be present in $D$.
\end{proof}

We will now include the Meek rules in our argument in order to show the
following important result which shows a way to obtain $k$-faithful DAGs
from the graph $G$ which is the final result of Algorithm~\ref{alg_gen_pdag}. 

\begin{cor}
  \label{me_ce_cor}
  The set of edge maximal DAGs $k$-faithful to $\mathcal{I}_V^k$ is the
  Markov equivalence class formed by all consistent extensions of the
  CPDAG $G$.
\end{cor}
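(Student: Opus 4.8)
The plan is to derive the corollary from the two preceding propositions together with Meek's theorem. First I would fix, once and for all, an edge maximal DAG $D\in\mathcal{F}(\mathcal{I}_V^k)$; one exists because $\mathcal{I}_V^k$ is DAG-representable, so $\mathcal{F}(\mathcal{I}_V^k)$ is a nonempty finite set and any member of maximum edge count is edge maximal (by Lemma~\ref{thm_gen_sg} no $k$-faithful DAG has more edges than the skeleton of $G_{\mathrm{ep}}$). By Lemma~\ref{thm_gen_sg} and Proposition~\ref{same_skel_lemma}, a $k$-faithful DAG is in fact edge maximal if and only if its skeleton equals that of $G_{\mathrm{ep}}$. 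Propositions~\ref{same_skel_lemma} and~\ref{em_vs_thm} then say that every edge maximal $k$-faithful DAG has the skeleton and the v-structures of $G_{\mathrm{ep}}$, so by the Verma--Pearl characterization they are all Markov equivalent; write $[D]$ for their common equivalence class. Since $[D]\neq\emptyset$ and its members extend the skeleton of $G_{\mathrm{ep}}$, $G_{\mathrm{ep}}$ admits a DAG extension and hence has no directed cycle, i.e.\ it is a PDAG.

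The core step is to show that the set of consistent extensions of $G_{\mathrm{ep}}$ is exactly $[D]$. For the inclusion ``$\subseteq$'', a consistent extension $D'$ of $G_{\mathrm{ep}}$ shares skeleton and v-structures with $G_{\mathrm{ep}}$, hence with $D$, so $D'$ is Markov equivalent to $D$ and lies in $[D]$. For ``$\supseteq$'', I would take $D''\in[D]$ and check the defining properties of a consistent extension: $D''$ already has the skeleton and the v-structures of $G_{\mathrm{ep}}$; it is $k$-faithful because Markov equivalent DAGs induce the same d-separation relations, so $D''$ realizes exactly the same CI statements of order $\le k$ as the $k$-faithful $D$; and $D''$ orients every directed edge of $G_{\mathrm{ep}}$ the same way, since a directed edge $a\to b$ of $G_{\mathrm{ep}}$ arises in line~\ref{alg_gen_dir_line} by deleting the reverse encoding of $b\to a$, and the condition of that line together with Lemma~\ref{early_dir_thm} forbids $b\to a$ in every $k$-faithful DAG, while $a$ and $b$ remain adjacent in $D''$ by Proposition~\ref{same_skel_lemma}. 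Hence $D''$ is a consistent extension, and the claim follows; in particular the consistent extensions of $G_{\mathrm{ep}}$ form the Markov equivalence class $[D]$.

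Finally I would invoke Meek's theorem in the generalized form recalled in the Preliminaries: starting from a PDAG whose consistent extensions form a Markov equivalence class $[D]$ and exhaustively applying the three Meek rules yields the CPDAG $D^*$ representing $[D]$. Applied to $G_{\mathrm{ep}}$, this shows that the output $G$ of Algorithm~\ref{alg_gen_pdag} equals $D^*$; thus $G$ is a CPDAG and its consistent extensions are precisely $[D]$, which by the previous step is the set of edge maximal $k$-faithful DAGs. That is the assertion of the corollary.

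The main obstacle I anticipate is the ``$\supseteq$'' inclusion of the core step --- specifically, arguing that a DAG merely Markov equivalent to an edge maximal $k$-faithful DAG is itself a consistent extension of $G_{\mathrm{ep}}$. This requires two things: that $k$-faithfulness is preserved along a Markov equivalence class (which rests on the fact that Markov equivalent DAGs share all d-separation statements, hence all those of order $\le k$), and that every directed edge already present in $G_{\mathrm{ep}}$ is forced in the same direction in all these DAGs. Once this equality of the two sets is in place, plugging it into the generalized Meek theorem is routine.
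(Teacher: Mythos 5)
Your proposal is correct and follows essentially the same route as the paper: it combines Proposition~\ref{same_skel_lemma} and Proposition~\ref{em_vs_thm} with the Verma--Pearl characterization to identify the edge maximal $k$-faithful DAGs with the consistent extensions of $G_{\mathrm{ep}}$, and then applies the generalized Meek result from the Preliminaries to transfer this to the output $G$. You merely spell out both inclusions (in particular that Markov equivalence preserves $k$-faithfulness and that directed edges of $G_{\mathrm{ep}}$ are forced via Lemma~\ref{early_dir_thm}) more explicitly than the paper's terser argument.
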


\begin{proof}
  Proposition~\ref{same_skel_lemma} states that the edge maximal DAGs have the same
  skeleton and Proposition~\ref{em_vs_thm} states that they have the same set of
  v-structures as $G_{\mathrm{ep}}$. Thus, these DAGs form a Markov
  equivalence class which is exactly the set of all consistent
  extensions of $G_{\mathrm{ep}}$. It immediately follows that the
  graph $G$ which results from applying the Meek rules to
  $G_{\mathrm{ep}}$ is a CPDAG. Moreover, by correctness of the Meek rules (these
  rules neither create a new v-structure nor a cycle~\cite{Meek})
  $G$ has the same set of consistent extensions as $G_{\mathrm{ep}}$. 
\end{proof}

Finally, it becomes clear why we can apply the Meek
rules to $G_{\mathrm{ep}}$ in Algorithm~\ref{alg_gen_pdag}. As shown by Meek~\shortcite{Meek} these
rules maximally extend a PDAG whose consistent extensions form a
Markov equivalence class into a CPDAG and the edge maximal DAGs are the
Markov equivalence class formed by the consistent extensions of
$G_{\mathrm{ep}}$. If an edge
$a \rightarrow b$ gets directed by one of the Meek rules, this means that it
is in every consistent extension (while the edge $a
\leftarrow b$ is in no consistent extension). That the
application of these three rules is correct for all $k$-faithful DAGs — not only the edge maximal
ones — will be argued in the following proof of the main result of the paper, the correctness of Algorithm~\ref{alg_gen_pdag} (Theorem~\ref{thm_alg_correct}).

\begin{proof}[Proof of Theorem~\ref{thm_alg_correct}]
  A representation of the set $\mathcal{F}(\mathcal{I}_V^k)$ is a
  minimal graph that contains every graph in
  $\mathcal{F}(\mathcal{I}_V^k)$.
  We begin by proving that $G$ indeed contains every DAG $k$-faithful to
  $\mathcal{I}_V^k$. We do this by showing that every $k$-faithful DAG is
  a subgraph of a consistent extension of $G$.
  Consider the $k$-faithful DAG $D \in \mathcal{F}(\mathcal{I}_V^k)$. The DAG $D$ has to be a subgraph of
  some edge maximal $k$-faithful DAG.
  We know from Corollary~\ref{me_ce_cor} that
  every edge maximal DAG is a consistent extension of $G$ . Thus, $D$ is a subgraph of a
  consistent extension of $G$.

  We show now that $G$ is indeed minimal. This holds as deleting or directing an edge in $G$ would
  immediately violate the condition that $G$ contains every $k$-faithful
  DAG. This follows as we know that $G$ is a CPDAG
  representing a Markov equivalence class of $k$-faithful DAGs, namely the edge maximal ones. 
\end{proof}

From the above theorems we can deduce an interesting fact. Lemma~\ref{add_edge_thm} holds
for $G_{\mathrm{ep}}$ and not only for $G$ and the only constraint we
impose on adding edges is that they do not produce a cycle. Thus, if
we have an edge $a - b$ in $G_{\mathrm{ep}}$, an edge $a
\rightarrow b$ in $G$ (meaning the edge $a\rightarrow b$ has been
directed by one of the Meek rules) and a DAG $D \in \mathcal{F}(\mathcal{I}_V^k)$,
it follows that either $a \rightarrow b$ is in $D$ or there is a
causal path from $a$ to $b$.

Finally, we are able to derive Proposition~\ref{prop:represantation}.
This follows immediately from Corollary~\ref{me_ce_cor} and Theorem~\ref{thm_alg_correct}. It shows
that the notion of a representation is a
generalization of the notion of a CPDAG. More precisely, for every $k$
there is a subclass of CPDAGs (let us call these $k$-CPDAGs) which are the
representation of a set of DAGs $k$-faithful to a set
$\mathcal{I}_V^k$ for a fixed $|V| = n$. In particular, the set of $l$-CPDAGs is a subset of
the set of $l+1$-CPDAGs and the set of $n-2$-CPDAGs is the set of all CPDAGs. Further investigations of these structures
might be interesting, for example, for the open question of counting
the number of Markov equivalence classes (which is equal to the number
of CPDAGs) for a given number $n$ of
nodes~\cite{Counting17}. Notably, Textor et al.~\shortcite{Idel15} analyzed the number
of $0$-CPDAGs (they use a different representation termed SMIG).

\section{Additional proofs for the case of $k=0$}
All results in the previous section hold for the setting of marginal
independencies as well. But for this special case, there
already existed an algorithm \cite{Idel15,PearlWermuth94} whose formal
proof of correctness was, to our knowledge, never published. We recall this algorithm as
Algorithm~\ref{alg_sg}. 

\setcounter{algocf}{1}

\begin{algorithm}
  \caption{Algorithm from Textor et al.\ to find faithful DAGs for sets of marginal independencies~\cite{Idel15}.}
  \label{alg_sg}
  \DontPrintSemicolon
  \SetKwInOut{Input}{input}\SetKwInOut{Output}{output}
  \Input{Vertex set $V$, a DAG-representable set $\mathcal{I}_V^{0}$ of marginal independence statements}
  \Output{CPDAG $H$ which contains every faithful DAG and whose
    extensions are faithful}
  Form the 
  graph $H$ on the vertex set $V$ and empty edge set and the graph
  $\mathcal{U}$ which has an 
  edge $a-b$ if $(a \nindep
  b)_{\mathcal{I}_V^{0}}$. \; \label{init_line}
  \ForEach{edge $u-v$ in $\mathcal{U}$}{
    Add the edge $u \rightarrow v$ to $H$ if $\mathrm{Bd}_{\mathcal{U}}(u) \subset
    \mathrm{Bd}_{\mathcal{U}}(v)$. \; \label{edge_utov_line}
    Add the edge $u \leftarrow v$ to $H$ if $\mathrm{Bd}_{\mathcal{U}}(u) \supset
    \mathrm{Bd}_{\mathcal{U}}(v)$. \; \label{edge_vtou_line}
    Add the edge $u - v$ to $H$ if $\mathrm{Bd}_{\mathcal{U}}(u) = \mathrm{Bd}_{\mathcal{U}}(v)$. \; \label{end_adding_line}
  }
  {\bf Return} $H$
\end{algorithm}
We note that the boundary $\text{Bd}(i)$ is defined as the
neighborhood of node $i$ including $i$: $\text{Bd}(i) = N(i) \cup
\{i\}$ with the neighborhood $N(i)$ being the set of all nodes
adjacent to $i$. 

By showing that
Algorithm~\ref{alg_sg} produces the same
result as Algorithm~\ref{alg_gen_pdag} with parameter $k=0$, we formally
prove the correctness of the former algorithm (and prove all
additional properties, e.g.\ that the result is a CPDAG) and thereby
give the proof which has been missing from the literature. Note that this section does not
produce new results, but gives only the missing correctness proof. We include it for the sake of completeness.

\begin{thm}
  \label{alg_eq}
  For every set $\mathcal{I}_V^{0}$ of marginal independencies
  Algorithm~\ref{alg_gen_pdag} produces the same PDAG as
  Algorithm~\ref{alg_sg}.
\end{thm}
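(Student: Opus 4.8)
The plan is to prove the equality stage by stage: first show that the graph $G_{\mathrm{ep}}$ produced by Algorithm~\ref{alg_gen_pdag} just before the Meek phase (i.e.\ at line~\ref{done_line}) coincides with the graph $H$ returned by Algorithm~\ref{alg_sg}, and then show that the Meek rules in lines~\ref{meek}--\ref{end_meek} leave $G_{\mathrm{ep}}$ unchanged, so that the final output $G$ of Algorithm~\ref{alg_gen_pdag} equals $H$. Throughout, observe that for $k=0$ the only admissible conditioning set is $Z=\emptyset$, so the graph built in line~\ref{alg_gen_start_line} is precisely the marginal independence graph $\mathcal{U}$ (an edge $a-b$ iff $(a\nindep b)_{\mathcal{I}_V^0}$) that Algorithm~\ref{alg_sg} uses in line~\ref{init_line}; we also use symmetry of the independence relation freely.

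The core step is to reformulate the second stage (lines~\ref{start_for_allk}--\ref{done_line}) in terms of $\mathcal{U}$-boundaries. I would prove: for nodes $x,y$ adjacent in $\mathcal{U}$, the directed edge $x\to y$ survives in $G_{\mathrm{ep}}$ iff $\mathrm{Bd}_{\mathcal{U}}(x)\subseteq\mathrm{Bd}_{\mathcal{U}}(y)$. Indeed, each iteration of the loop, triggered by a CI $(a\indep b)\in\mathcal{I}_V^0$ and a node $c$ satisfying the test in line~\ref{cond_gen_line}, deletes exactly the two directed edges $c\to a$ and $c\to b$. So $x\to y$ gets deleted in line~\ref{alg_gen_dir_line} iff either it plays the role $c\to b$ (so $c=x$, $b=y$, side conditions $(a\nindep x)$ and $(x\nindep y)$) or the role $c\to a$ (so $c=x$, $a=y$, side conditions $(x\nindep b)$ and $(y\nindep x)$). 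Since $x\nindep y$ is automatic (the $\mathcal{U}$-edge is present) and therefore $x\in\mathrm{Bd}_{\mathcal{U}}(y)$ and $y\in\mathrm{Bd}_{\mathcal{U}}(x)$, in both roles the requirement reduces to the existence of a node in $N_{\mathcal{U}}(x)\setminus\mathrm{Bd}_{\mathcal{U}}(y)$, i.e.\ to $\mathrm{Bd}_{\mathcal{U}}(x)\not\subseteq\mathrm{Bd}_{\mathcal{U}}(y)$ (the fact that such a node can serve as the partner in a valid CI $(\,\cdot\indep\cdot\,)$ is exactly where one uses $x\in\mathrm{Bd}_{\mathcal{U}}(y)$ to rule out the witness being $x$ or $y$). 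Non-adjacent pairs of $\mathcal{U}$ stay non-adjacent since stage two only removes edges. Hence $x\to y\in G_{\mathrm{ep}}$ iff $x,y$ are $\mathcal{U}$-adjacent and $\mathrm{Bd}_{\mathcal{U}}(x)\subseteq\mathrm{Bd}_{\mathcal{U}}(y)$; splitting into the cases of strict containment one way, equality, or incomparability reproduces precisely the edge (or absence of edge) that lines~\ref{edge_utov_line}--\ref{end_adding_line} of Algorithm~\ref{alg_sg} produce. Therefore $G_{\mathrm{ep}}=H$.

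It remains to check that no Meek rule ever fires on $G_{\mathrm{ep}}=H$, using the boundary description: an undirected edge of $H$ joins nodes with equal $\mathcal{U}$-boundaries, and a directed edge $u\to v$ joins $\mathcal{U}$-adjacent nodes with $\mathrm{Bd}_{\mathcal{U}}(u)\subsetneq\mathrm{Bd}_{\mathcal{U}}(v)$. For Rule~1 (premise $a\to b$, $b-c$, $a$ and $c$ nonadjacent), transitivity gives $\mathrm{Bd}_{\mathcal{U}}(a)\subsetneq\mathrm{Bd}_{\mathcal{U}}(b)=\mathrm{Bd}_{\mathcal{U}}(c)$, and since $a\in\mathrm{Bd}_{\mathcal{U}}(b)=\mathrm{Bd}_{\mathcal{U}}(c)$ the nodes $a,c$ are $\mathcal{U}$-adjacent, hence adjacent in $H$, contradicting the premise. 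For Rule~2 (premise $a\to b\to c$, $a-c$), transitivity gives $\mathrm{Bd}_{\mathcal{U}}(a)\subsetneq\mathrm{Bd}_{\mathcal{U}}(c)$ with $a\in\mathrm{Bd}_{\mathcal{U}}(c)$, so the $a$--$c$ edge of $H$ is already oriented $a\to c$, not undirected. For Rule~3 (premise $a-c\to b$, $a-d\to b$, $a-b$), from $a-c$ and $c\to b$ we get $\mathrm{Bd}_{\mathcal{U}}(a)=\mathrm{Bd}_{\mathcal{U}}(c)\subsetneq\mathrm{Bd}_{\mathcal{U}}(b)$, so again the $a$--$b$ edge of $H$ is already $a\to b$. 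In each case the rule's precondition is unsatisfiable, so $G=G_{\mathrm{ep}}=H$.

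I expect the main obstacle to be the bookkeeping in the second paragraph: the stage-two loop deletes a \emph{pair} of directed edges per admissible triple, and a fixed edge $x\to y$ can be deleted under two distinct roles, so one must verify that both roles yield the same boundary criterion and, in particular, be careful that the witnessing node cannot coincide with $x$ or $y$ — which holds exactly because the $\mathcal{U}$-edge $x-y$ is present. Once this characterization of $G_{\mathrm{ep}}$ is established, the skeleton/orientation comparison with Algorithm~\ref{alg_sg} and the verification that the Meek rules are vacuous are both routine.
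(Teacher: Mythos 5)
Your proposal is correct and follows essentially the same route as the paper's own proof: you identify the pre-Meek graph $G_{\mathrm{ep}}$ with $H$ by showing that a directed edge $x \to y$ between $\mathcal{U}$-adjacent nodes survives stage two iff $\mathrm{Bd}_{\mathcal{U}}(x) \subseteq \mathrm{Bd}_{\mathcal{U}}(y)$ (with the strict/equal/incomparable cases matching lines~\ref{edge_utov_line}--\ref{end_adding_line} of Algorithm~\ref{alg_sg}), and then check that no Meek rule can fire. The only minor difference is that you handle Meek rules 1 and 3 uniformly via boundary transitivity and the adjacency criterion, whereas the paper appeals for these two rules to the chain analysis behind Proposition~\ref{em_vs_thm}; both arguments are sound.
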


\begin{proof}
  In this proof we denote the PDAG produced by Algorithm~\ref{alg_sg}
  as $H$ and the one produced by Algorithm~\ref{alg_gen_pdag} as
  $G$. We have to show that $G = H$ holds. But before, we show that
  $G_{\mathrm{ep}} = H$ holds.

  We will begin our proof by analyzing
  under which conditions a directed edge $u \leftarrow v$ is removed
  from $G_{\mathrm{ep}}$ in the for-loop from
  line~\ref{start_for_allk} to~\ref{done_line}. We describe this
  through properties of the graph which was formed in
  line~\ref{alg_gen_start_line}. We call this graph $\mathcal{U}$ as
  it is the same as the graph formed in line~\ref{init_line} of
  Algorithm~\ref{alg_sg}.  A directed edge $u \leftarrow v$ is removed if we have a node $w$ which is a
  neighbor of $v$, but not a neighbor of $u$ in $\mathcal{U}$. Because then we have
  $u - v - w$ with $u$ and $w$ nonadjacent and in particular the edge
  $u \leftarrow v$ is removed.

  Formally, the edge $u
  \leftarrow v$ (in case we have $(u \nindep v)\in \mathcal{I}_V^0$)
  is removed from $G_{\mathrm{ep}}$ if the following condition holds:
  \begin{align}
    (\exists w) (w \not\in \text{Bd}_{\mathcal{U}}(u) \land w \in \text{Bd}_{\mathcal{U}}(v))
    \label{fcond}
  \end{align}
  We will also consider under which condition an edge $u \rightarrow
  v$ is not removed from $G_{\mathrm{ep}}$. This happens if condition~\ref{fcond} does not
  hold and by negation we get:
  \begin{align}
    &\relphantom{\Longleftrightarrow}{} \neg ((\exists w) (w \not\in \text{Bd}_{\mathcal{U}}(u) \land w
      \in \text{Bd}_{\mathcal{U}}(v))) \\
    &\Longleftrightarrow (\forall w) \neg (w \not\in \text{Bd}_{\mathcal{U}}(u)
                             \land w \in \text{Bd}_{\mathcal{U}}(v)) \\
    &\Longleftrightarrow (\forall w) (w \in \text{Bd}_{\mathcal{U}}(v) \implies
                             w \in \text{Bd}_{\mathcal{U}}(u))
                             \label{scond}
  \end{align}
  Now we can show that $H$ and $G_{\mathrm{ep}}$ are
  identical. We note that both graphs have the same vertex set. Thus,
  it is left to prove that all edges are identical. To do this we
  consider all possible edge states (undirected, directed or missing)
  between two node $u$ and $v$ in the following case study.
  \begin{enumerate}
  \item There is no edge between $u$ and $v$ in $G_{\mathrm{ep}}$ and $(u \indep
    v)_{\mathcal{I}_V^{0}}$ holds. Then, in the first line of both algorithms the edge was not
    added to $\mathcal{U}$ and thus is neither part of
    $H$ nor $G_{\mathrm{ep}}$. 
  \item The directed edge $u \rightarrow v$ is in $G_{\mathrm{ep}}$. From above
    considerations it follows that the conditions
    \[
      (\exists w) (w \not\in \text{Bd}_{\mathcal{U}}(u) \land w \in \text{Bd}_{\mathcal{U}}(v))
    \]
    and
    \[
      (\forall w) (w \in \text{Bd}_{\mathcal{U}}(u) \implies w \in \text{Bd}_{\mathcal{U}}(v))
    \]
    hold. This is because we require that the edge $u \leftarrow v$ was
    removed from $G_{\mathrm{ep}}$ while $u \rightarrow v$ was not. Only then we
    have the directed edge $u \rightarrow v$ in $G_{\mathrm{ep}}$.
    Moreover, it is clear that
    the edge between $u$ and $v$ is present in $\mathcal{U}$ in both algorithms.
    We can see that the two conditions above are equivalent to
    $\text{Bd}_{\mathcal{U}}(u) \subset \text{Bd}_{\mathcal{U}}(v)$ which is exactly the
    condition in line~\ref{edge_utov_line} in Algorithm~\ref{alg_sg}
    for adding an edge $u \rightarrow v$ to $H$.
  \item The directed edge $u \leftarrow v$ is in $G_{\mathrm{ep}}$. This case can be dealt
    with in the same way as $u \rightarrow v$ in case 2.
  \item \label{incomp_case}
    There is no edge between $u$ and $v$ in $G_{\mathrm{ep}}$ and $(u \nindep
    v)_{\mathcal{I}_V^{0}}$. In Algorithm~\ref{alg_gen_pdag} this case occurs if
    the edges $u \rightarrow v$ and $u \leftarrow v$ are removed from
    $G_{\mathrm{ep}}$ in different iterations in line~\ref{alg_gen_dir_line}.
    Thus, the following two conditions hold:
    \[
      (\exists w) (w \not\in \text{Bd}_{\mathcal{U}}(u) \land w \in \text{Bd}_{\mathcal{U}}(v))
    \]
    and
    \[
      (\exists x) (x \in \text{Bd}_{\mathcal{U}}(u) \land x \not\in \text{Bd}_{\mathcal{U}}(v)).
    \]
    This means that none of the three cases from
    line~\ref{edge_utov_line} to~\ref{end_adding_line} in
    Algorithm~\ref{alg_sg} apply as $\text{Bd}_{\mathcal{U}}(u)$ and
    $\text{Bd}_{\mathcal{U}}(v)$ are not equal nor is one a subset of the
    other. This means that no edge is
    added to $H$. We note here that the opposite direction holds as
    well, meaning that if none of the three cases apply it
    follows that the two statements above concerning the existence of $w$ and $x$
    are valid. Moreover, the nodes $u$ and $v$ are incompatible as
    \[
      (w \indep u)_{\mathcal{I}_V^0}, \: (w \nindep v)_{\mathcal{I}_V^0}, \: (v \nindep u)_{\mathcal{I}_V^0}, \: (x \indep
      v)_{\mathcal{I}_V^0} \text{ and } (x \nindep u)_{\mathcal{I}_V^0}
    \]
    hold. 
  \item There is an edge $u - v$ in $G_{\mathrm{ep}}$.
    This can only occur in Algorithm~\ref{alg_gen_pdag} if neither $u
    \rightarrow v$ nor $u \leftarrow v$ get removed. But this means as
    reasoned above that
    \[
      (\forall x) (x \in \text{Bd}_{\mathcal{U}}(u) \implies x \in \text{Bd}_{\mathcal{U}}(v))
    \]
    and
    \[
      (\forall x) (x \in \text{Bd}_{\mathcal{U}}(v) \implies x \in \text{Bd}_{\mathcal{U}}(u))
    \]
    hold. It immediately follows that $\text{Bd}_{\mathcal{U}}(u) = \text{Bd}_{\mathcal{U}}(v)$
    and therefore the edge $u - v$ is added to $H$ in
    line~\ref{end_adding_line} of Algorithm~\ref{alg_sg}.
  \end{enumerate}

  We can conclude that $G_{\mathrm{ep}} = H$ holds. We will show now
  that the Meek rules which are applied to $G_{\mathrm{ep}}$ will not
  direct further edges.
  \begin{enumerate}
    \item The first Meek rule states that an edge $b - c$ is oriented
      as $b \rightarrow c$ if we have $a \rightarrow b - c$ with $a$
      and $c$ nonadjacent. An analysis similar to the one made in the
      proof of Proposition~\ref{em_vs_thm} yields that whenever two nodes
      $a$ and $c$ are nonadjacent in $G_{\mathrm{ep}}$, every possible chain $a - x - c$ is
      directed as either $a \rightarrow x \leftarrow c$ or $a
      \leftarrow x \rightarrow c$.  Therefore the structure $a
      \rightarrow b - c$ can never appear. Note that this holds only
      for sets of \emph{marginal} independencies $\mathcal{I}_V^0$.
    \item The second Meek rule states that an edge $a - c$ is oriented
      as $a \rightarrow c$ if we have $a \rightarrow b \rightarrow
      c$. In $H$ (and thereby also in $G_{\mathrm{ep}}$ as these graphs
        are identical) we have an edge $a \rightarrow b$ iff
        $\text{Bd}_{\mathcal{U}}(a) \subset
        \text{Bd}_{\mathcal{U}}(b)$ and $b \rightarrow c$ iff $\text{Bd}_{\mathcal{U}}(b) \subset
        \text{Bd}_{\mathcal{U}}(c)$. It follows that
        $\text{Bd}_{\mathcal{U}}(a) \subset
        \text{Bd}_{\mathcal{U}}(c)$ holds as well meaning the edge
        between $a$ and $c$ is already oriented $a \rightarrow c$.
      \item The third Meek rule states that an edge $a - b$ is
        oriented into $a \rightarrow b$ whenever there are two chains
        $a - c \rightarrow b$ and $a - d \rightarrow b$ such that $c$
        and $d$ are nonadjacent. We know from the analysis of the
        first Meek rule that a structure $c - a - d$ with $c$ and $d$
        nonadjacent will never occur in $G_{\mathrm{ep}}$. Thus, the
        third Meek rule will never be applied as well.
      \end{enumerate}

      It follows that $G = G_{\mathrm{ep}} = H$.
\end{proof}

\end{document}